\documentclass{article}
\usepackage[left=1 in ,right=1 in,top=1 in,bottom=1 in]{geometry}
\usepackage[pdftex]{graphicx}
\usepackage{amsfonts}
\usepackage{latexsym}
\usepackage{tabularx}
\usepackage{fancyhdr}
\usepackage{verbatim}
\usepackage{multirow}
\usepackage{framed}
\usepackage{algorithmic}
\usepackage{algorithm}
\usepackage{amsmath,amsthm,amssymb}
\usepackage[pdftex,colorlinks,citecolor=blue,linkcolor=red]{hyperref}
\usepackage{titling} 
\usepackage{graphicx, subfigure}
\usepackage{caption}
\usepackage{enumitem}
\usepackage{lscape}
\setlength{\droptitle}{-5em}
\setlength{\jot}{7pt}  

\usepackage[]{authblk}

\usepackage{xcolor}
\definecolor{darkblue}{RGB}{42,34,146}

\allowdisplaybreaks[4] 


\usepackage{natbib}
\hypersetup{pdfauthor={Y.W. Park and D. Klabjan}} \hypersetup{pdftitle= Subset Selection for Multiple Linear Regression via Optimization} \hypersetup{pdfsubject=Subset Selection for Multiple Linear Regression via Optimization}

\theoremstyle{definition}

\newtheorem{lemma}{Lemma}
\newtheorem{assumption}{Assumption}

\newtheorem{proposition}{Proposition}
\theoremstyle{definition} 
\theoremstyle{definition}

\makeatletter
\newcommand{\rmnum}[1]{\romannumeral #1}
\newcommand{\Rmnum}[1]{\expandafter\@slowromancap\romannumeral #1@}
\makeatother

\title{Subset Selection for Multiple Linear Regression via Optimization } 

\author[1]{Young Woong Park \thanks{ywpark@iastate.edu}}
\author[2]{Diego Klabjan \thanks{d-klabjan@northwestern.edu}}
\affil[1]{Ivy College of Business, Iowa State University, Ames, IA, USA}
\affil[2]{Department of Industrial Engineering and Management Sciences, Northwestern University, Evanston, IL, USA}

\begin{document}

\maketitle

\begin{abstract}
Subset selection in multiple linear regression aims to choose a subset of candidate explanatory variables that tradeoff fitting error (explanatory power) and model complexity (number of variables selected). We build mathematical programming models for regression subset selection based on mean square and absolute errors, and minimal-redundancy-maximal-relevance criteria. The proposed models are tested using a linear-program-based branch-and-bound algorithm with tailored valid inequalities and big M values and are compared against the algorithms in the literature. For high dimensional cases, an iterative heuristic algorithm is proposed based on the mathematical programming models and a core set concept, and a randomized version of the algorithm is derived to guarantee convergence to the global optimum. From the computational experiments, we find that our models quickly find a quality solution while the rest of the time is spent to prove optimality; the iterative algorithms find solutions in a relatively short time and are competitive compared to state-of-the-art algorithms; using ad-hoc big M values is not recommended.
\end{abstract}

\smallskip
\noindent \textbf{Keywords.} multiple linear regression, subset selection, high dimensional data, mathematical programming, linearization




\section{Introduction}
\label{REG_section_introduction}

The multiple linear regression problem is a statistical methodology for predicting values of response (dependent) variables from a set of multiple explanatory (independent) variables by investigating the linear relationships among the variables. Given a fixed set of explanatory variables, the coefficients of the multiple linear regression model are estimated by minimizing the fitting error, where the standard setting uses the sum of squared errors ($SSE$) for measuring the fitting error. The subset selection problem, also referred to as variable selection or model selection, for multiple linear regression is to choose a subset of explanatory variables to build an efficient linear regression model. In detail, given a dataset with $n$ observations and $m$ explanatory variables, a subset of explanatory variables are used to build a regression model, where the goal is to decrease $p$, the number of explanatory variables in the model, as much as possible while maintaining error loss relatively small.

For selecting a subset of explanatory variables, an objective function is defined to measure the efficiency of the model \cite{Miller:1984}; the objective function is typically defined based on balancing the number of explanatory variables used and the fitting error. Criteria such as the mean square error ($MSE$), mean absolute error ($MAE$), adjusted $r^2$, Mallow's $C_p$, etc, are in this category for multiple linear regression and there are several works studying the $L_0$-norm-based feature selection in non-regression context \cite{Bradley-etal:98,Rinaldi:10, Western-etal:03}. There also exist objective functions balancing the magnitudes of the regression coefficients and the fitting errors; instead of the number of explanatory variables (non-zero coefficients), the regression coefficients are directly penalized. Among many variants in this category, ridge \cite{Hoerl-Kennard:70} and least absolute shrinkage and selection operator (LASSO) \cite{Tibshirani:1996} regressions are the most popular models in multiple linear regression and there also exist recent papers studying the $L_1$-norm-based feature selection in a non-regression context \cite{Fung:04, Hwang-etal}. There also exist objective functions to select variables based on mutual information gain instead of minimizing fitting error. One of the popular criteria in this category is minimum-redundancy-maximum-relevance (mRMR) proposed by \citet{DingPeng:05} and \citet{peng-etal:05}. Among various objective functions for selecting a subset, we focus on $MAE$, $MSE$, and mRMR in this paper.

Given a subset of explanatory variables, if $SSE$ is minimized, an explicit formula is available for obtaining the optimal coefficients. On the other hand, when minimizing the sum of absolute errors ($SAE$), there is no explicit formula available. For this case, a linear program (LP) \cite{Charnes-etal:55,Wagner:1959} or iterative reweighted least squares algorithm \cite{Schlossmacher:1973} can be used to build the regression model.

When subset selection is required, algorithms for optimizing $MSE$ have already been extensively studied. Among them, stepwise-type algorithms are frequently used in practice due to their computational simplicity and efficiency. An exact algorithm is to enumerate all possible regression models, but the computational cost is excessive. To overcome this computational difficulty, \citet{Furnival-Wilson:74} proposed a branch-and-bound algorithm, called \textit{leaps-and-bound}, to find the best subset for $MSE$ without enumerating all possible subsets. \citet{Miyashiro15} proposed a mathematical programming model to maximize adjust $r^2$, which is equivalent to minimizing $MSE$. Given a fixed $p$, \citet{Bertsimas-etal:15} and \citet{Bertsimas16} minimize $SSE$ using mixed integer program (MIP)-based algorithms. For subset selection of least absolute deviation regression, \citet{Konno-Yamamoto:09} presented an MIP to optimize $SAE$ given fixed $p$. \citet{Bertsimas-etal:15} proposed an MIP based algorithm for optimizing $SSE$ and $SAE$ given fixed $p$. A discrete first order method is proposed and used to warmstart the MIP formulation, which is formulated based on specially ordered sets \cite{Bertsimas-Weismantel:2005}, to avoid the use of big M. \citet{Bertsimas16} proposed an MIP based algorithm for minimizing penalized SSE given fixed $p$. For a detailed review of algorithms for subset selection, the reader is referred to \citet{Miller:2002}. 

Selecting a subset of explanatory variables with non-zero regression coefficients can be compared to general optimization problems with cardinality constraints. While several early works (e.g., \citet{Bienstock:96, Farias-Nemhauser:03}) study general optimization problems with cardinality constraints from the optimization theoretical point of view, recent work directly focuses on mathematical programming models for regression subset selection. The MIP models in \citet{Bertsimas-Shioda:09}, \citet{Bertsimas-etal:15}, and \citet{Konno-Yamamoto:09} assume fixed $p$ and the cardinality constraint is explicit in the models. Their models are distinguished by the objective functions and how they formulate subset selection; \citet{Konno-Yamamoto:09} optimized $SAE$ by introducing binary variables, \citet{Bertsimas-etal:15} optimized $SSE$ or $SAE$, \citet{Bertsimas-Shioda:09} optimized $SSE$ without introducing binary variables. In contrast to the models in \citet{Bertsimas-Shioda:09}, \citet{Bertsimas-etal:15}, and \citet{Konno-Yamamoto:09}, we optimize $MSE$ and $MAE$ without fixing $p$. \citet{Miyashiro15} proposed a mathematical programming model to maximize adjust $r^2$, which is equivalent to minimize $MSE$. To the best of authors' knowledge, the model in \citet{Miyashiro15} is the only mathematical programming model that directly maximizes adjust $r^2$ (equivalent to minimizing $MSE$), but there is no mathematical programming model directly optimizing mRMR or $MAE$.

Basic multiple linear regression analyses require a data matrix with $n > m + 1$; i.e., the number of observations must be greater than the number of explanatory variables plus one. Otherwise, the $n-1$ linearly independent explanatory variables and one intercept variable yield a regression model with zero fitting error given a full rank data matrix. However, in practice, it is not that uncommon to have a data set with $m \geq n-1$. For example, gene information has many attributes (explanatory variables) while only a few observations are usually available. In statistics, subset selection when $m \geq n$ is called \textit{high dimensional variable selection}. Note that, if each row of the data matrix is an observation, the length of the data matrix is greater than the width when $m<n$, and the width of the data matrix is greater when $m \geq n$. Based on the shape of the data matrix, we hereafter refer to the cases $m < n$ and $m \geq n$ as \textit{thin} and \textit{fat cases}, respectively. For the fat case, \citet{Stodden:2006} studied how model selection algorithms behave with a different but fixed ratio of $\frac{m}{n}$. \citet{Candes-Tao:07} proposed an $l_1$-regularized problem based approach, called \textit{Dantzig selector}. However, their approach does not explicitly take into account the number of selected variables, which is different from our models.

Our contributions are as follows.
\begin{enumerate}
\item  We present mathematical programs for the subset selection problem that directly minimize the popular criteria $MAE$ and $MSE$. To the best of our knowledge, the proposed model for $MAE$ is the first mathematical programming formulation that directly optimizes $MAE$. The proposed model for $MSE$ is an equivalent model to the model of \citet{Miyashiro15} which optimizes a different objective function; our work has been conducted simultaneously with \citet{Miyashiro15}. In the computational experiment, we observe that the proposed models quickly return a good candidate solution when solved by a commercial optimization solver.
\item We propose the first mathematical programming formulation that directly optimizes mRMR for the thin case, which also can be used for the fat case with trivial modifications. A modified version of the model is also proposed to balance mRMR and the fitting errors. The modified version integrates the mRMR-based feature selection and regression model building steps to obtain a model considering both mRMR and the error-based objective $MAE$ or $MSE$. The computational experiment shows that the proposed models return different subsets from the $MSE$, $MAE$, and mRMR models in a relatively short computational time.
\item For the proposed mathematical programs for $MSE$ and $MAE$, we propose exact and heuristic approaches to obtain big M values. The performances of the models with different big M values are discussed in the computational experiment. Further, the performance of the proposed big M-based formulations are compared with alternative mathematical programming formulations and implementations. 
\item To overcome computational difficulties of the MIP models, we propose an iterative algorithm that gives a quality solution in a relatively short computational time for the fat case. We show that the algorithm yields a local optimal and we propose a randomized version of the algorithm to guarantee convergence to the global optimum. The computational experiment shows that the proposed algorithms are competitive compared to the state-of-the-art benchmarks.
\end{enumerate}
The structure of the paper is as follows. In \autoref{REG_section_formulation_subset_selection}, the mathematical models for the thin case with $MAE$, $MSE$, and mRMR objectives are derived. In \autoref{section_many_variables}, for the fat case, we propose the iterative algorithm based on the mathematical models and derive the randomized version of the algorithm with the convergence result. Finally, we present computational experiments in \autoref{REG_section_computational_experiment}.




\section{Mathematical Models for Thin Case \texorpdfstring{$(m < n)$}{(m < n)}}
\label{REG_section_formulation_subset_selection}

In this section, we derive mathematical programs to directly optimize $MAE$, $MSE$, and mRMR for the thin case. Throughout this paper, the following notation is used: 
\begin{enumerate}[noitemsep]
\item[] $n$ : number of observations
\item[] $m$ : number of explanatory variables 
\item[] $p$ : number of selected explanatory variables 
\item[] $I = \{1,\cdots,n\}$: index set of observations
\item[] $J = \{1,\cdots,m\}$: index set of explanatory variables
\item[] $a=[a_{ij}] \in \mathbb{R}^{n \times m}$: data matrix corresponding to the independent variables
\item[] $a_j \in \mathbb{R}^n$: independent variable $j \in J$
\item[] $b=[b_i] \in \mathbb{R}^{n}$: data vector corresponding to the dependent variable.
\item[] $\rho_{jk}$: absolute sample correlation between explanatory variables $j,k \in J$
\item[] $\rho_j$: absolute sample correlation between explanatory variable $j \in J$ and the dependent variable
\end{enumerate}
For all mathematical models derived, the following decision variables are used:
\begin{enumerate}[noitemsep]
\item[] $x_j$: coefficient of the $j^{th}$ explanatory variable, $j \in J$
\item[] $y$: intercept of the regression model 
\item[] $t_i$: error term of the $i^{th}$ observation, $i \in I$
\item[] $z_j= \left \{
	\begin{array}{ll}
		1  &  $ if explanatory variable $x_j$ is included in the model $\\
		0 & $ otherwise $
	\end{array} , \quad j \in J
\right.$. 
\end{enumerate}

Note that the multiple linear regression model takes the form $b_i = y+ \sum_{j \in J} a_{ij}x_j + t_i$, for $i \in I$. Let us consider a regression model with fixed subset $\hat{S}$ of $J$. For the minimization of $SAE$ given $\hat{S}$, the following LP gives optimal regression coefficients:
\begin{equation}
\label{formulation_SAE_opt}
\min \sum_{i \in I} \bar{t}_i \quad \mbox{s.t.} \quad t_i = \sum_{j \in \hat{S}} a_{ij} x_j  + y - b_i, -\bar{t}_i \leq t_i \leq \bar{t}_i, \bar{t}_i \geq 0, i \in I .
\end{equation}
We later use this LP as a subroutine when we need to construct a regression model that minimizes $SAE$ given a fixed subset. Next we review the three subset selection criteria, which we use for the mathematical programming formulations. In the followings, $SSE$ and $SAE$ are taken with respect to a subset $\hat{S}$ of cardinality $p$.
\begin{enumerate}
\item $MSE$ is one of the most popular criteria \citep{Tamhane-Dunlop:1999}, defined as $\frac{SSE}{n-1-p}$. By minimizing $MSE$, we can balance $SSE$ and $p$ because $SSE$ decreases in $p$. Another popular criteria is adjusted $r^2$, defined as $r^2_{a} = 1 - \frac{MSE}{SST / (n-1)}$, where $SST$ is the total sum of squares. Because $\frac{SST}{n-1}$ is a constant, maximizing $r^2_{a}$ is equivalent to minimizing $MSE$. This explains the equivalence of our model and \citet{Miyashiro15}.
\item $MAE$, defined as $\frac{SAE}{n-1-p}$, is an alternative to $MSE$ for reducing the effect of outliers. Note that $MAE$ is defined similarly to $MSE$, where $SAE$ is used instead of $SSE$. $MAE$ is a widely used criterion that is less sensitive to outliers and can also be used as an evaluation criterion when the model is fitted using squared errors \cite{Harrell:2001}. For a detailed discussion of $MAE$ compared with $MSE$, the reader is referred to \citet{Chai:04} and \citet{Willmott:2005}
\item mRMR, defined as $\frac{1}{p} \sum_{j \in \hat{S}} \rho_j - \frac{1}{p^2} \sum_{j,k \in \hat{S}} \rho_{jk}$, is frequently used to select features prior to running statistical models. By maximizing mRMR, the highly correlated explanatory variables to the dependent variable are selected (the first term in the expression) while maintaining the variables that are far away from each other (the second term in the expression).
\end{enumerate}
We remark that the first objective is one of the most popular criteria practitioners use for selecting a subset, the second objective is a variant of the first, which is mainly concerned with reducing the effect of outliers, and the last objective is useful for screening the explanatory variables in an extreme fat case data.

\subsection{Mean Square and Absolute Errors}

In this section, we derive mathematical programs for $MAE$ and $MSE$ in Sections \ref{REG_subsection_formulation_subset_mae} and \ref{REG_subsection_formulation_subset_mse}. For the proposed models, valid values for big M, which is an upper bound for the regression coefficients, and valid inequalities are derived in Sections \ref{REG_subsection_bigM} and \ref{REG_subsection_VI}.

\subsubsection{Minimization of \texorpdfstring{$MAE$}{Minimization of MAE}}
\label{REG_subsection_formulation_subset_mae}

Observe that $MAE = \frac{SAE}{n-1-p}$ has two terms ($SAE$ and $p$) that can be written as $SAE = \sum_{i \in I} |t_i|$ and $p = \sum_{j \in J} z_j$ in terms of the decision variables. Using these expressions, we can write a mathematical model
\begin{subequations}
\label{formulation_subset_mae_derive1}
\begin{align}
\min \quad & \textstyle \frac{\sum_{i \in I} |t_i|}{n-1 -\sum_{j \in J} z_j} \label{formulation_subset_mae_derive1_a} \\[-3pt]
s.t.\quad& \textstyle t_i = \sum_{j \in J} a_{ij} x_j  + y - b_i, & i \in I , \label{formulation_subset_mae_derive1_b} \\[-3pt]
& \textstyle -M z_j \leq x_j \leq M z_j, & j \in J, \label{formulation_subset_mae_derive1_c}\\[-3pt]
& \textstyle z_j \in \{0,1\}, t,x,y \mbox{ unconstrained}. \label{formulation_subset_mae_derive1_d}
\end{align}
\end{subequations}
to minimize $MAE$. Observe that, if we add constraint $\sum_{j \in J} z_j = p$ to \eqref{formulation_subset_mae_derive1} given fixed $p$, we obtain an easier problem, which is equivalent to the model presented by \citet{Konno-Yamamoto:09} since the denominator of the objective becomes constant. By adding cardinality constraint with fixed $p$ and by replacing \eqref{formulation_subset_mae_derive1_c} with specially order sets based constraints, we obtain the model presented in \citet{Bertsimas-etal:15}. The remaining development is completely different from the work in \citet{Konno-Yamamoto:09} or \citet{Bertsimas-etal:15} and thus new. This is due to the fact that they assume fixed $p$ which implies that model \eqref{formulation_subset_mae_derive1} is already linear. In our case we have to linearize this model which is not a trivial task. Note that \eqref{formulation_subset_mae_derive1} is a Mixed Integer Linear Fraction Programming (MIFLP). There are numerous studies discussing solving MIFLP problems in the original form without linearizing the objective function, which is different from our approach linearizing the objective function to reformulate \eqref{formulation_subset_mae_derive1}. The readers are referred to \citet{schaible2004recent} and \citet{stancu2012fractional} for detailed reviews of fractional programming literature.

Note that $M$ in \eqref{formulation_subset_mae_derive1_c} is a constant, which is an upper bound for $x_j$'s, that we have not yet specified. \citet{Konno-Yamamoto:09} set an arbitrary large value for $M$ in their study. For now, let us assume that a proper value of $M$ is given (we derive a valid value for $M$ in a later section). To linearize nonlinear objective \eqref{formulation_subset_mae_derive1_a}, we introduce
\begin{equation}
\label{equation_u_definition}
u =  \frac{\sum_{i \in I} |t_i|}{n-1-\sum_{j \in J} z_j}.
\end{equation}
Observe that $u$ explicitly represents $MAE$. We linearize objective function \eqref{formulation_subset_mae_derive1_a} by adding \eqref{equation_u_definition} as a constraint and setting $u$ as the objective function. Then, \eqref{formulation_subset_mae_derive1} can be rewritten as
\begin{subequations}
\label{formulation_subset_mae_derive2}
\begin{align}
\min \quad & \textstyle u \label{formulation_subset_mae_derive2_a} \\[-3pt]
s.t.\quad & \textstyle \sum_{i \in I} |t_i| = (n-1)u - u \sum_{j \in J} z_j, \label{formulation_subset_mae_derive2_b}\\[-3pt]
& \textstyle t_i = \sum_{j \in J} a_{ij} x_j  + y - b_i, & i \in I , \label{formulation_subset_mae_derive2_c} \\[-3pt]
& \textstyle -M z_j \leq x_j \leq M z_j, & j \in J, \label{formulation_subset_mae_derive2_d}\\[-3pt]
& \textstyle u \geq 0, z_j \in \{0,1\}, t,x,y \mbox{ unconstrained}. \label{formulation_subset_mae_derive2_e}
\end{align}
\end{subequations}
In order to linearize nonlinear constraint \eqref{formulation_subset_mae_derive2_b}, we introduce $v_j = u z_j$, $j \in J$, which can be linearized using standard linearization techniques . Using a linearization technique \cite{glover1975improved} with proper settings, we obtain 
\begin{subequations}
\label{formulation_subset_mae_derive3}
\begin{align}
\min \quad & \textstyle u \label{formulation_subset_mae_derive3_a} \\[-3pt]
s.t.\quad & \textstyle \sum_{i \in I} |t_i| = (n-1)u - \sum_{j \in J} v_j \label{formulation_subset_mae_derive3_b}\\[-3pt]
& \textstyle t_i = \sum_{j \in J} a_{ji} x_j  + y - b_i, & i \in I,  \label{formulation_subset_mae_derive3_c} \\[-3pt]
& \textstyle -M z_j \leq x_j \leq M z_j, & j \in J,  \label{formulation_subset_mae_derive3_d}\\[-3pt]
& \textstyle v_j \leq u,  & j \in J,  \label{formulation_subset_mae_derive3_e}\\[-3pt]
& \textstyle u-M(1-z_j) \leq v_j \leq M z_j,  & j \in J,  \label{formulation_subset_mae_derive3_f}\\[-3pt]
& \textstyle v_j \geq 0, u \geq 0, z_j \in \{0,1\}, t,x,y \mbox{ unconstrained}. \label{formulation_subset_mae_derive3_g}
\end{align}
\end{subequations}
Observe that we use $M$ again in \eqref{formulation_subset_mae_derive3_f} and a proper value for $M$ is derived in a later section. We conclude that \eqref{formulation_subset_mae_derive3} is a valid formulation for \eqref{formulation_subset_mae_derive2} by the following proposition.

\begin{proposition}
\label{proposition_2_and_3_equivalent}
An optimal solution to model \eqref{formulation_subset_mae_derive2} and an optimal solution to model \eqref{formulation_subset_mae_derive3} have the same objective function value.
\end{proposition}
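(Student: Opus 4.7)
My plan is to prove the proposition via an explicit two-way feasibility mapping, which is the standard approach for verifying that the big-M linearization of the bilinear term $v_j = u z_j$ is exact under the assumption that $M$ dominates $u$. Concretely, I would show that projecting out $v$ from any feasible solution to \eqref{formulation_subset_mae_derive3} yields a feasible solution to \eqref{formulation_subset_mae_derive2} with the same objective, and conversely that any feasible solution to \eqref{formulation_subset_mae_derive2} lifts to a feasible solution to \eqref{formulation_subset_mae_derive3} with the same objective. Since the objective in both models is simply $u$, equality of objective values follows from equality of feasible $u$-values.

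For the forward direction (lifting), given a feasible $(u,t,x,y,z)$ for \eqref{formulation_subset_mae_derive2}, I would define $v_j := u z_j$ for each $j \in J$ and check each constraint of \eqref{formulation_subset_mae_derive3}. Constraints \eqref{formulation_subset_mae_derive3_c} and \eqref{formulation_subset_mae_derive3_d} are inherited. Constraint \eqref{formulation_subset_mae_derive3_e} reads $u z_j \le u$, which holds since $u \ge 0$ and $z_j \in \{0,1\}$. For \eqref{formulation_subset_mae_derive3_f}, the case $z_j = 1$ gives $u - 0 \le u \le M$ and the case $z_j = 0$ gives $u - M \le 0 \le 0$, both of which are valid provided $M \ge u$. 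Finally, substituting $v_j = u z_j$ into \eqref{formulation_subset_mae_derive3_b} recovers \eqref{formulation_subset_mae_derive2_b}.

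For the reverse direction (projecting), given $(u,t,x,y,z,v)$ feasible for \eqref{formulation_subset_mae_derive3}, I would do a case analysis on $z_j$. If $z_j = 0$, then \eqref{formulation_subset_mae_derive3_f} together with \eqref{formulation_subset_mae_derive3_g} forces $0 \le v_j \le 0$, hence $v_j = 0 = u z_j$. If $z_j = 1$, then \eqref{formulation_subset_mae_derive3_e} gives $v_j \le u$ and \eqref{formulation_subset_mae_derive3_f} gives $v_j \ge u$, hence $v_j = u = u z_j$. Therefore $\sum_{j \in J} v_j = u \sum_{j \in J} z_j$ in every feasible solution, and substituting into \eqref{formulation_subset_mae_derive3_b} recovers \eqref{formulation_subset_mae_derive2_b}; all remaining constraints transfer directly, so $(u,t,x,y,z)$ is feasible for \eqref{formulation_subset_mae_derive2}.

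The main obstacle is the assumption $M \ge u$ needed in the forward direction; without it, the lifted solution could violate \eqref{formulation_subset_mae_derive3_f} when $z_j = 0$ and \eqref{formulation_subset_mae_derive3_e} when $z_j = 1$. This is harmless because $u$ represents $MAE$ and is bounded above at optimality by e.g.\ the $MAE$ of the intercept-only regression model, so any derivation of $M$ in \autoref{REG_subsection_bigM} which is sufficiently large (and which is needed anyway for the bound on $x_j$ in \eqref{formulation_subset_mae_derive3_d}) suffices. Hence the equivalence is a consequence of standard McCormick-style linearization, provided $M$ is chosen appropriately.
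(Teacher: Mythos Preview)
Your proof is correct and follows essentially the same two-way feasibility mapping as the paper. Your reverse direction is in fact slightly cleaner: the paper invokes optimality to conclude $v_j = u$ when $z_j = 1$, whereas you correctly observe that constraints \eqref{formulation_subset_mae_derive3_e}--\eqref{formulation_subset_mae_derive3_g} alone force $v_j = u z_j$ for every feasible point, so the equivalence holds at the level of feasible sets, not just optima.
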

The proof is given in \autoref{appendix_proofs_of_lemmas} and is based on the fact that feasible solutions to \eqref{formulation_subset_mae_derive2} and \eqref{formulation_subset_mae_derive3} map to each other. Observe that the signs of $t,x$, and $y$ in \eqref{formulation_subset_mae_derive3} are not restricted. In order to make all variables non-negative, we introduce $x_j^+$, $x_j^-$, $y^+$ and $y^-$, in which $x_j = x_j^+ - x_j^-$ and $y = y^+ - y^-$. We also use $t_i^+$ and $t_i^-$, where $t_i = t_i^+ - t_i^-$, to replace the absolute value function in \eqref{formulation_subset_mae_derive3_b}. Finally, we obtain mixed integer program \eqref{formulation_subset_mae} for regression subset selection with the $MAE$ objective.
\begin{subequations}
\label{formulation_subset_mae}
\begin{align}
\min \quad & \textstyle u \label{formulation_subset_mae_a} \\[-3pt]
s.t.\quad & \textstyle \sum_{i=1}^n (t_i^+ + t_i^-) = (n-1)u - \sum_{j \in J} v_j, \label{formulation_subset_mae_b}\\[-3pt]
& \textstyle t_i^+ - t_i^- = \sum_{j=1}^m a_{ij} (x_j^+ - x_j^-)  + (y^+ - y^-) - b_i, &i \in I,  \label{formulation_subset_mae_c} \\[-3pt]
& \textstyle x_j^+ \leq M z_j, & j \in J,  \label{formulation_subset_mae_d}\\[-3pt]
& \textstyle x_j^- \leq M z_j, & j \in J,   \label{formulation_subset_mae_e}\\[-3pt]
& \textstyle v_j \leq u,  & j \in J ,  \label{formulation_subset_mae_f}\\[-3pt]
& \textstyle u-M(1-z_j) \leq v_j \leq M z_j,  & j \in J, \label{formulation_subset_mae_g}\\[-3pt]
& \textstyle x_j^+ \geq 0, x_j^- \geq 0, y^+ \geq 0, y^- \geq 0, v_j \geq 0, u \geq 0, t_i^+ \geq 0, t_i^- \geq 0, z_j \in \{0,1\} \label{formulation_subset_mae_h}
\end{align}
\end{subequations}

It is known that either $t_i^+$ or $t_i^-$ is equal to 0 if $\sum_{i \in I} |t_i|$ is minimized in the objective function. However, since $\sum_{i \in I} |t_i|$ is not directly minimized and binary variables are present in \eqref{formulation_subset_mae_derive3}, we give the following proposition in order to make sure that \eqref{formulation_subset_mae_derive3} is equivalent to \eqref{formulation_subset_mae}, where the proof is given in \autoref{appendix_proofs_of_lemmas}.

\begin{proposition}
\label{proposition_complementary_t}
An optimal solution to \eqref{formulation_subset_mae} must have either $t_i^+ = 0$ or $t_i^- = 0$ for every $i \in I$.
\end{proposition}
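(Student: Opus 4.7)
The plan is to prove this by a perturbation argument in contradiction style. Suppose some optimal solution $(x^\pm, y^\pm, t^\pm, u, v, z)$ of \eqref{formulation_subset_mae} has both $t_i^+ > 0$ and $t_i^- > 0$ for some index $i \in I$. Let $\delta := \min(t_i^+, t_i^-) > 0$ and consider the natural perturbation $\tilde t_i^+ := t_i^+ - \delta$, $\tilde t_i^- := t_i^- - \delta$, leaving $t_{i'}^\pm$ unchanged for $i' \neq i$. This keeps the difference $t_i^+ - t_i^-$ unchanged, so constraint \eqref{formulation_subset_mae_c} is automatically preserved for every $i$. The obstacle is that constraint \eqref{formulation_subset_mae_b} is an equality whose left-hand side has just dropped by $2\delta$; to restore it I need to rebalance the right-hand side while strictly reducing $u$, which would contradict optimality.

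The rebalancing hinges on the observation that \eqref{formulation_subset_mae_f}--\eqref{formulation_subset_mae_g} together pin $v_j = u$ whenever $z_j = 1$ and $v_j = 0$ whenever $z_j = 0$, so $\sum_{j \in J} v_j = p u$ with $p := \sum_{j \in J} z_j$, and \eqref{formulation_subset_mae_b} collapses to $\sum_{i \in I}(t_i^+ + t_i^-) = (n-1-p)u$. In the generic thin case $p < n-1$ I would propose decreasing $u$ by $\epsilon := 2\delta/(n-1-p) > 0$ and simultaneously decreasing each $v_j$ with $z_j = 1$ by the same $\epsilon$ (leaving the $v_j$'s with $z_j = 0$ at zero). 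A short computation shows this exactly restores \eqref{formulation_subset_mae_b}, and the new objective value $u - \epsilon$ is strictly smaller.

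The step I expect to require the most care is verifying feasibility of the perturbed solution under the big-$M$ coupling \eqref{formulation_subset_mae_g} and the sign constraints \eqref{formulation_subset_mae_h}. For non-negativity, the collapsed form of \eqref{formulation_subset_mae_b} gives $u = \sum_{i}(t_i^+ + t_i^-)/(n-1-p) \geq (t_i^+ + t_i^-)/(n-1-p) \geq 2\delta/(n-1-p) = \epsilon$, so $u - \epsilon \geq 0$ and likewise each perturbed $v_j \geq 0$. For \eqref{formulation_subset_mae_g}, the case $z_j = 1$ reduces to the trivial identity $v_j = u$ after perturbation, while $z_j = 0$ remains slack since $u$ has only decreased. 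Finally, the degenerate corner $p = n-1$ deserves a sentence: here \eqref{formulation_subset_mae_b} directly forces $\sum_{i}(t_i^+ + t_i^-) = 0$, so $t_i^+ = t_i^- = 0$ for all $i$ and the claim holds trivially.
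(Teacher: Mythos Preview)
Your proposal is correct and follows essentially the same perturbation-by-contradiction route as the paper: both subtract $\delta=\min(t_i^+,t_i^-)$ from $t_i^\pm$, decrease $u$ by $2\delta/(n-1-p)$, and reset $v_j=\tilde u$ for $z_j=1$ to restore \eqref{formulation_subset_mae_b}. Your version is slightly tidier in that you first record explicitly that \eqref{formulation_subset_mae_f}--\eqref{formulation_subset_mae_g} pin $v_j\in\{0,u\}$ for integral $z$, which collapses \eqref{formulation_subset_mae_b} to $(n-1-p)u$ and makes the nonnegativity check for $\tilde u$ immediate; you also dispose of the corner $p=n-1$, which the paper's proof tacitly assumes away.
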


By \autoref{proposition_complementary_t}, it is easy to see that \eqref{formulation_subset_mae_b} is equivalent to \eqref{formulation_subset_mae_derive3_b}. Therefore, \eqref{formulation_subset_mae} correctly solves \eqref{formulation_subset_mae_derive1}. A final remark regarding the model is with regard to the dimension of the formulation. For a dataset with $m$ candidate explanatory variables and $n$ observations, formulation \eqref{formulation_subset_mae} has $2n+4m+3$ variables (including $m$ binary variables) and $n+5m+1$ constraints (excluding non-negativity constraints).

\subsubsection{Minimization of \texorpdfstring{$MSE$}{Minimization of MSE}}
\label{REG_subsection_formulation_subset_mse}

In this section, we derive a quadratically constrained mixed integer programming model based on the results in Section \ref{REG_subsection_formulation_subset_mae}, which gives an equivalent formulation to \citet{Miyashiro15} as  maximizing adjusted $r^2$ is equivalent to minimizing $MSE$. Our work has been conducted simultaneously with \citet{Miyashiro15}.

Observe that the only difference between $MSE$ and $MAE$ is that $MSE$ has $\sum_{i=1}^n t_i^2$, whereas $MAE$ has $\sum_{i=1}^n |t_i|$. Hence, the left hand side of \eqref{formulation_subset_mae_b} is replaced by $\sum_{i=1}^n (t_i^+ - t_i^-)^2$. Also, in order to make the constraint convex, we use inequality instead of equality. Hence, we use 
\begin{equation}
\label{formulation_subset_constraint_for_mse}
\sum_{i \in I} (t_i^+ - t_i^-)^2 \leq (n-1)u - \sum_{j \in J} v_j 
\end{equation}
instead of \eqref{formulation_subset_mae_b}. Finally, the mixed integer quadratically constrained program with the convex relaxation reads
\begin{equation}
\label{formulation_subset_mse}
\min \{ u | \eqref{formulation_subset_constraint_for_mse}, \eqref{formulation_subset_mae_c}-\eqref{formulation_subset_mae_h} \}.
\end{equation}
Note that we use inequality in \eqref{formulation_subset_constraint_for_mse} to have the convex constraint, but $u$ is correctly defined only when \eqref{formulation_subset_constraint_for_mse} is at equality. Hence, we need the following proposition.

\begin{proposition}
\label{proposition_mse_equality_at_opt}
An optimal solution to \eqref{formulation_subset_mse} must satisfy \eqref{formulation_subset_constraint_for_mse} at equality.
\end{proposition}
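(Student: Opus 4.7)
The plan is to argue by contradiction: assume some optimal solution satisfies \eqref{formulation_subset_constraint_for_mse} strictly, and build from it a feasible solution with strictly smaller $u$, contradicting optimality. The key preliminary observation I would make is that constraints \eqref{formulation_subset_mae_f} and \eqref{formulation_subset_mae_g}, together with $v_j \geq 0$, force $v_j = u z_j$ in every feasible solution: when $z_j = 1$ the chain $u \leq v_j \leq u$ collapses, and when $z_j = 0$ the chain $0 \leq v_j \leq 0$ collapses. Consequently the right-hand side of \eqref{formulation_subset_constraint_for_mse} equals $u(n-1-p)$ with $p = \sum_{j \in J} z_j$, and in the thin case we have $p \leq m < n$, hence $n-1-p \geq 0$.

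Next I would dispose of the corner case $p = n-1$: the right-hand side is then $0$, so the constraint already holds at equality (both sides are zero). The remaining case is $p < n-1$, so $n-1-p \geq 1$. Suppose for contradiction that $(u^*, v^*, t^{+*}, t^{-*}, x^{+*}, x^{-*}, y^{+*}, y^{-*}, z^*)$ is optimal with strict inequality, and let
\[
\Delta = u^*(n-1-p^*) - \sum_{i \in I} (t_i^{+*} - t_i^{-*})^2 > 0.
\]
For $\epsilon \in (0, \min\{u^*, \Delta/(n-1-p^*)\})$, define $u' = u^* - \epsilon$ and $v_j' = u' z_j^*$, leaving all other variables unchanged.

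I would then verify feasibility: the identity $v_j' = u' z_j^*$ reinstates the collapsed chains in \eqref{formulation_subset_mae_f}--\eqref{formulation_subset_mae_g}; nonnegativity of $u'$ and $v_j'$ follows from $\epsilon < u^*$; the right-hand side of \eqref{formulation_subset_constraint_for_mse} becomes $u'(n-1-p^*) = u^*(n-1-p^*) - \epsilon(n-1-p^*)$, which by the choice of $\epsilon$ still strictly exceeds $\sum_{i \in I}(t_i^{+*}-t_i^{-*})^2$; and constraints \eqref{formulation_subset_mae_c}--\eqref{formulation_subset_mae_e} do not involve $u$ or the $v_j$ and are unaffected. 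The new solution is therefore feasible with objective $u' < u^*$, contradicting optimality and proving the proposition.

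The only real subtlety is the special case $p^* = n-1$, which I handle separately because the multiplier $n-1-p^*$ vanishes there and the perturbation argument degenerates; otherwise the argument is a clean perturbation made possible by the fact that $v_j$ is completely pinned down by $u$ and $z$, so shrinking $u$ automatically shrinks the right-hand side in a controlled way without disturbing any other constraint.
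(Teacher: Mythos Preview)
Your proof is correct and follows the same contradiction-by-perturbation approach as the paper: assume strict inequality, lower $u$ (and correspondingly the $v_j$) to obtain a cheaper feasible point. Your version is in fact tidier than the paper's, since you first observe that \eqref{formulation_subset_mae_f}--\eqref{formulation_subset_mae_g} force $v_j = u z_j$ in every feasible point (so the right-hand side of \eqref{formulation_subset_constraint_for_mse} is exactly $u(n-1-p)$), and you explicitly treat the degenerate case $p=n-1$; the paper leaves both of these implicit and uses a specific decrement rather than a generic small $\epsilon$.
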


The proof is given in \autoref{appendix_proofs_of_lemmas}. By \autoref{proposition_mse_equality_at_opt}, we know that \eqref{formulation_subset_constraint_for_mse} is satisfied at equality at an optimal solution, hence \eqref{formulation_subset_mse} correctly solves the problem.

\subsubsection{Big M for \texorpdfstring{$x_j$}{x}'s and \texorpdfstring{$v_j$}{v}'s}
\label{REG_subsection_bigM}

Deriving a tight and valid value of $M$ in \eqref{formulation_subset_mae} and \eqref{formulation_subset_mse} is crucial for two reasons. For optimality, too small values cannot guarantee optimality even when the optimization model is solved optimally. For computation, a large value of $M$ causes numerical instability and slows down the branch-and-bound algorithm. Recall that we assume that a valid value of $M$ is given for the formulations \eqref{formulation_subset_mae} and \eqref{formulation_subset_mse} and that the same notation $M$ is used for both $x_j$'s and $v_j$'s. However, $x_j$'s and $v_j$'s are often in different magnitudes. Hence, it is necessary to derive distinct and valid values of $M$ for $x_j$'s and $v_j$'s. 

In this section, we derive valid values of $M$ for $x_j$'s and $v_j$'s in \eqref{formulation_subset_mae}. The result also holds for \eqref{formulation_subset_mse} with trivial modifications. Among the two exact approaches proposed in this section, the first approach is based on the logic similar to \citet{Bertsimas-etal:15}, where a similar approach is provided without a validity check for a different problem minimizing SSE given a fixed $p$. We also provide a computationally faster procedure for $M$ for $x_j$'s in \eqref{formulation_subset_mse} as an alternative. Both of the $M$ values do not cause any numerical problems in our experiments.

First, let us consider $M$ for $v_j$'s. Observe that a valid $M$ for $v_j$ must be greater than all possible values for $u$. However, it is generally better to have tight upper bounds. Hence, we use $mae_m$, the mean absolute error of an optimal regression model with all $m$ explanatory variables, as upper bounds. We set
\begin{equation}
\label{formula_bigM_for_v}
M := mae_m 
\end{equation}
for every $v_j$ in \eqref{formulation_subset_mae}. Note that \eqref{formula_bigM_for_v} can be calculated by LP formulation \eqref{formulation_SAE_opt} in polynomial time. By using the $M$ value in \eqref{formula_bigM_for_v}, we treat regression models that have worse objective function values than $mae_m$ as infeasible. 

Next, let us consider $M$ for $x_j$'s in \eqref{formulation_subset_mae} for $MAE$. We start with the following assumption.
\begin{assumption}
\label{assumption_no_zero_total_error}
Dataset $\{b,a_1, a_2, \cdots, a_m \}$ is linearly independent.
\end{assumption}
This assumption implies that there is no regression model with total error equal to 0 among all possible subsets of the $m$ explanatory variables. This is a mild assumption because, in practice, we typically have a dataset with structural and random noises and it is unlikely to have zero error.

In order to find a valid value of $M$ for $x_j$'s in \eqref{formulation_subset_mae}, we formulate an LP. Let $\mu$ be the decision variable having the role of $M$. Let $\bar{b} = \frac{\sum_{i \in I} b_i}{n}$ and $T_{max} = \sum_{i \in I} |b_i - \bar{b}|$ be the average of $b_i$'s and the maximum total error bound allowed, respectively. Any attractive regression model should have the total error less than $T_{max}$ in order to justify the effort of building a regression model, because $SAE > T_{max}$ with $p>0$ gives an automatically worse objective function value than the model with no explanatory variable. This requirement is written as $\sum_{i \in I} (t_i^+ + t_i^-) \leq T_{max}$. Because for now we are only concerned with feasibility, we can ignore $u$ and all related constraints and variables \eqref{formulation_subset_mae_b}, \eqref{formulation_subset_mae_f}, $z_j$'s, and $v_j$'s. Then, we have the following feasibility set:
\begin{subequations}
\label{LP_bigM}
\begin{align}
 & \textstyle \sum_{i \in I} (t_i^+ + t_i^-) \leq T_{max}, \label{LP_bigM_a}\\[-3pt]
& \textstyle t_i^+ - t_i^- = \sum_{j \in J} a_{ij} (x_j^+ - x_j^-)  + (y^+ - y^-) - b_i, & i \in I,\label{LP_bigM_b} \\[-3pt]
& \textstyle x_j^+ \leq \mu, & j \in J,  \label{LP_bigM_c}\\[-3pt]
& \textstyle x_j^- \leq \mu, & j \in J, \label{LP_bigM_d}\\[-3pt]
& \textstyle \mu \geq 0, x_j^+ \geq 0, x_j^- \geq 0, y^+ \geq 0, y^- \geq 0, t_i^+ \geq 0, t_i^- \geq 0. \label{LP_bigM_e}
\end{align}
\end{subequations}
For notational convenience, let $Y = (x^+, x^-, y^+, y^-, t^+, t^-, \mu )$ be a vector in \eqref{LP_bigM}.

Next, let us try to increase $x_k^+$ to its maximum value. For a fixed $0 < \varepsilon < 1$, we define the objective as 
\begin{center}
max \quad $x_k^+ - \varepsilon \mu$.
\end{center}
With the second term, we force $\mu$ to be the maximum value we need, yet not preventing a further increment of $x_k^+$. From the linear program
\begin{equation}
\label{LP_bigM_x_k_plus}
\max \{x_k^+ - \varepsilon \mu|\mbox{\eqref{LP_bigM_a}-\eqref{LP_bigM_e}}, x_k^- = 0\},
\end{equation}
we obtain $\hat{M}_k^+$, a candidate for $M$, from the value of $\mu$ of an optimal solution solution to \eqref{LP_bigM_x_k_plus}. Similarly, $\hat{M}_k^-$ is obtained from $\max \{x_k^- - \varepsilon \mu |\mbox{\eqref{LP_bigM_a}-\eqref{LP_bigM_e}}, x_k^+ = 0\}$. Then the maximum value for explanatory variable $x_k$ can be obtained by setting $\hat{M}_k = \max\{ \hat{M}_k^+, \hat{M}_k^-\}$. Finally, considering all explanatory variables, we define $\hat{M}$ as
\begin{equation}
\label{bigM_definition}
\hat{M} = \max_{j \in J} \hat{M}_j.
\end{equation}
Before we proceed, we first need to make sure that \eqref{LP_bigM_x_k_plus} is not unbounded so that the values are well defined.

\begin{proposition}
\label{proposition_boundedLP}
Linear program \eqref{LP_bigM_x_k_plus} is bounded.
\end{proposition}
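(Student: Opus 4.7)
The plan is to verify feasibility of \eqref{LP_bigM_x_k_plus} (which is immediate) and then show that its recession cone contains no direction of strictly positive objective value, so the LP must attain a finite maximum. For feasibility, take $x_j^\pm = 0$, split $y = \bar b$ as $y^+ - y^-$, set $t_i^+ = \max(\bar b - b_i, 0)$, $t_i^- = \max(b_i - \bar b, 0)$, and $\mu = 0$; then $\sum_i(t_i^+ + t_i^-) = \sum_i |b_i - \bar b| = T_{\max}$, and every remaining constraint (including $x_k^- = 0$) holds.

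For boundedness I would characterize the recession cone via the homogenized constraint system. The homogenization of \eqref{LP_bigM_a}, namely $\sum_i(\Delta t_i^+ + \Delta t_i^-) \le 0$, combined with the non-negativity $\Delta t_i^\pm \ge 0$, forces $\Delta t_i^+ = \Delta t_i^- = 0$ for every $i$. Substituting into the homogenization of \eqref{LP_bigM_b} yields
\[
A\tilde x + \tilde y\,\mathbf{1} = 0, \qquad \tilde x := \Delta x^+ - \Delta x^-,\ \tilde y := \Delta y^+ - \Delta y^-.
\]
Invoking Assumption \ref{assumption_no_zero_total_error} in its regression-with-intercept reading, i.e.\ that the columns $\mathbf{1}, a_{\cdot 1},\dots,a_{\cdot m}$ are linearly independent in $\mathbb{R}^n$, then forces $\tilde x = 0$ and $\tilde y = 0$. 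In particular $\tilde x_k = 0$, and since $x_k^- = 0$ is fixed in \eqref{LP_bigM_x_k_plus} we also have $\Delta x_k^- = 0$, so $\Delta x_k^+ = 0$. Consequently the objective along any recession direction equals $\Delta x_k^+ - \varepsilon\,\Delta\mu = -\varepsilon\,\Delta\mu \le 0$, because $\Delta\mu \ge 0$ and $\varepsilon > 0$. Hence no recession direction strictly improves the objective, and the feasible LP is bounded above.

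The only delicate step is the invocation of linear independence. As literally written, Assumption \ref{assumption_no_zero_total_error} concerns the set $\{b, a_{\cdot 1},\dots,a_{\cdot m}\}$, whereas the recession-cone argument needs independence of the intercept-augmented set $\{\mathbf{1}, a_{\cdot 1},\dots,a_{\cdot m}\}$. This distinction is substantive: if $\mathbf{1} = \sum_j c_j a_{\cdot j}$ with $c_k \ne 0$, then $(\tilde x, \tilde y) = (-c, 1)$ can be completed to a recession direction of strictly positive objective, so the LP is genuinely unbounded in that degenerate case. I would therefore either state Assumption \ref{assumption_no_zero_total_error} in the stronger form that also covers the intercept column $\mathbf{1}$ — which is the reading implicit in the authors' accompanying remark that ``no regression model with total error $0$'' exists — or carry this strengthening as a separate hypothesis for the proposition; with that in place, the recession-cone argument above completes the proof.
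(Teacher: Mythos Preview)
Your proof is correct and takes a more direct route than the paper. The paper argues through an auxiliary result (Lemma~\ref{lemma_no_zerro_error}) claiming that every extreme ray of the recession cone of \eqref{LP_bigM} has strictly positive $t$-component sum, and then observes that constraint \eqref{LP_bigM_a} caps how far one can travel along any such ray. You instead homogenize the full system, use the homogenized budget constraint together with non-negativity to force $\Delta t^\pm = 0$ outright, and then invoke column independence to obtain $\Delta x_k^+ = 0$, so that the objective change along every recession direction is $-\varepsilon\,\Delta\mu \le 0$. Your argument is self-contained and does not lean on the auxiliary lemma; this is an advantage, because that lemma as literally stated overlooks recession directions such as the pure $\mu$-ray $(0,\ldots,0,\Delta\mu)$, which lies in the cone with $c^\top r = 0$.

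Your remark about Assumption~\ref{assumption_no_zero_total_error} is also well placed. What the recession-cone argument actually needs is that $A\tilde x + \tilde y\,\mathbf{1} = 0$ force $\tilde x_k = 0$, i.e.\ that $a_{\cdot k}$ not lie in the span of $\mathbf{1}$ and the remaining design columns. Neither the literal independence of $\{b, a_{\cdot 1},\dots,a_{\cdot m}\}$ nor the accompanying gloss ``no regression model with total error $0$'' (which constrains $b$, not $\mathbf{1}$ versus the $a_{\cdot j}$) delivers this directly. Carrying the strengthened hypothesis explicitly, as you propose, is the clean repair, and the paper's argument requires the same fix once the lemma is patched.
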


\begin{lemma}
\label{lemma_conversion}
Let $\hat{M}$ be the value obtained from \eqref{bigM_definition} and $\bar{Y} = (\bar{x}^+, \bar{x}^-, \bar{y}^+,\bar{y}^-,\bar{v}_j, \bar{u}, \bar{t}^+, \bar{t}^-, \bar{z})$ be a feasible solution of \eqref{formulation_subset_mae} with $\hat{M}$ and $SAE$ less than or equal to $T_{max}$. Then, $\tilde{Y} = (\bar{x}^+, \bar{x}^-, \bar{y}^+,\bar{y}^-, \bar{t}^+, \bar{t}^-, \hat{M})$ is a feasible solution for \eqref{LP_bigM}.
\end{lemma}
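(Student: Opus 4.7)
The lemma is essentially a mapping/projection result: dropping the variables $u$, $v_j$, $z_j$ from a feasible solution of \eqref{formulation_subset_mae} and setting $\mu := \hat{M}$ should land us in the feasibility set of \eqref{LP_bigM}. The plan is therefore to check each constraint \eqref{LP_bigM_a}--\eqref{LP_bigM_e} in turn and verify it is implied by the corresponding constraint of \eqref{formulation_subset_mae} together with the assumption that $SAE \leq T_{max}$.

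First, I would observe that constraint \eqref{LP_bigM_b} is literally the same as \eqref{formulation_subset_mae_c}, and $\tilde{Y}$ reuses the components $\bar{x}^\pm, \bar{y}^\pm, \bar{t}^\pm$ from $\bar{Y}$, so this constraint transfers verbatim. Non-negativity \eqref{LP_bigM_e} is likewise inherited from \eqref{formulation_subset_mae_h}, and $\mu = \hat{M} \geq 0$ because $\hat{M}$ is defined in \eqref{bigM_definition} as a maximum of $\mu$-values from the LPs \eqref{LP_bigM_x_k_plus}, where $\mu \geq 0$ is imposed.

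Next, for \eqref{LP_bigM_c}--\eqref{LP_bigM_d}, I would use \eqref{formulation_subset_mae_d}--\eqref{formulation_subset_mae_e} together with $\bar{z}_j \in \{0,1\}$: this gives $\bar{x}_j^+ \leq \hat{M}\bar{z}_j \leq \hat{M} = \mu$ and similarly for $\bar{x}_j^-$. This step uses nothing beyond the fact that the big-M constant used in the statement of the lemma is the same $\hat{M}$ that we plug in as $\mu$.

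The only remaining piece is \eqref{LP_bigM_a}, and this is where the assumption $SAE \leq T_{max}$ is used: the quantity $\sum_{i \in I} (\bar{t}_i^+ + \bar{t}_i^-)$ appearing on the left-hand side of \eqref{formulation_subset_mae_b} is precisely the $SAE$ of the model evaluated at $\bar{Y}$, so the bound $SAE \leq T_{max}$ is exactly \eqref{LP_bigM_a}. I do not expect any serious obstacle here; the only subtlety worth flagging is the interpretation of $SAE$ in terms of the split error variables $\bar{t}_i^+ + \bar{t}_i^-$ (rather than $|\bar{t}_i^+ - \bar{t}_i^-|$), which is the natural one given how $SAE$ is represented throughout formulation \eqref{formulation_subset_mae}. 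With all five constraint groups verified, $\tilde{Y}$ is feasible for \eqref{LP_bigM}.
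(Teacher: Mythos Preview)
Your proposal is correct and follows essentially the same approach as the paper's proof: a direct constraint-by-constraint verification that the projected vector satisfies \eqref{LP_bigM_a}--\eqref{LP_bigM_e}. If anything, your write-up is slightly more careful, explicitly noting $\mu = \hat{M} \geq 0$ and the interpretation of $SAE$ as $\sum_{i}(\bar t_i^+ + \bar t_i^-)$, points the paper leaves implicit.
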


The proofs are given in \autoref{appendix_proofs_of_lemmas}. Note that \autoref{lemma_conversion} implies that \eqref{LP_bigM} covers all possible values of $x_j^+$ and $x_j^-$ of \eqref{formulation_subset_mae} with the maximum total error bound $T_{max}$. Note also that $\hat{M}$ in \eqref{bigM_definition} is the maximum value out of all possible values of $x_j^+$ and $x_j^-$ that \eqref{LP_bigM} covers.

\begin{proposition}
\label{proposition_bigM}
For all regression models with $SAE$ less than or equal to $T_{\max}$, 
$\hat{M}$ in \eqref{bigM_definition} is a valid upper bound for $x_j^+$'s and $x_j^-$'s in \eqref{formulation_subset_mae}.
\end{proposition}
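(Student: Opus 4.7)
My plan is to reduce the claim to an instance of LP optimality: for any regression model with $SAE \le T_{\max}$, the coordinate where its coefficients attain the largest magnitude is captured by the LP~\eqref{LP_bigM_x_k_plus} that defines the corresponding $\hat M_{l^*}^{\pm}$, and the small $\varepsilon\mu$ penalty in the objective then lets me squeeze that coordinate below $\hat M_{l^*}^{\pm}$.

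Concretely, I fix an arbitrary regression model $(x,y,t)$ with $\sum_{i \in I}|t_i| \le T_{\max}$ and split each of $x_j$, $y$, and $t_i$ into its nonnegative parts in the usual complementary way. Setting $\bar\mu := \max_{l \in J}\max(\bar x_l^+, \bar x_l^-)$ and picking an index $l^* \in J$ together with a sign $s \in \{+,-\}$ that attains this maximum (the case $\bar\mu=0$ being trivial), the resulting vector $\bar Y = (\bar x^+, \bar x^-, \bar y^+, \bar y^-, \bar t^+, \bar t^-, \bar\mu)$ satisfies \eqref{LP_bigM_a}--\eqref{LP_bigM_e} by direct construction, and because $\bar x_{l^*}^{-s}=0$ by complementarity, $\bar Y$ is feasible for the particular instance of~\eqref{LP_bigM_x_k_plus} (or its minus counterpart) whose optimum defines $\hat M_{l^*}^{\,s}$. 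Comparing the LP objective at $\bar Y$ and at the LP optimum $(x^{+,*}, \mu^*)$, and using the feasibility bound $x_{l^*}^{+,*} \le \mu^* = \hat M_{l^*}^{+}$ from~\eqref{LP_bigM_c}, yields $\bar x_{l^*}^{+} \le (1-\varepsilon)\hat M_{l^*}^{+} + \varepsilon \bar\mu$; substituting $\bar x_{l^*}^{+} = \bar\mu$ and dividing by $1-\varepsilon>0$ collapses this to $\bar\mu \le \hat M_{l^*}^{+} \le \hat M$, and the same reasoning handles $s=-$. Since $\bar x_j^\pm \le \bar\mu$ for every $j$ by construction, the claim follows.

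I expect the main obstacle to be precisely the treatment of the $\varepsilon\mu$ term. At a generic feasible point of~\eqref{LP_bigM_x_k_plus} the constraint $x_k^+ \le \mu$ gives no bound in terms of $\hat M_k^+$, since $\mu$ may be inflated freely, so it is not enough simply to invoke LP feasibility. The argument only works because of the \emph{extremal} choice of $l^*$: once $\bar x_{l^*}^{\,s} = \bar\mu$, the $\varepsilon$-terms on the two sides of the optimality inequality can be combined, and $\varepsilon<1$ permits dividing them out. Without choosing the largest-magnitude index one is left with the weaker $\bar x_j^\pm \le (1-\varepsilon)\hat M_j^\pm + \varepsilon \bar\mu$, which still depends on $\bar\mu$ and yields no direct bound in terms of $\hat M$.
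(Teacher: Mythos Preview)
Your argument is correct. The extremal-index trick is exactly the right way to discharge the $\varepsilon\mu$ penalty: once $\bar x_{l^*}^{\,s}=\bar\mu$, comparing the LP objective at $\bar Y$ with its value at the optimizer and using $x_{l^*}^{+,*}\le\mu^*=\hat M_{l^*}^{+}$ gives $(1-\varepsilon)\bar\mu\le(1-\varepsilon)\hat M_{l^*}^{+}$, and dividing through finishes it. The case split on $s$ and the appeal to Proposition~\ref{proposition_boundedLP} (so that an optimizer exists) are both handled.

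The paper takes a closely related but less explicit route. It argues by contradiction: assuming some $\bar x_q^{+}>\hat M$, it invokes Lemma~\ref{lemma_conversion} to place the offending regression model inside the feasible set~\eqref{LP_bigM}, and then asserts $\bar x_q^{+}\le \hat M_q^{+}$, contradicting the definition of $\hat M$. The key inequality $\bar x_q^{+}\le \hat M_q^{+}$ is stated without unpacking the role of $\varepsilon$; your direct argument is precisely what is needed to justify that step, and it works only because one may assume $q$ is the index of largest magnitude (otherwise, as you note, one is stuck with $\bar x_j^{\pm}\le(1-\varepsilon)\hat M_j^{\pm}+\varepsilon\bar\mu$). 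So your proof and the paper's are morally the same LP-optimality argument, but yours is self-contained---it does not route through Lemma~\ref{lemma_conversion}---and it makes the dependence on the extremal choice of index and on $0<\varepsilon<1$ fully explicit.
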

\begin{proof}
For a contradiction, suppose that $\hat{M}$ is not a valid upper bound for $x_j$'s in \eqref{formulation_subset_mae}. That is, there exists a regression model $(\bar{x}^+, \bar{x}^-, \bar{y}^+, \bar{y}^-)$ with total error less than $T_{max}$ but $\bar{x}_q^+ > \hat{M}$, in which $\bar{x}_q^+$ is the coefficient for explanatory variable $q$. However, by \autoref{lemma_conversion}, we must have a corresponding feasible solution $\bar{Y} = (\bar{x}^+, \bar{x}^-, \bar{y}^+, \bar{y}^-, \bar{t}^+, \bar{t}^-, \hat{M})$ for \eqref{LP_bigM} with $\bar{x}_q^+ > \hat{M}$. Note that $\bar{Y}$ must satisfy $\bar{x}_q^+ \leq M_q^+$ from \eqref{LP_bigM_d}. Then, $M_q^+ \geq x_q^+ > \hat{M}$ implies $M_q^+ > \hat{M}$. This contradicts definition \eqref{bigM_definition}. A similar argument holds if $\bar{x}_q^- > \hat{M}$. Hence, $\hat{M}$ is a valid upper bound.
\end{proof}

Observe that a similar approach can be used to derive a valid value of $M$ for $x_j$'s in \eqref{formulation_subset_mse} for $MSE$. Calculating a valid value $M$ for $x_j$'s in \eqref{formulation_subset_mae} and \eqref{formulation_subset_mse} consists of solving $2m$ LPs and $2m$ quadratically constrained convex quadratic programs (QCP). Hence, we conclude that it can be obtained in polynomial time.

To reduce the computational time for the big $M$ calculation, we present an alternative approach that works for $MSE$ models from a different perspective.

Note that we can obtain coefficients of an optimal regression model that minimizes $SSE$ over all explanatory variables as $\hat{x} = (a^{\top} a)^{-1} a^{\top} b$, where $a \in \mathbb{R}^{n \times m}$ and $b \in \mathbb{R}^{n \times 1}$. This is equivalent to solving $Ax = B$, with $A = a^{\top} a \in \mathbb{R}^{m \times m}$ and $B = a^{\top} b \in \mathbb{R}^{m \times 1}$. For a rational number $r = \frac{r_{num}}{r_{den}}$ ($r_{num} \in \mathbb{Z}$, $r_{den} \in \mathbb{N}$, $r_{num}$ and $r_{den}$ relative prime numbers), a rational vector $B = [\beta_1,\cdots,\beta_m]$, and a rational matrix $A = [\alpha_{ij}]_{i=1,\cdots,m, j=1,\cdots,m}$, let us define 
\begin{enumerate}[noitemsep]
\item[] \textit{size}($r$) := $1 + \lceil \log_2 (|r_{num}|+1) \rceil + \lceil \log_2 (r_{den}+1) \rceil$
\item[] \textit{size}($B$) := $\sum_{i \in I} $\textit{size}($\beta_i$)
\item[] \textit{size}($A$) := $m^2 + \sum_{i \in I} \sum_{j \in J}$\textit{size}($\alpha_{ij}$).
\end{enumerate}
Note that it is known that the size of solutions to $Ax = B$ are bounded. Here, we extend this over the various submatrices of $A$ and subvectors of $B$ encountered in our subset selection procedure. The following proposition provides a valid value of $M$.
\begin{proposition}
\label{proposition_bigM_MSE_alternative}
Value $M :=  2^{size(A)size(B) -1}$ is a valid upper bound for $x_j^+$'s and $x_j^-$'s in \eqref{formulation_subset_mse}.
\end{proposition}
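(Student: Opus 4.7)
The plan is to reduce the bounding of the regression coefficients to a classical encoding-length bound for rational solutions of linear systems, using the fact that, for any fixed choice of the binary variables $z$, the $MSE$-minimizing coefficients are ordinary least-squares coefficients on the selected subset.

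First, I would show that at an optimal solution of \eqref{formulation_subset_mse} the active subset $S = \{j \in J : z_j = 1\}$ is determined by $z$, and the big-M constraints force $x_j^+ = x_j^- = 0$ for $j \notin S$. For the remaining variables, since $u$ is monotone in $\sum_{i \in I} (t_i^+ - t_i^-)^2$ via \eqref{formulation_subset_constraint_for_mse} and \autoref{proposition_mse_equality_at_opt}, the coefficients $(x_j)_{j \in S}$ together with $y$ must minimize the residual sum of squares on the columns indexed by $S$. Hence they satisfy a normal-equation system $A_S \hat{x}_S = B_S$, where $A_S$ and $B_S$ are (up to augmentation by a row and column corresponding to the intercept) submatrices and subvectors of $A = a^\top a$ and $B = a^\top b$. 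Without loss of generality one may take $x_j^+ = \max(\hat{x}_j, 0)$ and $x_j^- = \max(-\hat{x}_j, 0)$, so that $\max(x_j^+, x_j^-) = |\hat{x}_j|$, reducing the task to bounding $|\hat{x}_j|$.

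Second, I would invoke the standard bound on rational linear systems: by Cramer's rule each component of $\hat{x}_S$ is a ratio of determinants of submatrices of $[A_S \,|\, B_S]$, and by Hadamard's inequality these determinants are bounded in absolute value by a quantity expressible through the \textit{size} of their rational entries. Since restricting to a subset cannot increase size, $\text{size}(A_S) \leq \text{size}(A)$ and $\text{size}(B_S) \leq \text{size}(B)$, so the derived bound is uniform over all $2^m$ possible subsets. Combining these Hadamard/Cramer estimates with the \textit{size} definitions given just before the statement should yield $|\hat{x}_j| \leq 2^{\text{size}(A)\,\text{size}(B) - 1}$, which then dominates both $x_j^+$ and $x_j^-$.

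The main obstacle will be matching the specific multiplicative exponent $\text{size}(A)\,\text{size}(B) - 1$, as most textbook statements of these bounds produce additive rather than multiplicative expressions. This forces a careful accounting of the determinant estimates in terms of the precise \textit{size} function defined above the proposition, exploiting the fact that the right-hand-side vector enters only one column of the Cramer determinant. A secondary technicality is handling the intercept cleanly, either by augmenting $A$ with a row and column for the constant-one explanatory variable, or by first eliminating $y$ through standard centering and then bounding the remaining coefficients. \autoref{assumption_no_zero_total_error} ensures that the relevant square subsystem of $A_S$ is nonsingular so that Cramer's rule is applicable.
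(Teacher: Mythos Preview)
Your reduction is correct and matches the paper's: fix $z$, observe that the optimal $(x_j)_{j\in S}$ are the ordinary least-squares coefficients and hence solve the normal equations $A_S \hat{x}_S = B_S$, then use that restricting to a subset can only decrease $\text{size}(\cdot)$. The subset-monotonicity step you describe is exactly the paper's \autoref{lemma_size_subA}.

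The substantive difference is in how the bound on $|\hat{x}_j|$ is obtained. You propose to derive it from first principles via Cramer's rule and Hadamard's inequality, and you correctly flag the difficulty of landing on the multiplicative exponent $\text{size}(A)\,\text{size}(B)-1$ rather than an additive one. The paper avoids this entirely by invoking, as a black box, Corollary~3.2b of Schrijver (\emph{Theory of Linear and Integer Programming}), which states directly that a rational linear system $Ax=B$ has a solution with $\text{size}(\hat{x}) \le \text{size}(A)\cdot \text{size}(B)$. The paper then adds only the elementary observation $|r| \le 2^{\text{size}(r)-1}$ (its \autoref{lemma_bound_on_rational_number}) and the inequality $\text{size}(\hat{x}_j) \le \text{size}(\hat{x})$ to conclude. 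So the ``careful accounting'' you anticipate is unnecessary once Schrijver's corollary is cited; your Cramer--Hadamard route is essentially a reproof of that corollary and buys nothing extra here.

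Your secondary remark about the intercept is apt: the paper defines $A=a^\top a$ and $B=a^\top b$ without an intercept column and does not revisit this point, so neither proof is fully explicit on that issue.
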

The proof of \autoref{proposition_bigM_MSE_alternative} and the omitted detailed derivations are available in Section 1 of the online supplement. Observe that $size(A)$ and $size(B)$ can be calculated in polynomial time. In detail, it takes $O(mnh)$ in which $h$ is the number of digits of the largest absolute number among all elements of $A$ and $B$ to compute $M$. Recall that the previous approach requires to solve $2m$ QCPs. Hence, we have an alternative polynomial time big $M$ calculation procedure which is computationally more efficient than the one provided by \autoref{proposition_bigM}. However, this procedure yields a larger value of $M$.

\subsubsection{Valid Inequalities}
\label{REG_subsection_VI}

To accelerate the computation, we apply several valid inequalities at the root node of the branch and bound algorithm. Let $u^{heur}$ and $\bar{u}$ be the objective function values of a heuristic and the LP relaxation, respectively. Let $\beta_j^0$ ($\beta_j^1$) be the objective function value of the LP relaxation of \eqref{formulation_subset_mae} after fixing $z_j = 0$ ($z_j = 1$). Then, the following inequalities are valid for \eqref{formulation_subset_mae}:
\begin{align}
& v_j \leq u^{heur} z_j, & \quad  j \in J \label{vi_cut0}\\[-3pt]
& v_j \geq \bar{u} z_j, & \quad  j \in J \label{vi_cut1}\\[-3pt]
& u \geq (\beta_j^1- \beta_j^0)z_j + \beta_j^0 & \quad  j \in J \label{vi_cut2}
\end{align}
We do not provide proofs as it is trivial to establish their validity. In \autoref{fig:examplevi}, we illustrate the valid inequalities. In both figures, the dark and light-shaded areas represent the feasible and infeasible region, respectively, after applying the valid inequalities, whereas the combined area represents the original feasible region of the formulation. In Figure \ref{fig:vi1}, valid inequalities \eqref{vi_cut0} and \eqref{vi_cut1} are presented. Value $u^*$ is the optimal objective function value. In Figure \ref{fig:vi2}, $\bar{u}$ is the objective function value of the LP relaxation with non-integer $z_j$ before applying \eqref{vi_cut2}. The black circles represent $(0,\beta_j^0)$ and $(1,\beta_j^1)$ that give valid lower bounds for any integer solution. Observe that integer feasible solutions (empty rectangles in the figure) are in the feasible region after applying the valid inequality.

\begin{figure}[ht]
\begin{center}
		\subfigure[Valid inequalities \eqref{vi_cut0} and \eqref{vi_cut1}]{%
           \includegraphics[scale=0.4]{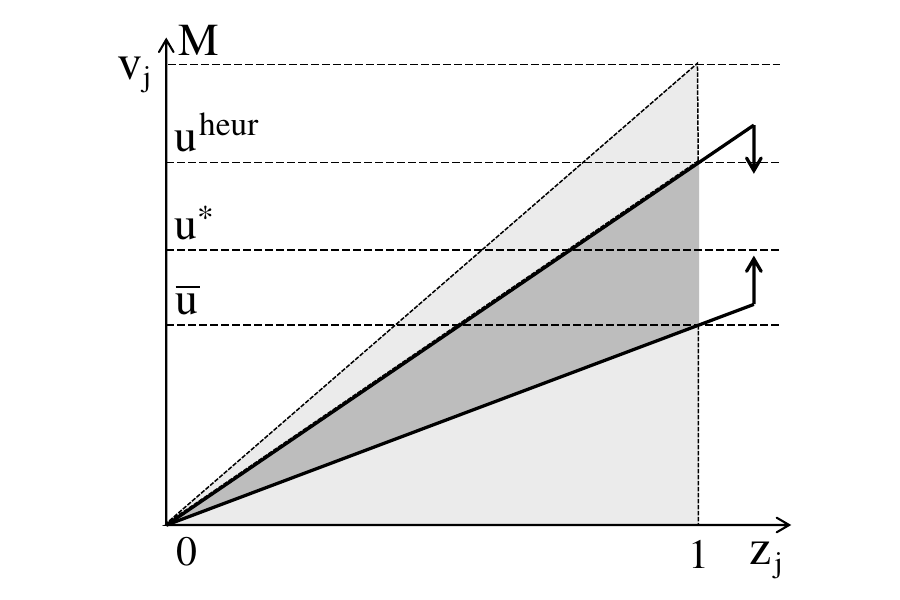} \label{fig:vi1}
        }\quad
        \subfigure[Valid inequality \eqref{vi_cut2}]{%
           \includegraphics[scale=0.4]{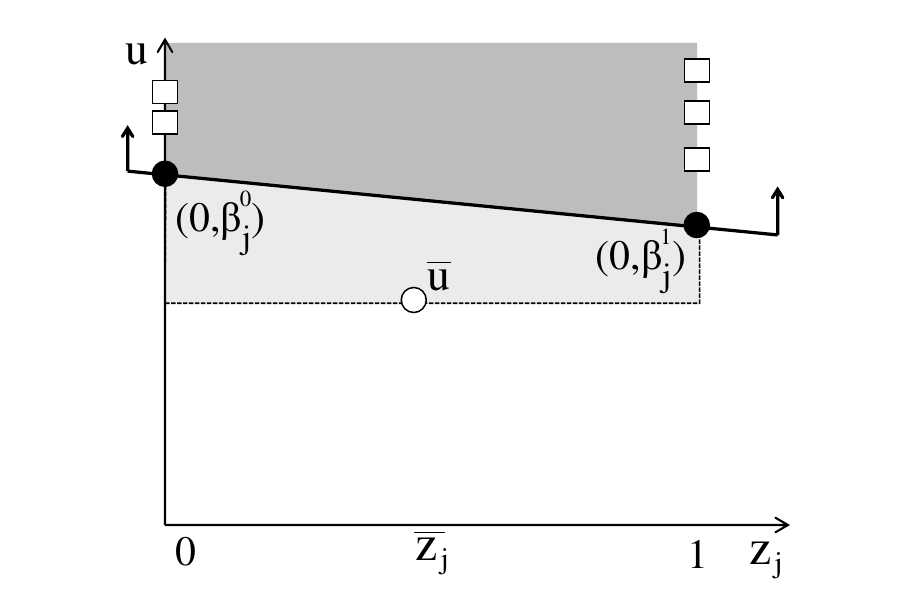} \label{fig:vi2}
        }
\end{center}      
\caption{Illustration of the valid inequalities }\label{fig:examplevi}
\end{figure}

Note that \eqref{vi_cut0} can be generated given an objective value of any feasible solution. For $MAE$, generating \eqref{vi_cut1} and \eqref{vi_cut2} requires solving one LP and two LPs, respectively, for each $j \in J$. For $MSE$, generating \eqref{vi_cut1} and \eqref{vi_cut2} requires solving one QCP and two QCPs, respectively.

\subsection{Minimal-Redundancy-Maximal-Relevance}
\label{REG_subsection_formulation_mrmr}

Given $a$ and $b$, the mRMR criterion can be modeled as the following optimization problem:
\begin{equation}
\max_{S} \frac{1}{|S|} \sum_{j \in S} \rho_j - \frac{1}{|S|^2} \sum_{j,k \in S} \rho_{jk}. \label{mrmr_def}
\end{equation}

In this section, we assume that we want to find set $S$ with $|S| = p$, which is different from the previous treatment, i.e., here $p$ is fixed. Using the binary variables $z_j$ previously defined, \eqref{mrmr_def} can be written as
\begin{equation}
\max \Big\{ \frac{p \sum_{j \in J} \rho_j z_j - \sum_{j,k \in J} \rho_{jk}z_j z_k}{p^2} \Big| z \in \{0,1\}^m, \sum_{j \in J} z_j = p \Big\}.\label{mrmr_opt}
\end{equation}
By introducing new variable $z_{jk} \equiv z_j z_k$, \eqref{mrmr_opt} can be converted into an MIP as follows.
\begin{subequations}
\label{formualtion_IG_initial}
\begin{align}
\max \quad & \displaystyle \sum_{j \in J} \frac{\rho_j}{p}  z_j - \sum_{j,k \in J} \frac{\rho_{jk}}{p^2} z_{jk}\\
s.t.\quad & z_{jk} \geq z_j + z_k - 1 & j,k \in J\\
& \sum_{j \in J} z_j = p\\
& z_j \in \{0,1\}, z_{jk} \in \{0,1\}
\end{align}
\end{subequations}
This model is the first approach in the literature that guarantees global optimality for the mRMR criterion. However, if \eqref{formualtion_IG_initial} is solved approximately, it may not improve the solution of the greedy algorithm, which is often used in practice. Further, if the selected features by \eqref{formualtion_IG_initial} will be used for building a regression model, it is beneficial to consider the regression fit simultaneously by integrating the mRMR and the traditional error-based objectives. Hence, we propose to combine \eqref{formulation_subset_mae} and \eqref{formualtion_IG_initial} to optimizing SAE while guaranteeing good objective function values for \eqref{formualtion_IG_initial}. 

Let $\bar{\Omega}$ be the optimal objective function value of \eqref{mrmr_def}. With a fractional parameter $\lambda \in [0,1]$, the following constraint guarantees at most $\frac{\lambda}{100} \%$ away from $\bar{\Omega}$:
\begin{center}
$\frac{1}{|S|} \sum_{j \in S} \rho_j - \frac{1}{|S|^2} \sum_{j,k \in S} \rho_{jk} \geq \bar{\Omega} - \mbox{sign(}\bar{\Omega}) \cdot \lambda \cdot |\bar{\Omega}| .$
\end{center}
Regardless of the sign of $\bar{\Omega} $, the lower bound is smaller than $\bar{\Omega}$ by $\lambda \cdot \bar{\Omega}$. Combining the constraint with the MIP model optimizing SAE, the following MIP is obtained.
\begin{subequations}
\label{formulation_mae_mrmr}
\begin{align}
\min \quad & \textstyle \sum_{i \in I} t_i^+ + t_i^- \label{formulation_mae_mrmr_a} \\
s.t.\quad& \textstyle t_i^+ - t_i^- = \sum_{j \in J} a_{ij} (x_j^+ - x_j^-)  + y^+ - y^- - b_i, & i \in I , \label{formulation_mae_mrmr_b} \\
& \textstyle -M z_j \leq x_j \leq M z_j, & j \in J, \label{formulation_mae_mrmr_c}\\
& \sum_{j \in J} \frac{\rho_j}{p}  z_j - \sum_{j,k \in J} \frac{\rho_{jk}}{p^2} z_{jk} \geq \bar{\Omega} - \mbox{sign(}\bar{\Omega}) \cdot \lambda \cdot |\bar{\Omega}|, \label{formulation_mae_mrmr_e}\\
& z_{jk} \geq z_j + z_k - 1, & j,k \in J,\\
& \sum_{j \in J} z_j = p,\\
& \textstyle z_j \in \{0,1\}, z_{jk} \in \{0,1\}, t_i^+, t_i^-,x_j^+, x_j^-,y^+,y^- \geq 0 \label{formulation_mae_mrmr_d}
\end{align}
\end{subequations}
Note that \eqref{formulation_mae_mrmr} combines the mRMR feature selection and regression model building procedures, whereas \eqref{formualtion_IG_initial} provides pre-screening of explanatory variables for the regression model building step in the next stage. A similar model for SSE can be obtained by replacing \eqref{formulation_mae_mrmr_a} by $\sum_{i \in I} (t_i^+ + t_i^-)^2$, and \eqref{formulation_mae_mrmr} can be used for the fat case with trivial modifications.




\section{Mathematical Models and Algorithms for Fat Case \texorpdfstring{$(m \geq n)$}{(m >= n)}}
\label{section_many_variables}

Let us consider the fat case, in which there are more explanatory variables than observations. A natural extension of \eqref{formulation_subset_mae} or \eqref{formulation_subset_mse} for the fat case is to add cardinality constraint $\sum_{j \in J} z_j \leq n-2$. This constraint successfully selects a proper number of explanatory variables in many cases, however, we found that the objective $MAE$ and $MSE$ for the fat case could be problematic in some cases; the penalty on the number of explanatory variables by $MAE$ or $MSE$ is too weak (or strong) and the optimal solution selects $n-2$ (or 0) explanatory variables.

Minimizing $SAE$ can be thought as approximating the right-hand side (dependent values $b$) using a combination of columns (explanatory variables). If we have more linearly independent explanatory variables than observations, we can always build a regression model with $SAE = 0$. Hence, if we allow $p \geq n-1$, then the $MAE$ objective is not useful. Further, due to the definition of $MAE = \frac{SAE}{n-1-p}$, we must have $p \leq n-2$ in order to make the numerator positive.

Suppose we can select $n-2$ explanatory variables out of $m$ $(m> n-2)$ candidate explanatory variables. Because $SAE$ converges to zero as we add more linearly independent explanatory variables and because $p=n-2$ and $n$ are close to each other, $SAE$ can be near zero. In this case, having $n-2$ explanatory variables might not be penalized enough by the definition of $MAE$. This could make $p=n-2$ optimal and it actually happens in many instances studied in \autoref{REG_section_computational_experiment}, which is not a desired solution in most cases. Hence, even with the restriction $p \leq n-2$, $MAE$ may not be a useful criteria. In order to fix this issue, we use a slightly modified objective function by additionally penalizing having too many explanatory variables in the regression model:
\begin{equation}
\label{def_MAE_plus}
MAE_a = \frac{SAE + \frac{p}{n-2}mae_0}{n-1-p},
\end{equation}
where $mae_0 = \frac{\sum_{i \in I}|b_i - \bar{b}|}{n-1}$ is the mean absolute error of the optimal regression model with $p=0$. Observe that \eqref{def_MAE_plus} is equivalent to $MAE$ when $p=0$. The penalty term increases as $p$ increases. To optimize \eqref{def_MAE_plus}, \eqref{formulation_subset_mae} can be modified as 
\begin{equation}
\label{formulation_MAE_more_m}
\min \{u | \eqref{formulation_subset_mae_b} - \eqref{formulation_subset_mae_e}, \eqref{formulation_subset_mae_h} , \sum_{j \in J} z_j \leq n-2, v_j \leq u + \frac{mae_0}{n-2}, u + \frac{mae_0}{n-2} - M(1-z_j) \leq v_j \leq M z_j   \}.
\end{equation}
For the detailed derivations and modifications, please consider Appendix \ref{appendix_new_obj}. Finally, we remark that all algorithms proposed in this section can also optimize $MSE$ and $MAE$.

\subsection{Core Set Algorithm}

\noindent Observe that \eqref{formulation_MAE_more_m} might be difficult to solve optimally if the data is large because the number of binary variables increases as $m$ increases. To overcome this computational difficulty and get a quality solution quickly, we develop an iterative algorithm based on \eqref{formulation_MAE_more_m} and the popular core set concept in computer science and operations research \cite{Har-Peled:2011}.

Let $C$ be a subset of $J$ such that $|C| \leq n-2$, with the cardinality of $C$ defined by
\begin{equation}
\label{definition_Theta}
\Theta = |C| = \min \{ n \theta, n-2 \},
\end{equation}
where $0 < \theta < 1$ is a fraction that defines the target cardinality of $C$. We refer to $C$ as the \textit{core set} and iteratively solve
\begin{equation}
\label{formulation_core_MAE}
\min \{u | \eqref{formulation_subset_mae_b}, \eqref{formulation_subset_mae_c} - \eqref{formulation_subset_mae_e}, \eqref{formulation_subset_mae_h}, \eqref{new_constraint_MAE_f},\eqref{new_constraint_MAE_g} \}
\end{equation}
that is obtained by dropping the cardinality constraint \eqref{constraint_cardinality} from \eqref{formulation_MAE_more_m}. Hereafter, we assume that \eqref{formulation_core_MAE} is always solved with $C$ instead of $J$, with $|C| \leq n-2$ being ensured by \eqref{definition_Theta}.

We present the algorithmic framework in \autoref{algo_excessive_m} based on the core set concept. Let $S^*$ be the current best subset in \autoref{algo_excessive_m} with corresponding objective function value $mae_a^*$. In Steps \ref{algo_excessive_m_line1} - \ref{algo_excessive_m_line3}, we initialize core set $C$ with cardinality not exceeding $\Theta$. We solve \eqref{formulation_core_MAE} with $C$ in Step \ref{algo_excessive_m_line5} and then update $C$ in Step \ref{algo_excessive_m_line6}. We iterate these steps until there is no improvement of the objective function value from a previous iteration. We remark that the worst case run time of \autoref{algo_excessive_m} is exponential because \eqref{formulation_core_MAE} is solved by the branch-and-bound algorithm, which has exponential worse case run time, in each iteration. However, in practice, \autoref{algo_excessive_m} terminates quickly as shown in the experimental results in Section \ref{REG_section_computational_experiment}.

\begin{algorithm}[ht]
\caption{Core-Heuristic}        
\label{algo_excessive_m}                           
\begin{algorithmic}[1]   
\vspace{0.1cm}
\REQUIRE $\theta$ (core set factor)\\ 
\STATE $\Theta \gets \min \{ n \theta, n-2 \}$ \label{algo_excessive_m_line1}
\STATE $(S^*,mae_a^*) \gets$ stepwise heuristic with $J$ and constraint $p \leq \Theta$ \label{algo_excessive_m_line2}
\STATE $(S^*, mae_a^*, C, \Theta ) \gets $ \textit{Update-Core-Set}($S^*, mae_a^*, \Theta$) \label{algo_excessive_m_line3}
\WHILE{objective function value is improving} \label{algo_excessive_m_line4}
\STATE \quad $(S^*,mae_a^*) \gets$ solve  \eqref{formulation_core_MAE} with $C$ \label{algo_excessive_m_line5}
\STATE \quad $(S^*, mae_a^*, C, \Theta  ) \gets $ \textit{Update-Core-Set}($S^*, mae_a^*, \Theta$)  \label{algo_excessive_m_line6}
\ENDWHILE
\end{algorithmic}
\end{algorithm}

We next explain how the core set is updated. The updating algorithm is presented in \autoref{algo_update_coreset}. In Steps \ref{algo_update_coreset_line13} and \ref{algo_update_coreset_line14}, the idea is to keep the explanatory variables of the current best subset $S^*$ in the core set and additionally selecting explanatory variables not in $S^*$ based on scores $T_j$. The score is defined based on how much of the error could be reduced if we add explanatory variable $j$ to the current best subset $S^*$. In Steps \ref{algo_update_coreset_line1} - \ref{algo_update_coreset_line6}, we calculate $T_j$'s and $E_a^j$'s by checking neighboring subsets. Note that $T_j$'s can be calculated by LP formulation \eqref{formulation_SAE_opt}. In Steps \ref{algo_update_coreset_line7} - \ref{algo_update_coreset_line12}, we update the current best subset $S^*$ if we found a better solution in Steps \ref{algo_update_coreset_line1} - \ref{algo_update_coreset_line6}. If $S^*$ is updated, we go to Step \ref{algo_update_coreset_line1} and restart the algorithm with new $S^*$ and $\Theta$. Observe that $E_a^j$'s in Steps \ref{algo_update_coreset_line1} - \ref{algo_update_coreset_line3} are only for updating $S^*$ in Step \ref{algo_update_coreset_line8}, whereas $T_j$'s and $E_a^j$'s in Steps \ref{algo_update_coreset_line4} - \ref{algo_update_coreset_line6} are also used to define $B$ in Step \ref{algo_update_coreset_line13}.

\begin{algorithm}[ht]
\caption{Update-Core-Set}        
\label{algo_update_coreset}                           
\begin{algorithmic}[1]    
\REQUIRE $S^*$ (current best subset), $mae_a^*$ (current best obj value), $\Theta$ (core set cardinality) \\
\ENSURE $S^*$ (new current best subset), $mae_a^*$ (new current best obj value), $C$ (new core set), $\Theta$ (new core set cardinality)\\
\vspace{0.1cm}
\FOR{$j \in S^*$} \label{algo_update_coreset_line1}
\STATE $T_j \gets$ SAE of subset $S^* \setminus \{j\}$, $E_a^j \gets \frac{T_j + \frac{|S^*|-1}{n-2} mae_0}{n-1-|S^*|-1}$ \label{algo_update_coreset_line2}
\ENDFOR \label{algo_update_coreset_line3}
\FOR{$j \in J \setminus S^*$} \label{algo_update_coreset_line4}
\STATE $T_j \gets$ SAE of subset $S^* \cup \{j\}$, $E_a^j \gets \frac{T_j + \frac{|S^*|+1}{n-2} mae_0}{n-1-|S^*|+1}$ \label{algo_update_coreset_line5}
\ENDFOR \label{algo_update_coreset_line6}
\STATE \textbf{if} $\min_{j \in J} E_a^j < mae_a^*$ \label{algo_update_coreset_line7}
\STATE \quad update $S^*$ to $T_j$ that gives minimum $E_a^j$ value \label{algo_update_coreset_line8}
\STATE \quad \textbf{if} $|S^*| = \Theta$ \textbf{then} $\Theta \gets \min \{ \Theta +1, n-2 \}$ \label{algo_update_coreset_line9}
\STATE \quad $mae_a^* \gets \min_{j \in J} E_a^j$ \label{algo_update_coreset_line10}
\STATE \quad  go to Step \ref{algo_update_coreset_line1} \label{algo_update_coreset_line11}
\STATE \textbf{end if} \label{algo_update_coreset_line12}
\STATE $B \gets$ $\{ \Theta -|S^*|$ explanatory variables in $J \setminus S^*$ with smallest $T_j$'s $\}$ \label{algo_update_coreset_line13}
\STATE $C \gets S^* \cup B$ \label{algo_update_coreset_line14}
\end{algorithmic}
\end{algorithm}

Let us define the neighborhood of set $\bar{S}$ as
\begin{equation}
\mathcal{N}(\bar{S}) = \{ S \subset J | |S \bigtriangleup \bar{S}| \leq 1 \}, \label{def_neighbor}
\end{equation}
where $S \bigtriangleup \bar{S}$ defines the symmetric difference of $S$ and $\bar{S}$. Through the following propositions, we show that \autoref{algo_excessive_m} does not cycle and terminates with a local optimal solution based on the neighborhood definition given in \eqref{def_neighbor}.

\begin{proposition}
\autoref{algo_excessive_m} does not cycle.
\end{proposition}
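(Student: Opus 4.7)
The plan is to establish two facts that together rule out cycling: (i) every update to the current incumbent $(S^*, mae_a^*)$ performed anywhere in \autoref{algo_excessive_m} or \autoref{algo_update_coreset} strictly decreases $mae_a^*$, and (ii) each admissible subset $S \subseteq J$ with $|S| \leq n-2$ corresponds to a unique objective value (the optimum of \eqref{formulation_SAE_opt} restricted to $S$, plus the deterministic $MAE_a$ adjustment). Combined with the finiteness of the collection of admissible subsets, these facts imply that no value of $S^*$ can be visited twice, which precludes cycling.

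First I would analyze the inner loop inside \textit{Update-Core-Set}. The only back-edge in that routine is the ``go to Step \ref{algo_update_coreset_line1}'' instruction in Step \ref{algo_update_coreset_line11}, and it is guarded by the strict test $\min_{j\in J} E_a^j < mae_a^*$ in Step \ref{algo_update_coreset_line7}. Each time the branch is taken, $mae_a^*$ is overwritten by a strictly smaller value in Step \ref{algo_update_coreset_line10}, so the sequence of $mae_a^*$ values produced by \textit{Update-Core-Set} is strictly decreasing; since the set of feasible $S^* \subseteq J$ with $|S^*| \leq n-2$ is finite, the inner loop terminates. Next I would turn to the main \textbf{while} loop of \autoref{algo_excessive_m}: the loop continues in Step \ref{algo_excessive_m_line4} only when the objective strictly improved during the previous pass. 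In any continuation pass, either the MIP solved in Step \ref{algo_excessive_m_line5} returns a new incumbent with strictly smaller $mae_a^*$, or the subsequent call to \textit{Update-Core-Set} in Step \ref{algo_excessive_m_line6} does (both update rules again employ strict inequality). Thus $mae_a^*$ strictly decreases at every continuation of the outer loop as well.

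Concatenating these observations, the full sequence of incumbents $(S^*_0, mae_{a,0}^*), (S^*_1, mae_{a,1}^*), \ldots$ produced over the entire execution satisfies $mae_{a,0}^* > mae_{a,1}^* > \cdots$. Because the objective $mae_a^*$ associated with a subset $S$ is uniquely determined by $S$, strict monotonicity forces all $S^*_k$ to be distinct, so no subset can ever recur. Since the number of subsets of $J$ with cardinality at most $n-2$ is finite, the sequence must be finite, which precisely means the algorithm does not cycle. I anticipate the only subtlety is making sure both update branches (the MIP step and the \textit{Update-Core-Set} step) are genuinely guarded by strict inequality; once that is observed, the rest is a clean finite-descent argument and the proof should be short.
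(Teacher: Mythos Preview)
Your proposal is correct and follows essentially the same finite-descent argument as the paper: strict decrease of the incumbent objective value together with finiteness of the admissible subsets rules out cycling. The paper packages this into two lemmas (Lemma~\ref{lemma_coreset_algo_decreasing_obj} on strict decrease of $mae_a^*$ per iteration, and Lemma~\ref{lemma_never_same_core_set} on non-repetition of core sets), whereas you read the strict decrease directly off the \textbf{while}-loop guard and track the incumbent $S^*$ rather than the core set $C$; you also explicitly verify termination of the inner \textit{Update-Core-Set} loop, which the paper leaves implicit.
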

\noindent For the proof, see Lemmas OS 6 and 7 (OS stands for online supplement), which guarantee that there is no cycle in the loop of \autoref{algo_excessive_m}.
\begin{proposition}
\autoref{algo_excessive_m} gives a local optimum.
\end{proposition}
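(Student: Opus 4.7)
The plan is to show that whenever Algorithm~\ref{algo_excessive_m} halts, the current best subset $S^*$ is a local optimum with respect to the neighborhood $\mathcal{N}(S^*)$ defined in \eqref{def_neighbor}. Since $\mathcal{N}(S^*) \setminus \{S^*\}$ consists exactly of the single-element additions $S^* \cup \{j\}$ with $j \in J \setminus S^*$ and single-element deletions $S^* \setminus \{j\}$ with $j \in S^*$, the proof reduces to checking that no such move strictly decreases $MAE_a$.

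The core of the argument is a lemma stating that every call to Algorithm~\ref{algo_update_coreset} eventually exits through Line~\ref{algo_update_coreset_line14} and, when it does, returns a locally optimal $S^*$. First I would verify, by direct substitution into the definition \eqref{def_MAE_plus}, that the quantities $E_a^j$ computed in Lines~\ref{algo_update_coreset_line2} and~\ref{algo_update_coreset_line5} are precisely $MAE_a(S^* \setminus \{j\})$ and $MAE_a(S^* \cup \{j\})$, respectively, so that $\{E_a^j : j \in J\}$ enumerates the objective values of every element of $\mathcal{N}(S^*) \setminus \{S^*\}$. The negation of the test in Line~\ref{algo_update_coreset_line7} is therefore exactly the statement that $S^*$ is a local optimum, so whenever Update-Core-Set reaches Line~\ref{algo_update_coreset_line13}, local optimality has been established.

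Next I would show that Update-Core-Set cannot loop forever: each return to Line~\ref{algo_update_coreset_line1} via Line~\ref{algo_update_coreset_line11} is triggered only after $mae_a^*$ is strictly decreased in Line~\ref{algo_update_coreset_line10}, and each intermediate incumbent $S^*$ is one of the finitely many subsets of $J$, each associated with a unique $MAE_a$ value. Since the sequence of incumbent objective values is strictly decreasing it cannot repeat, so only finitely many restarts are possible and every call must exit through Line~\ref{algo_update_coreset_line14}.

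Finally, I would tie these facts back to Algorithm~\ref{algo_excessive_m}. Whether the while loop terminates immediately after the preparatory Line~\ref{algo_excessive_m_line3} or after some number of iterations of Lines~\ref{algo_excessive_m_line5}--\ref{algo_excessive_m_line6}, the last control operation carried out before the algorithm returns is a completed call to Update-Core-Set. The lemma therefore applies to the returned $S^*$, yielding local optimality. The main subtlety I expect is simply bookkeeping: the MIP solve in Line~\ref{algo_excessive_m_line5} may reset $S^*$ to something that Update-Core-Set had not just certified, but because the subsequent Line~\ref{algo_excessive_m_line6} invocation of Update-Core-Set still terminates with a locally optimal $S^*$ by the lemma, no additional argument is needed on top of the two facts above.
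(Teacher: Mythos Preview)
Your proposal is correct and follows essentially the same approach as the paper: the paper's proof consists of the single observation that upon termination of Algorithm~\ref{algo_excessive_m}, Steps~\ref{algo_update_coreset_line1}--\ref{algo_update_coreset_line6} of the final call to Algorithm~\ref{algo_update_coreset} have evaluated every member of $\mathcal{N}(S^*)\setminus\{S^*\}$ and the failed test in Line~\ref{algo_update_coreset_line7} is exactly the statement of local optimality. Your write-up is more careful in two respects---you explicitly verify that the $E_a^j$ are the $MAE_a$ values of the neighbors and you argue separately that the inner loop of Algorithm~\ref{algo_update_coreset} terminates (the paper relegates non-cycling to the preceding proposition and does not spell out the inner-loop case)---but these are elaborations of the same argument rather than a different route.
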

\begin{proof}
When \autoref{algo_excessive_m} terminates, all subsets that are neighbors to $S^*$, defined by \eqref{def_neighbor}, are evaluated in Steps \ref{algo_update_coreset_line1} - \ref{algo_update_coreset_line6} of \autoref{algo_update_coreset}, but there is no better solution than $S^*$. Hence, \autoref{algo_excessive_m} gives a local optimum.
\end{proof}

\subsection{Randomized Core Set Algorithm}

We also present a randomized version of \autoref{algo_excessive_m}, which we call \textit{Core-Random}. By constructing a core set randomly and by executing the while loop of \autoref{algo_excessive_m} infinitely many times, we show that we can find a global optimal solution with probability 1 when $\theta = 1$. The randomized version of \textit{Update-Core-Set} is presented in \autoref{algo_update_coreset_random}. \textit{Update-Core-Set-Random} is similar to \textit{Update-Core-Set}, with one difference. Instead of the greedy approach in Steps \ref{algo_update_coreset_line13}-\ref{algo_update_coreset_line14} of \autoref{algo_update_coreset}, we randomly choose $n-2$ explanatory variables one-by-one without replacement based on a probability distribution.

\begin{algorithm}[ht]
\caption{Update-Core-Set-Random}        
\label{algo_update_coreset_random}                           
\begin{algorithmic}[1]    
\REQUIRE $S^*$ (current best subset), $mae_a^*$ (current best obj value), $\Theta$ (core set cardinality) \\
\ENSURE $S^*$ (new current best subset), $mae_a^*$ (new current best obj value), $C$ (new core set), $\Theta$ (new core set cardinality)\\
\vspace{0.1cm}

\STATE Steps \ref{algo_update_coreset_line1} - \ref{algo_update_coreset_line12} of \autoref{algo_update_coreset} \label{algo_update_coreset_random_line1}
\STATE Define initial probabilities based on \eqref{def_selection_prob} \label{algo_update_coreset_random_line2}
\STATE $C \gets \emptyset$, $\bar{J} \gets J$ \label{algo_update_coreset_random_line3}
\STATE \textbf{while} $|C| < \Theta$ \label{algo_update_coreset_random_line4}
\STATE \quad Select explanatory variable $k$ in $\bar{J}$ based on generalized Bernoulli with probabilities $p_j$ \label{algo_update_coreset_random_line5}
\STATE \quad $C \gets C \cup \{ k \}$, $\bar{J} \gets \bar{J} \setminus \{k \}$, renormalize $p_j$'s based on \eqref{def_prob_renormalize} \label{algo_update_coreset_random_line6}
\STATE \textbf{end-while} \label{algo_update_coreset_random_line7}
\end{algorithmic}
\end{algorithm}

Let us next describe the initial probability distribution used in Step \ref{algo_update_coreset_random_line2} of \autoref{algo_update_coreset_random}. Let $U_j$ be the current best objective function value whenever explanatory variable $j$ is included in the regression model. We update $U_j$'s at each iteration throughout the entire algorithm. In detail, we set $U_j := mae_a^*$ for $j \in S^*$ whenever current best objective function value $mae_a^*$ and subset $S^*$ are updated. In order to enhance the local optimal search, we give a bonus to the columns currently in $S^*$ by setting weight $w_j = 0.5$ if $j \in S^*$ and $w_j = 1$ if $j \in J \setminus S^*$. Observe that giving the same weight for all $j \in J$ is equivalent to a random search. On the other hand, if the weight for $S^*$ is much smaller (hence much greater selection probability) than the weight for $j \in J \setminus S^*$, then we are likely to choose all variables in  $S^*$, which is similar to \autoref{algo_update_coreset}. By means of a computational experiment, we found out that giving twice more weights for $j \in J \setminus S^*$ compared to $j \in S^*$ balances exploration and exploitation.

We normalize $U_j$'s and generate $\bar{U}_j$'s so that $\min_{j \in J} \bar{U}_j = -0.5$ and $\max_{j \in J} \bar{U}_j = 0.5$. In detail,
\begin{equation}
\label{definition_u_j}
\bar{U}_j = \frac{w_j U_j - \bar{U}_{mid}}{\bar{U}_{max} - \bar{U}_{min}} \quad \mbox{ for } j \in J,
\end{equation}
where $\bar{U}_{min} = \min_{j \in J} w_j U_j$, $\bar{U}_{max} =\max_{j \in J} w_j U_j$, and $\bar{U}_{mid} = (\bar{U}_{max}-\bar{U}_{min})/2$. Finally, we define probabilities using the exponential function
\begin{equation}
q_j = \frac{e^{-\bar{U}_j}}{\sum_{j \in J} e^{-\bar{U}_j}} \quad \mbox{ for } j \in J. \label{def_selection_prob}
\end{equation}
From definitions \eqref{definition_u_j} and \eqref{def_selection_prob}, we have the following characteristic of $q_j$'s.
\begin{lemma}
\label{lemma_min_max_prob_ratio}
We have $\displaystyle \frac{ \max_{j \in J} q_j}{\min_{j \in J} q_j} \leq 2.72$ for any values of $q_j$'s.
\end{lemma}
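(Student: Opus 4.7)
The plan is to show that the normalization in \eqref{definition_u_j} forces every $\bar{U}_j$ to lie in the interval $[-1/2, 1/2]$, after which the bound on the ratio follows immediately from the monotonicity of the exponential.

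First I would verify the range of $\bar{U}_j$ directly. Using $\bar{U}_{mid}$ as defined (interpreted as the midpoint $(\bar{U}_{max}+\bar{U}_{min})/2$, so that the normalization does produce a symmetric range), for any index $j$ the numerator $w_j U_j - \bar{U}_{mid}$ takes its extreme values exactly when $w_j U_j$ equals $\bar{U}_{min}$ or $\bar{U}_{max}$, giving $\pm(\bar{U}_{max}-\bar{U}_{min})/2$. Dividing by $\bar{U}_{max}-\bar{U}_{min}$ then yields $\bar{U}_j \in [-1/2, 1/2]$ for every $j \in J$.

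Next I would apply this to the definition \eqref{def_selection_prob}. Because the normalizing denominator $\sum_{k \in J} e^{-\bar{U}_k}$ is common to every $q_j$, it cancels in the ratio and we have
\begin{equation*}
\frac{\max_{j \in J} q_j}{\min_{j \in J} q_j} \;=\; \frac{\max_{j \in J} e^{-\bar{U}_j}}{\min_{j \in J} e^{-\bar{U}_j}} \;=\; \frac{e^{-\min_{j \in J} \bar{U}_j}}{e^{-\max_{j \in J} \bar{U}_j}}.
\end{equation*}
Using the bounds $-\min_j \bar{U}_j \leq 1/2$ and $-\max_j \bar{U}_j \geq -1/2$, the right-hand side is at most $e^{1/2}/e^{-1/2} = e \approx 2.71828 \leq 2.72$, which is the required inequality.

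The only potentially delicate point is the reading of $\bar{U}_{mid}$; I would note explicitly that for the normalization to send $[\bar{U}_{min}, \bar{U}_{max}]$ symmetrically onto $[-1/2, 1/2]$ (as the statement preceding \eqref{definition_u_j} asserts) one takes $\bar{U}_{mid}$ to be the midpoint of $\bar{U}_{min}$ and $\bar{U}_{max}$. No other step requires any real work, since once the range of $\bar{U}_j$ is pinned down the bound on the ratio is purely a property of the exponential on an interval of length $1$.
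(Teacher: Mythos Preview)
Your proposal is correct and matches the paper's own argument essentially line for line: the paper simply recalls that $\min_j \bar U_j=-0.5$ and $\max_j \bar U_j=0.5$, cancels the common denominator in the $q_j$'s, and evaluates $e^{0.5}/e^{-0.5}\le 2.72$. Your additional remark that $\bar U_{mid}$ must be read as the midpoint $(\bar U_{max}+\bar U_{min})/2$ for the stated normalization to hold is a valid clarification of what appears to be a typo in \eqref{definition_u_j}.
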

\noindent The proof is available in Section 2 of the online supplement. By the lemma, we know that the best explanatory variables in $S^*$ has at most $2.72$ times higher chance than the worst explanatory variable to be picked. Observe that, once we select an explanatory variable in Step \ref{algo_update_coreset_random_line5}, we need to exclude the selected explanatory variable in the next selection iteration. This can be thought as sampling without replacement. Let $\bar{J}$ be the set of explanatory variables that have not been selected in the previous selection iterations. In Step \ref{algo_update_coreset_random_line6}, we add explanatory variable $k$ to the core set and exclude it from $\bar{J}$. Then, we normalize the probability distribution based on
\begin{equation}
\label{def_prob_renormalize}
q_j = \frac{q_j}{\sum_{j \in \bar{J}}q_j} \quad \mbox{ for } j \in \bar{J}
\end{equation}
so that we only consider variables that have not been picked and the corresponding probabilities sum to 1. It is easy to see that $q_j$'s after normalization by \eqref{def_prob_renormalize} are strictly greater than $q_j$'s before normalization. Note also that $q_j$'s in \eqref{def_prob_renormalize} also satisfy \autoref{lemma_min_max_prob_ratio}, since in \eqref{def_prob_renormalize} we are multiplying them by a constant.

Now we are ready to show that \textit{Core-Random} with $\theta =1$ finds a global optimal solution with probability 1. We first precisely review how \textit{Core-Random} proceeds and define a detailed notation for the analysis. In iteration $t$, the following steps are performed.

\begin{enumerate}[noitemsep]
\item We solve \eqref{formulation_core_MAE} with $C$ in \textit{Core-Random} and obtain $S^*$. Note that the core set is from the previous iteration. Hence, we denote the core set as $C_{t-1}$.
\item In Step \ref{algo_update_coreset_random_line1} of \textit{Update-Core-Set-Random}, we check the neighborhood of $S^*$ obtained from \eqref{formulation_core_MAE} and update $S^*$ if applicable.
\item After Step \ref{algo_update_coreset_random_line1} of \textit{Update-Core-Set-Random}, we obtain $q_j$'s from \eqref{def_selection_prob}. Let $q_j^{(t)}$ be the initial probability, defined in \eqref{def_selection_prob}, used to construct the core set in iteration $t$.
\item In Step \ref{algo_update_coreset_random_line2} of \textit{Update-Core-Set-Random}, we construct core set $C_t$ based on $q_j^{(t)}$'s. Note that $C_t$ is used in iteration $t+1$ to solve \eqref{formulation_core_MAE}.
\end{enumerate}
Let $S^{opt}$ be an optimal subset. If $S^{opt} \subset C_t$ for a core set $C_t$, then we can find a global optimal solution by solving \eqref{formulation_core_MAE} in iteration $t+1$. We first derive a lower bound of the probability for the event $S_{opt} \subset C_t$ given any previous iterations.

\begin{lemma}
\label{lemma_prob_opt_set}
Let $\mathcal{H}_{t-1}$ be the set that includes any collection of the events that have happened prior to iteration $t$. Then, we have $$P[ S^{opt} \subset C_t |\mathcal{H}_{t-1} ]  \geq  \Big(\frac{1}{1+2.72(m-1)} \Big)^{\Theta}.$$
\end{lemma}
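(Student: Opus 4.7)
My plan is to bound $P[S^{opt} \subset C_t \mid \mathcal{H}_{t-1}]$ from below by the probability of a sufficient, more restrictive event: that the first $|S^{opt}|$ draws of \textit{Update-Core-Set-Random} produce, in some fixed order, exactly the variables of $S^{opt}$. Conditioning on $\mathcal{H}_{t-1}$ freezes the current best $S^*$, the running $U_j$'s, and therefore the initial probability vector $q^{(t)}$ entering iteration $t$, so the sampling in Steps \ref{algo_update_coreset_random_line3}--\ref{algo_update_coreset_random_line7} becomes a without-replacement draw from a fully specified distribution.

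The workhorse step is a uniform lower bound on each one-step conditional probability. By \autoref{lemma_min_max_prob_ratio}, the starting distribution satisfies $\max_j q_j^{(t)}/\min_j q_j^{(t)} \leq 2.72$. The renormalization \eqref{def_prob_renormalize} divides every remaining entry by the same positive constant, so on any remaining-variable subset $\bar{J}$ encountered during the draws we still have $\max_{j \in \bar{J}} q_j \leq 2.72 \min_{j \in \bar{J}} q_j$. Combining $\sum_{j \in \bar{J}} q_j = 1$ with $q_j \leq 2.72\, q_{j^*}$ for any designated $j^* \in \bar{J}$ gives $1 \leq q_{j^*}\bigl(1 + 2.72(|\bar{J}|-1)\bigr)$, hence
$$q_{j^*} \;\geq\; \frac{1}{1 + 2.72(|\bar{J}|-1)} \;\geq\; \frac{1}{1 + 2.72(m-1)},$$
where the second inequality uses $|\bar{J}| \leq m$ and the fact that the bound is increasing as $|\bar{J}|$ shrinks.

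Chaining these bounds via the multiplication rule across the first $|S^{opt}|$ draws yields
$$P[S^{opt} \subset C_t \mid \mathcal{H}_{t-1}] \;\geq\; \left(\frac{1}{1 + 2.72(m-1)}\right)^{|S^{opt}|}.$$
Because $|S^{opt}| \leq n-2$ and $\Theta \geq |S^{opt}|$ (the optimal subset satisfies the same cardinality bound that defines $\Theta$ in \eqref{definition_Theta} when $\theta = 1$; more generally, if $|S^{opt}| > \Theta$ the event $S^{opt} \subset C_t$ is impossible and the statement is vacuous only if we require $\Theta \geq |S^{opt}|$, which is ensured once the algorithm has expanded $\Theta$ via Step \ref{algo_update_coreset_line9} of \textit{Update-Core-Set}), and since the base lies in $(0,1)$, the right-hand side is at least $(1/(1+2.72(m-1)))^{\Theta}$, completing the bound.

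The one subtlety I anticipate is justifying the chain-rule multiplication under the conditioning on $\mathcal{H}_{t-1}$: \autoref{lemma_min_max_prob_ratio} controls the initial vector $q^{(t)}$ but must be propagated through successive conditional distributions. This is exactly where the structural observation that renormalization is multiplication by a scalar pays off, since it preserves both the ratio bound and the sum-to-one property on every remaining $\bar{J}$. Once that propagation is noted, the proof reduces to a clean product of $|S^{opt}|$ identical lower bounds and a monotonicity argument in the exponent.
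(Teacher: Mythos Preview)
Your argument is correct and essentially mirrors the paper's: both derive the per-draw lower bound $q_{j^*}\ge 1/(1+2.72(m-1))$ from \autoref{lemma_min_max_prob_ratio} (the paper packages this as a separate auxiliary lemma) and propagate it through the without-replacement sampling via the chain rule. The only tactical difference is that you stop after the first $|S^{opt}|$ draws and then relax the exponent from $|S^{opt}|$ to $\Theta$ by monotonicity of $x\mapsto c^x$ for $c\in(0,1)$, whereas the paper bounds all $\Theta$ draws at once and absorbs the slack with the factor $\binom{m-|S^{opt}|}{\Theta-|S^{opt}|}\ge 1$; your route is slightly cleaner and avoids that combinatorial bookkeeping. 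Your hedging about $|S^{opt}|\le \Theta$ is the one soft spot, but the paper leaves this implicit as well, and in the intended application ($\theta=1$, hence $\Theta=n-2\ge |S^{opt}|$) it holds automatically.
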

The proof is available in Section 2 of the online supplement. Let $mae_a^{opt}$ be the optimal objective function value of \eqref{formulation_MAE_more_m} over the entire $J$ and $mae_a(t)$ be the objective function value of the current best solution in iteration $t$ of \textit{Core-Random}, i.e., the objective value with respect to $S^*$. Let $A_t$ be the event $\{ S^{opt} \not\subset C_t \}$ in iteration $t$. For notational convenience, let 
\begin{center}
$\varphi = \Big(\frac{1}{1+2.72(m-1)} \Big)^{\Theta}$
\end{center}
be the lower bound for $P[ S^{opt} \subset C_t | \mathcal{H}_{t-1}]$ from \autoref{lemma_prob_opt_set}. Based on \autoref{lemma_prob_opt_set}, we present the following lemmas with the proofs given in Section 2 of the online supplement.

\begin{lemma}
\label{lemma_bound_no_opt_event}
We have $P \Big[\bigcap_{k=1}^t A_k \Big] \leq (1-\varphi)^t$ for any iteration $t$.
\end{lemma}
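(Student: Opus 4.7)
The plan is to prove the bound by induction on $t$, combining the chain rule for conditional probabilities with the uniform lower bound provided by \autoref{lemma_prob_opt_set}.

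First, I would translate \autoref{lemma_prob_opt_set} into a bound on $A_t$. Since $A_t = \{S^{opt} \not\subset C_t\}$ is the complement of $\{S^{opt} \subset C_t\}$, for any collection of past events $\mathcal{H}_{t-1}$ we have $P[A_t \mid \mathcal{H}_{t-1}] \leq 1 - \varphi$. The key observation is that the particular event $\bigcap_{k=1}^{t-1} A_k$ is itself a valid instance of the history $\mathcal{H}_{t-1}$ (it is a collection of events determined by what happened in iterations $1,\dots,t-1$), so the lemma applies with this specific conditioning.

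Next, I would apply the chain rule of conditional probability, writing
\begin{equation*}
P\Bigl[\bigcap_{k=1}^{t} A_k\Bigr] = P\Bigl[A_t \,\Big|\, \bigcap_{k=1}^{t-1} A_k\Bigr] \cdot P\Bigl[\bigcap_{k=1}^{t-1} A_k\Bigr].
\end{equation*}
By the previous step, the first factor is at most $1-\varphi$. The base case $t=1$ is a direct application of \autoref{lemma_prob_opt_set} with trivial history. The inductive step then gives $P[\bigcap_{k=1}^{t} A_k] \leq (1-\varphi) \cdot (1-\varphi)^{t-1} = (1-\varphi)^t$, completing the induction.

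I do not foresee a substantial obstacle: the only subtle point is the justification that conditioning on $\bigcap_{k=1}^{t-1} A_k$ is a permissible specialization of the generic history $\mathcal{H}_{t-1}$ in \autoref{lemma_prob_opt_set}. This follows because that lemma's bound holds uniformly over all possible realizations of the prior iterations, and the event $\bigcap_{k=1}^{t-1} A_k$ only restricts the sample space without revealing information not captured by $\mathcal{H}_{t-1}$. Hence the statement follows cleanly by induction.
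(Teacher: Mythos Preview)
Your proposal is correct and follows essentially the same approach as the paper: the paper unrolls the chain rule into a full product $P[A_t \mid \cap_{k<t} A_k]\cdots P[A_2\mid A_1]P[A_1]$ and bounds each factor by $1-\varphi$ via \autoref{lemma_prob_opt_set}, while you package the same computation as an induction on $t$. The subtle point you flag---that conditioning on $\bigcap_{k=1}^{t-1}A_k$ is a legitimate specialization of the generic history $\mathcal{H}_{t-1}$---is exactly the step the paper justifies by noting $\bigcap_{k=1}^{t-1}A_k \subseteq \mathcal{H}_{t-1}$.
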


\begin{lemma}
\label{lemma_lowerbound_finding_opt}
We have $P\Big[ mae_a(t) = mae_a^{opt}\Big] \geq 1- (1-\varphi)^t$ for any iteration $t$.
\end{lemma}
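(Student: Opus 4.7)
The plan is to couple the failure event $\{mae_a(t) \neq mae_a^{opt}\}$ to the joint event $\bigcap_{k=1}^{t} A_k$ controlled by \autoref{lemma_bound_no_opt_event}, using monotonicity of the incumbent and the fact that any core set containing $S^{opt}$ certifies optimality in the next MIP solve. Taking complements then yields the stated bound.

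First I would verify two structural facts about \textit{Core-Random}. (i) The incumbent value $mae_a(t)$ is non-increasing in $t$: the only update to $mae_a^*$ occurs in Step \ref{algo_update_coreset_line10} of \autoref{algo_update_coreset} (inherited by \autoref{algo_update_coreset_random} through Step \ref{algo_update_coreset_random_line1}) and in Step \ref{algo_excessive_m_line5} of \autoref{algo_excessive_m} via solving \eqref{formulation_core_MAE}, both of which only accept strictly improving solutions (or, in the MIP solve, return a solution no worse than the current $S^*$, since $S^*\subset C_{t-1}$ remains feasible for the core MIP once it has been placed in the core set). (ii) Whenever $S^{opt}\subset C_{k}$ for some $k$, the next iteration's MIP solve \eqref{formulation_core_MAE} with core set $C_k$ admits $S^{opt}$ as a feasible incumbent, so its optimal value is $\leq mae_a^{opt}$; combined with optimality of $mae_a^{opt}$ over the full index set $J$, the solve returns a subset attaining $mae_a^{opt}$, so Step \ref{algo_excessive_m_line5} sets $mae_a^*=mae_a^{opt}$.

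Combining (i) and (ii), the occurrence of the event $\{S^{opt}\subset C_k\}$ for any $k\in\{1,\dots,t\}$ forces $mae_a(t)=mae_a^{opt}$. Contrapositively,
\begin{equation*}
\{mae_a(t)\neq mae_a^{opt}\}\;\subset\;\bigcap_{k=1}^{t}\{S^{opt}\not\subset C_k\}\;=\;\bigcap_{k=1}^{t}A_k.
\end{equation*}
Applying \autoref{lemma_bound_no_opt_event} to the right-hand side gives
\begin{equation*}
P\bigl[mae_a(t)\neq mae_a^{opt}\bigr]\;\leq\;P\Bigl[\bigcap_{k=1}^{t}A_k\Bigr]\;\leq\;(1-\varphi)^{t},
\end{equation*}
and taking complements yields the desired inequality $P[mae_a(t)=mae_a^{opt}]\geq 1-(1-\varphi)^{t}$.

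The only delicate point is fact (ii): one must be careful that when $C_k$ contains $S^{opt}$, the core MIP \eqref{formulation_core_MAE} restricted to $C_k$ actually has the same global optimum as \eqref{formulation_MAE_more_m} over $J$, rather than a strictly worse objective. This follows because any subset $S\subset C_k$ is also a subset of $J$, so the core MIP's optimum is at least $mae_a^{opt}$, and feasibility of $S^{opt}$ gives equality. Once this is in place, the rest of the argument is a short containment-of-events computation, and no independence assumption is needed because \autoref{lemma_bound_no_opt_event} already handles the conditional bound through the filtration $\mathcal{H}_{t-1}$.
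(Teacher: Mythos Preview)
Your proposal is correct and follows essentially the same approach as the paper: both reduce the event $\{mae_a(t)=mae_a^{opt}\}$ to the complement of $\bigcap_{k=1}^{t}A_k$ and then invoke \autoref{lemma_bound_no_opt_event}. Your version is in fact slightly more careful, since the paper writes the first step as an equality $P[mae_a(t)=mae_a^{opt}]=P[\exists\,k\le t:\ S^{opt}\subset C_k]$, whereas only the containment (your direction) is needed and justified; your explicit verification of facts (i) and (ii) fills in exactly what the paper leaves implicit.
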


Finally, we show that \textit{Core-Random} finds a global optimal solution with probability 1 as iterations continue infinitely.

\begin{proposition}
We have $\lim_{t \rightarrow \infty} P \Big[  mae_a(t) = mae_a^{opt} \Big] = 1$.
\end{proposition}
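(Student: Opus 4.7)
The proposition follows almost immediately from Lemma \ref{lemma_lowerbound_finding_opt}, so the plan is essentially a squeeze argument on the inequality it already supplies. First I would observe that $\varphi = \bigl(\tfrac{1}{1+2.72(m-1)}\bigr)^{\Theta}$ is a strictly positive constant, since $m$ and $\Theta$ are finite positive integers. Consequently $0 \le 1 - \varphi < 1$, and therefore $(1-\varphi)^t \to 0$ as $t \to \infty$.

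Next I would combine this with the bound from Lemma \ref{lemma_lowerbound_finding_opt}, namely
\begin{equation*}
1 - (1-\varphi)^t \;\le\; P\bigl[mae_a(t) = mae_a^{opt}\bigr] \;\le\; 1,
\end{equation*}
where the upper bound is simply the fact that probabilities cannot exceed $1$. Taking $t \to \infty$ on both sides and applying the squeeze theorem yields the desired limit. (Implicit but worth noting in one sentence: the sequence $P[mae_a(t) = mae_a^{opt}]$ is in fact non-decreasing in $t$, because once the algorithm records an optimal objective value as $mae_a^*$, subsequent iterations never worsen it, so the limit is well-defined even as a monotone limit.)

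I do not expect any genuine obstacle here. All the substantive probabilistic work has already been carried out in Lemmas \ref{lemma_prob_opt_set}, \ref{lemma_bound_no_opt_event}, and \ref{lemma_lowerbound_finding_opt}; what remains is only to pass to the limit and to remark that $\varphi$ is a fixed positive quantity that does not depend on $t$. The only pitfall would be if $\varphi$ could degenerate to $0$ (for instance if $\Theta$ were allowed to grow with $t$), but in \textit{Core-Random} the core set cardinality $\Theta$ is fixed once $\theta$ is fixed, so $\varphi$ is bounded away from $0$ uniformly in $t$.
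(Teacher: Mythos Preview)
Your proposal is correct and follows essentially the same approach as the paper: both invoke Lemma~\ref{lemma_lowerbound_finding_opt}, note that $0<\varphi<1$ so $(1-\varphi)^t\to 0$, and conclude by squeezing against the trivial upper bound of $1$. Your added remarks on monotonicity and on $\varphi$ being independent of $t$ are helpful but not required for the argument.
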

\begin{proof}
Since $0 < \varphi < 1$ by the definition of $\varphi$, we have $\lim_{t \rightarrow \infty} (1-\varphi)^t = 0$. Using this result, we derive
\begin{center}
$\lim_{t \rightarrow \infty} P \Big[  mae_a(t) = mae_a^{opt} \Big] \geq \lim_{t \rightarrow \infty} 1- (1-\varphi)^t = 1$.
\end{center}
Hence, we obtain $\lim_{t \rightarrow \infty} P \Big[  mae_a(t) = mae_a^{opt} \Big] = 1$.
\end{proof}




\section{Computational Experiment}
\label{REG_section_computational_experiment}

In this section, we present computational experiments for all proposed models and algorithms in \autoref{REG_section_formulation_subset_selection} and \autoref{section_many_variables}. They are compared to benchmark algorithms and to each other. To test the performance, we use randomly generated instances and a personal computer with 8 GB RAM and Intel Core i7 (2.40 GHz dual core) was used for the experiments in Section \ref{experiment_thin_mrmr} and a server with Xeon 2.8 GHz CPU and 15GB RAM is used for all other experiments. All models and algorithms are implemented in C\# and CPLEX.

\subsection{Experimental Design}

We obtained many publicly available instances for the subset selection problem. The majority of them were very easy to solve by both our models and stepwise heuristics. One of the purposes of this study is to establish the solution quality of the stepwise heuristic versus the optimal solutions. For these reasons, we generated synthetic instances. Furthermore, we want a large variety of instances with regard to the size and by randomly generating instances, we were also able to achieve this.

For the thin case ($m < n$),  we generate 26 sets of instances with $\{ (m,n) | m \in \{20,30,40,50\}, n \in \{30,40,\cdots,90,100\}, m + 10 \leq n\}$, where each set contains 10 instances. Hence, we generate a total of 260 instances. For the fat case ($m > n$), we generate 16 sets of instances with $\{ (m,n) | m \in \{100,150,200,250\}, n \in \{30,40,50,60\}\}$, in which each set contains 10 instances. Hence, we generate a total of 160 instances. For the detailed procedure used to generate the instances, see Section 8 of the online supplement.

To evaluate the performance of the proposed models and algorithms, we compare the improvement against benchmark packages and algorithms. For the thin case with $MAE$ objective and the fat case with both $MAE$ and $MSE$ objectives, we implemented a stepwise algorithm in C\#, due to the absence of a statistical package that supports such cases. The algorithm is presented in Section 3 of the online supplement. For the thin case with the $MSE$ objective, we use the stepwise regression implementation of R statistics package Leaps by \citet{Lumley:2009}, which supports the adjusted $r^2$ objective. The leaps package also provides leaps-and-bound, an exact algorithm proposed by \citet{Furnival-Wilson:74}. However, in Section 4 of the online supplement, we show that its complexity is much worse than that of our algorithms. For the remaining portion of the paper, we refer to all of the benchmark algorithms and packages as \textit{Step}. For all proposed models and algorithms, solutions obtained by \textit{Step} are used as initial solutions. As we discussed in the introduction, enumerating all possible subsets is not a computationally tractable approach and it is excluded in the comparison.

For comparison purposes, we use the following measures.
\begin{enumerate}[noitemsep]
\item[] $\mbox{GAP}_{IP}$: the optimality gap obtained by CPLEX within allowed time. 
\item[] $\mbox{GAP}_{sol}$: relative gap between a proposed model and heuristic defined as $$\frac{obj\mbox{ of } Step - obj \mbox{ of proposed model}}{obj\mbox{ of } Step}.$$ 
\end{enumerate}

Solving the problems optimally for larger instances takes a long time as implied in Section 4 of the online supplement. Hence, we set up time limits for CPLEX. We execute CPLEX with two settings for the time limit: one hour and one minute. The computation time of the big $M$ is less than 90 seconds for all instances considered in the experiment, and we do not include this time within the one hour and one minute time limits.

Finally, we summarize the algorithms used for the experiment in \autoref{tab:summary_algorithm}. Recall that we only presented the result for big M with the $MAE$ and $MAE_a$ objectives. For the $MSE$ and $MSE_a$ objectives, we need a trivial modification. In all algorithms and models, to obtain big M for $v_j$, we use \eqref{formula_bigM_for_v} and \eqref{formula_bigM_v_fat_case} for the thin and fat cases, respectively. However, we have several options to obtain the big M value for $x_j$: \eqref{bigM_definition}, \eqref{formula_bigM_heur}, and procedures in \autoref{appendix_big_M_alternatives}. Among these, for the thin case and each iteration of \textit{CoreHeur} and \textit{CoreRnd} for the fat case, we use \eqref{bigM_definition} for big M for $x_j$, because in each iteration we deal with the thin case. For the fat case MIP models, we use \eqref{formula_bigM_heur} for big M for $x_j$ because other procedures give extremely large values of $M$. These choices were made based on computational experiments in Section 5 of the online supplement. The result in the online supplement implies that valid big M values guarantee optimality while they do not significantly increase the execution times. Even if CPLEX terminates due to the time limit, the solution qualities are similar regardless of the big M values as long as the big M values are valid.

\begin{table}[htbp]
  \centering
  \begin{scriptsize}
    \begin{tabular}{|c|c|c|l|}
    \hline
    Case  & Obj   & Notation & Reference\\ \hline
    Thin  & $MAE$   & MIP   &   \eqref{formulation_subset_mae} with big M based on \eqref{formula_bigM_for_v} and \eqref{bigM_definition} \\ \hline
    Thin  & $MSE$   & MIP   &   \eqref{formulation_subset_mse}   with big M based on \eqref{formula_bigM_for_v} and \eqref{bigM_definition}\\ \hline
    Thin  & mRMR   & MIP   &   \eqref{formulation_mae_mrmr}, \eqref{formulation_mae_mrmr} does not have big M\\ \hline
    Fat   & $MAE_a$ & MIP   &   \eqref{formulation_MAE_more_m}  with big M based on \eqref{formula_bigM_v_fat_case} and \eqref{formula_bigM_heur}  \\
          &       & CoreHeur &  \autoref{algo_excessive_m} with \autoref{algo_update_coreset} and big M based on \eqref{formula_bigM_for_v} and \eqref{bigM_definition} with $J := C$\\
          &       & CoreRnd &   \autoref{algo_excessive_m} with  \autoref{algo_update_coreset_random} and big M based on \eqref{formula_bigM_for_v} and \eqref{bigM_definition} with $J := C$\\ \hline
    Fat   & $MSE_a$ & MIP   &    \eqref{formulation_MSE_more_m} with big M based on \eqref{formula_bigM_v_fat_case} and \eqref{formula_bigM_heur}   \\\
          &       & CoreHeur &  \autoref{algo_excessive_m} with \autoref{algo_update_coreset} and big M based on \eqref{formula_bigM_for_v} and \eqref{bigM_definition} with $J := C$\\
          &       & CoreRnd &    \autoref{algo_excessive_m} with \autoref{algo_update_coreset_random} and big M based on \eqref{formula_bigM_for_v} and \eqref{bigM_definition} with $J := C$\\ \hline
    \end{tabular}%
  \end{scriptsize}   
\caption{Summary of the algorithms} \label{tab:summary_algorithm}
\end{table}%

We also note here that big M-based formulations we propose outperform logical constraint-based formulations that are available in CPLEX and most commercial optimization solvers. In Section 6 of the online supplement, we compare the two approaches and observe that the proposed formulations terminate faster with an optimal solution  or terminate with a better solution (smaller optimality gap and smaller objective function value) when one minute time limit is employed.

\subsection{Study of Thin Case \texorpdfstring{($m < n$)}{(m < n)} for MAE and MSE Objectives}

In \autoref{fig:result_1hr}, we present the averages of $\mbox{GAP}_{IP}$ and $\mbox{GAP}_{sol}$ across the 26 instance sets. Each rectangle and circle corresponds to the average $\mbox{GAP}_{IP}$ and $\mbox{GAP}_{sol}$ of 10 instances for the corresponding instance set. In both plots on the left, x and y axes represent the instance sets and the gaps in percentage. For both $MSE$ and $MAE$, $\mbox{GAP}_{IP}$ is near zero for most of the instances with $m \leq 40$. Hence, we get an optimal solution within one hour. For larger instances, $\mbox{GAP}_{IP}$ is positive for both $MSE$ and $MAE$ and is larger for $MSE$. For $\mbox{GAP}_{sol}$, we observe common phenomena for both objectives. First, $\mbox{GAP}_{sol}$ tends to decrease as $n$ increases for each fixed $m$. Second, there are bumps for $\mbox{GAP}_{sol}$ at $(m,n) \in \{(20,30),(30,40),(40,50),(50,60)\}$. \autoref{fig:result_1hr} also implies that the performance of heuristics deteriorates when we have relatively fewer observations given fixed $m$, because $\mbox{GAP}_{sol}$ is an underestimation of the gap between an optimal solution and heuristic solution. We also plot the average execution time of \eqref{formulation_subset_mae} and \eqref{formulation_subset_mse}. Observe that the average time of \eqref{formulation_subset_mae} for large instances is still 500 seconds, while $\mbox{GAP}_{IP}$ is positive for the same instance sets. This implies that most of the instances are solved optimally but we terminate with a relatively large $\mbox{GAP}_{IP}$ for a few instances after one hour.

\begin{figure}[ht]
\centering
		\subfigure[$\mbox{GAP}_{IP}$ and $\mbox{GAP}_{sol}$ for $MSE$]{%
           \includegraphics[scale=0.4]{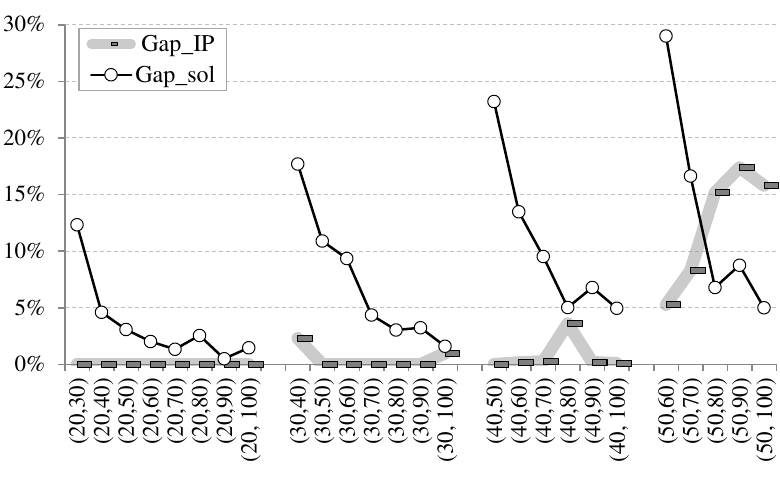} \label{fig:result_mse}
        }
        \subfigure[$\mbox{GAP}_{IP}$ and $\mbox{GAP}_{sol}$ for $MAE$]{%
           \includegraphics[scale=0.4]{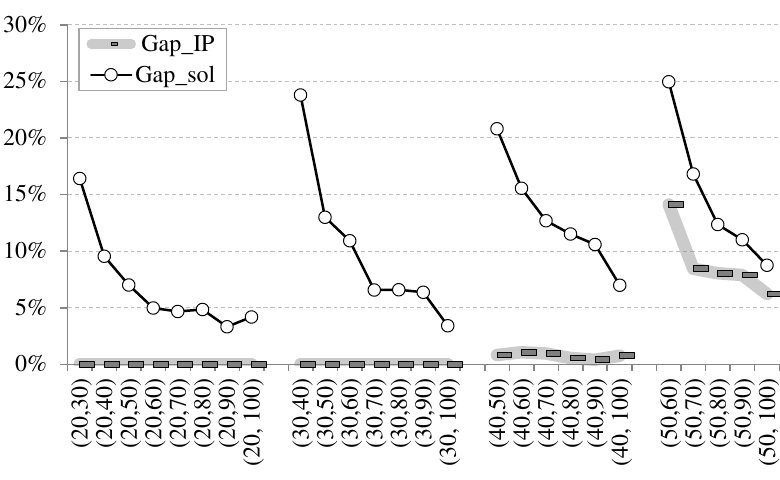} \label{fig:result_mae}
        }
        \subfigure[MIP Execution time]{%
           \includegraphics[scale=0.4]{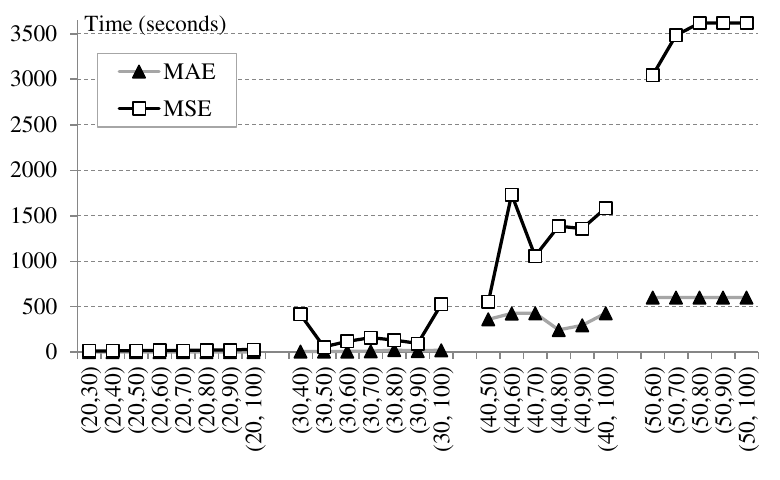} \label{fig:result_thin_time}
        }
\caption{Average $\mbox{GAP}_{IP}$, $\mbox{GAP}_{sol}$, and execution time with the one hour  time limit}\label{fig:result_1hr}
\end{figure}

During the experiment, we observed that the improvement of the objective function value occurs in the early stage of the branch-and-bound algorithm, and CPLEX tries to improve the lower bound for the remaining time. In \autoref{fig:fg_why_1min}, we present the primal and lower bounds for one instance over time. The circles and empty circles are the primal and lower bounds over time, respectively, and the plain and dotted lines represent the best primal and lower bounds obtained after one hour. Observe that there is no objective function value improvement after 90 and 25 seconds for $MSE$ and $MAE$, respectively. In other words, we can obtain the same regression models obtained with one hour execution by terminating CPLEX after 90 seconds. From this observation, we conclude that good solutions are obtained in the early stages of the branch-and-bound algorithm but improving the lower bound takes longer time. This observation gives the justification to run CPLEX for a short time if we do not need to retain optimality.

\begin{figure}[ht]
\centering

\subfigure[$MSE$]{%
           \includegraphics[scale=0.45]{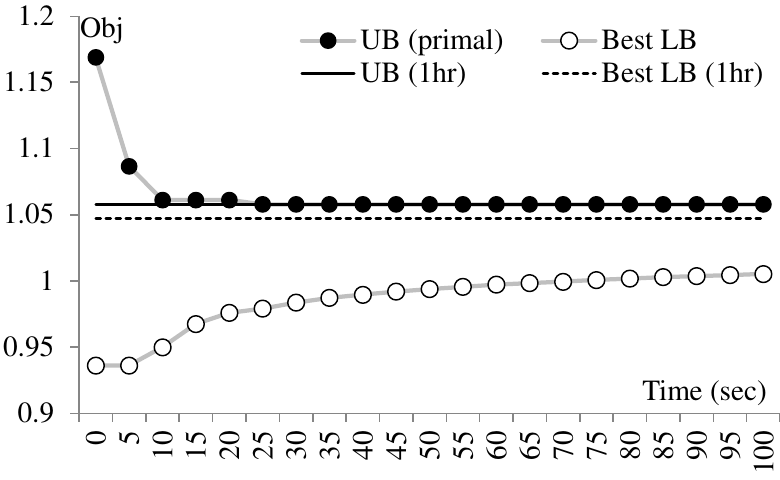} \label{fig:why_1min_mse}
        }\qquad
        \subfigure[$MAE$]{%
           \includegraphics[scale=0.45]{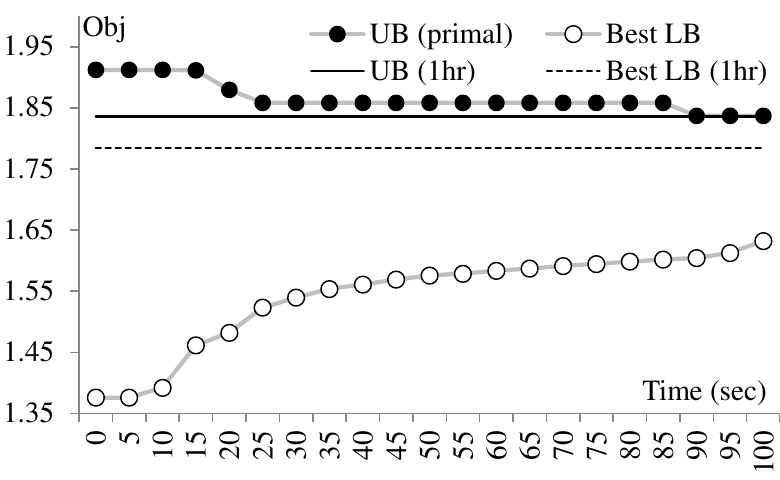} \label{fig:why_1min_mae}
        }     
\caption{Convergence of primal and dual bounds for an instance with $m=50$ and $n=100$}\label{fig:fg_why_1min}
\end{figure}

For this reason, we execute CPLEX with the one minute time limit. In the experiment of \citet{Bertsimas-etal:15}, time limit of 500 seconds for MIP is considered as they solve different formulation with larger data. In \autoref{fig:result_1min}, we present the averages of $\mbox{GAP}_{IP}$ and $\mbox{GAP}_{sol}$ over 26 instance sets, when CPLEX terminates after one minute. We observe a similar shape for $\mbox{GAP}_{sol}$ except the gaps are slightly smaller. On the other hand, $\mbox{GAP}_{IP}$ is positive for more instances compared to the previous result with the one hour time limit. To compare the solution qualities precisely, in \autoref{fig:result_lost}, we plot the improvement of the primal and lower bounds obtained by executing the extra 59 minutes, where the data points represent $lost(\mbox{GAP}_{sol}) =$ $\big( \mbox{GAP}_{sol}$ with one hour - $\mbox{GAP}_{sol}$ with one minute$\big)$ and $lost(\mbox{GAP}_{IP}) =$ $\big( \mbox{GAP}_{IP}$ with one minute - $\mbox{GAP}_{IP}$ with one hour$\big)$. Observe that the difference of $\mbox{GAP}_{sol}$ is less than 5\% for all cases, whereas there exists significant improvement of the lower bounds for $m \geq 30$. Therefore, within one minute (excluding the big M time), we can improve the stepwise heuristic solution up to 25\% by solving the proposed MIP models.

\begin{figure}[ht]
\centering
		\subfigure[$MSE$]{%
           \includegraphics[scale=0.5]{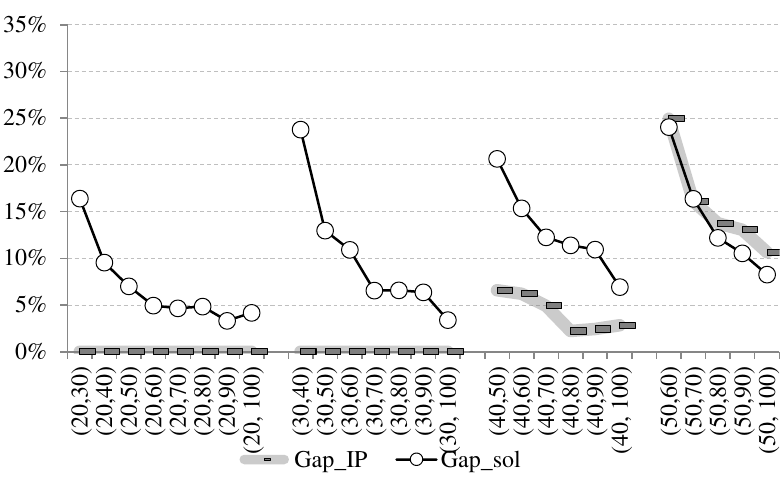} \label{fig:result_mse_1min}
        }\quad
        \subfigure[$MAE$]{%
           \includegraphics[scale=0.5]{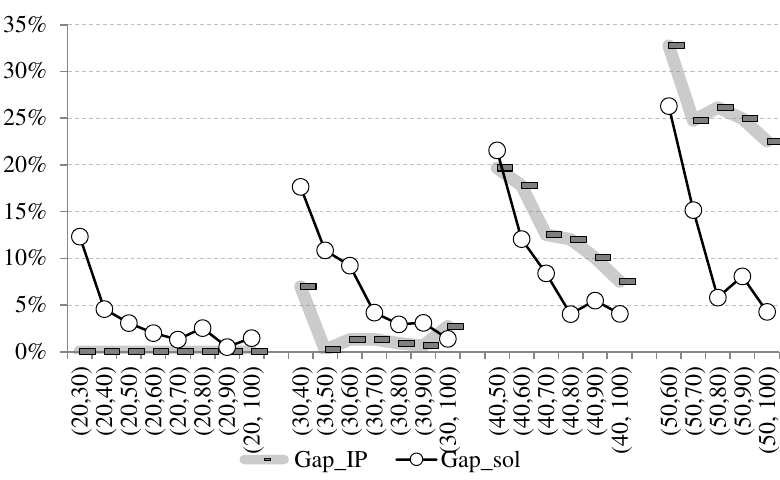} \label{fig:result_mae_1min}
        }  
        
\caption{Average $\mbox{GAP}_{IP}$ and $\mbox{GAP}_{sol}$ with the one minute time limit}\label{fig:result_1min}

\subfigure[$MSE$]{%
           \includegraphics[scale=0.5]{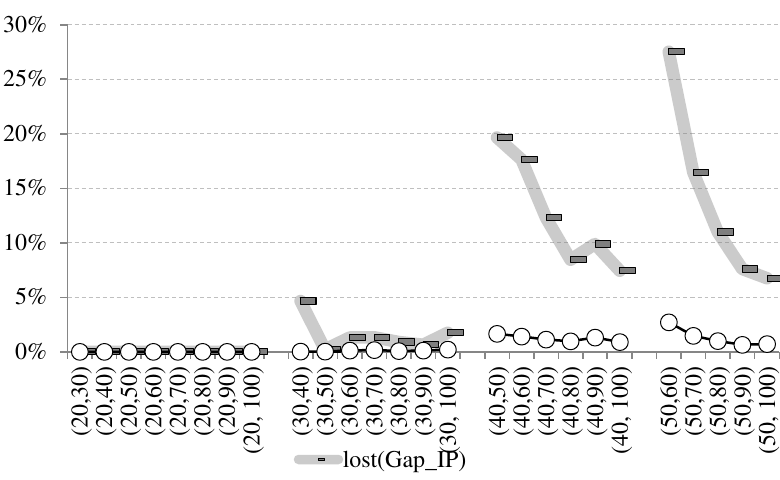} \label{fig:result_mse_lost}
        }\quad
        \subfigure[$MAE$]{%
           \includegraphics[scale=0.5]{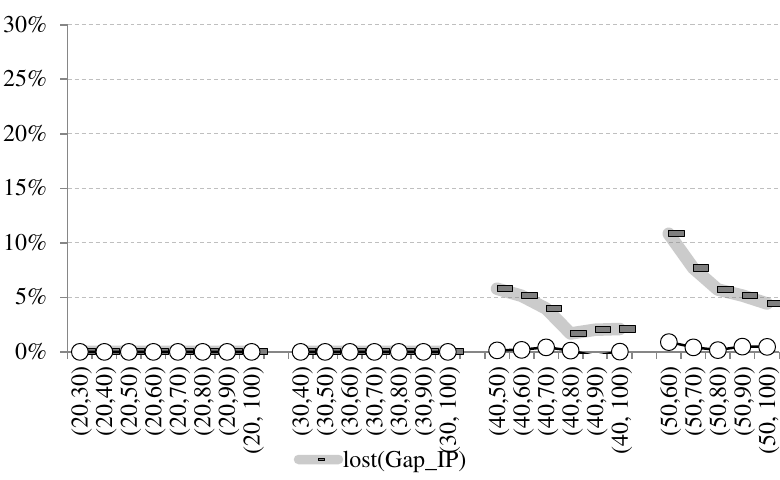} \label{fig:result_mae_lost}
        }      
\caption{Average improvement of $\mbox{GAP}_{IP}$ and $\mbox{GAP}_{sol}$ by the extra 59 minutes}\label{fig:result_lost}
\end{figure}

\subsection{Study of Thin Case \texorpdfstring{($m < n$)}{(m < n)} for Minimal-Redundancy-Maximal-Relevance}
\label{experiment_thin_mrmr}

In this section, four MIP models are compared: MIP$_{\mbox{\scriptsize{mrmr}}}$ (MIP model \eqref{formualtion_IG_initial} maximizing mRMR), MIP$_{\mbox{\scriptsize{mae}}}$ (MIP model \eqref{formulation_subset_mae}), MIP$_{\mbox{\scriptsize{sae}}}$ (MIP model \eqref{formulation_subset_mae} with fixed $p$ minimizing SAE), and MIP$_{\mbox{\scriptsize{mix}}}$ (MIP model \eqref{formulation_mae_mrmr} minimizing SAE subject to the mRMR constraint).

In the first experiment, MIP$_{\mbox{\scriptsize{mix}}}$ is compared with MIP$_{\mbox{\scriptsize{mrmr}}}$ and MIP$_{\mbox{\scriptsize{sae}}}$ for fixed $p$ values. In the second experiment, MIP$_{\mbox{\scriptsize{mix}}}$ is compared with MIP$_{\mbox{\scriptsize{mae}}}$. Let $S_{\mbox{\scriptsize{mrmr}}}$, $S_{\mbox{\scriptsize{sae}}}$, and $S_{\mbox{\scriptsize{mix}}}$ be the selected subsets of the corresponding MIP models, and let mRMR$_{\mbox{\scriptsize{mrmr}}}$ and mRMR$_{\mbox{\scriptsize{mixed}}}$ be the mRMR values for $S_{\mbox{\scriptsize{mrmr}}}$ and $S_{\mbox{\scriptsize{mix}}}$, respectively. Let SAE$_{\mbox{\scriptsize{sae}}}$ and SAE$_{\mbox{\scriptsize{mixed}}}$ be the SAE values for $S_{\mbox{\scriptsize{sae}}}$ and $S_{\mbox{\scriptsize{mix}}}$, respectively. To compare the selected subset and solution quality of MIP$_{\mbox{\scriptsize{mix}}}$ against the other three models, the following criteria are used. For each $model \in \{mrmr, mae, sae\}$, set difference between $S_{\mbox{\scriptsize{model}}}$ and $S_{\mbox{\scriptsize{mixed}}}$, SD$_{\mbox{\scriptsize{model}}} = \frac{|(S_{\mbox{\scriptsize{model}}} \setminus S_{\mbox{\scriptsize{mix}}})| + |(S_{\mbox{\scriptsize{mix}}} \setminus S_{\mbox{\scriptsize{model}}})|}{2}$, is defined. For all four models, the relative mRMR gap from MIP$_{\mbox{\scriptsize{mrmr}}}$ (GAP$_{\mbox{\scriptsize{mrmr}}} (\%) = \frac{\mbox{mRMR}_{\mbox{\scriptsize{mrmr}}}-\mbox{mRMR}_{\mbox{\scriptsize{model}}}}{\mbox{mRMR}_{\mbox{\scriptsize{mrmr}}}} \times 100$) and relative SAE gap from the optimal SAE (GAP$_{\mbox{\scriptsize{sae}}} (\%) = \frac{\mbox{SAE}_{\mbox{\scriptsize{model}}}-\mbox{SAE}_{\mbox{\scriptsize{sae}}}}{\mbox{SAE}_{\mbox{\scriptsize{sae}}}} \times 100$) are defined. Note that SD$_{\mbox{\scriptsize{mrmr}}}$ and SD$_{\mbox{\scriptsize{sae}}}$ measure how the selected subset by MIP$_{\mbox{\scriptsize{mix}}}$ is different from the subsets obtained by MIP$_{\mbox{\scriptsize{mrmr}}}$ and MIP$_{\mbox{\scriptsize{sae}}}$, respectively. To measure the solution quality in terms of mRMR and SAE, GAP$_{\mbox{\scriptsize{mrmr}}}$ and GAP$_{\mbox{\scriptsize{sae}}}$ calculate the relative gaps of MIP$_{\mbox{\scriptsize{mix}}}$ from the best mRMR (by MIP$_{\mbox{\scriptsize{mrmr}}}$) and best SAE (by MIP$_{\mbox{\scriptsize{sae}}}$), respectively. 

To test the performances of the models with various parameters and sizes, we conduct experiments using the thin case synthetic data from Section 4.2 and report the result in Section 9 of the online supplement. The result of these experiments confirms that MIP$_{\mbox{\scriptsize{mix}}}$ effectively balances the mRMR and SAE objects. The obtained subset by MIP$_{\mbox{\scriptsize{mix}}}$ is distinguished from the subsets of MIP$_{\mbox{\scriptsize{mrmr}}}$ and MIP$_{\mbox{\scriptsize{sae}}}$. Check the online supplement for the detailed results. For the experiments in this section, the MIP models are tested using select real datasets from the UCI Machine Learning Repository \cite{Lichman:2013} and \citet{johnson1996fitting}. Four regression datasets (Bodyfat, Autompg, Housing, and Servo) are selected among the datasets with more than 100 observations and that are created for linear regression analysis. The original data are processed by deleting rows with missing values and by creating dummy variables for categorical variables. All final variables are standardized.

In the first experiment, for each dataset, parameters $p \in \{3,4,5,6\}$ and $\lambda \in \{0.1,0.2,0.3,0.4,0.5\}$ are used. In Figure \ref{fg_exp_mrmr_uci}, a heatmap is presented for the four performance measures SD$_{\mbox{\scriptsize{mrmr}}}$, SD$_{\mbox{\scriptsize{sae}}}$, GAP$_{\mbox{\scriptsize{mrmr}}}$, and GAP$_{\mbox{\scriptsize{sae}}}$. The execution times are not reported because all models are solved optimally within a second. The rows are defined for datasets and $p$, and the columns are defined for $\lambda$ values. The heatmap shows the same trend with the previous experiments. Increasing $\lambda$ and $p$ values increases SD$_{\mbox{\scriptsize{mrmr}}}$ and GAP$_{\mbox{\scriptsize{mrmr}}}$ while decreases SD$_{\mbox{\scriptsize{sae}}}$ and GAP$_{\mbox{\scriptsize{sae}}}$. For several cases (Housing data with $p = 3,4,5$), SD$_{\mbox{\scriptsize{mrmr}}} = $ SD$_{\mbox{\scriptsize{sae}}} = 0$ because the selected subset is optimal for both criteria mRMR and SAE. For several cases ($\lambda = 0.4, 0.5$ for Augompg, Housing, Servo), SD$_{\mbox{\scriptsize{sae}}} = 0$ because $S_{\mbox{\scriptsize{sae}}}$ has the mRMR value within 40\% from the optimal mRMR value, which also implies Constraint \eqref{formulation_mae_mrmr_e} does not cut any part of the feasible region. In order to determine the best balance between the two criteria, a user can determine an allowable maximum for any of the gaps GAP$_{\mbox{\scriptsize{mrmr}}}$ and GAP$_{\mbox{\scriptsize{sae}}}$ and select the best in the scope. Otherwise, a pareto frontier and scatter plot can be useful in selecting a good solution. 

\begin{figure}[ht]
\centering
    \includegraphics[scale=0.85]{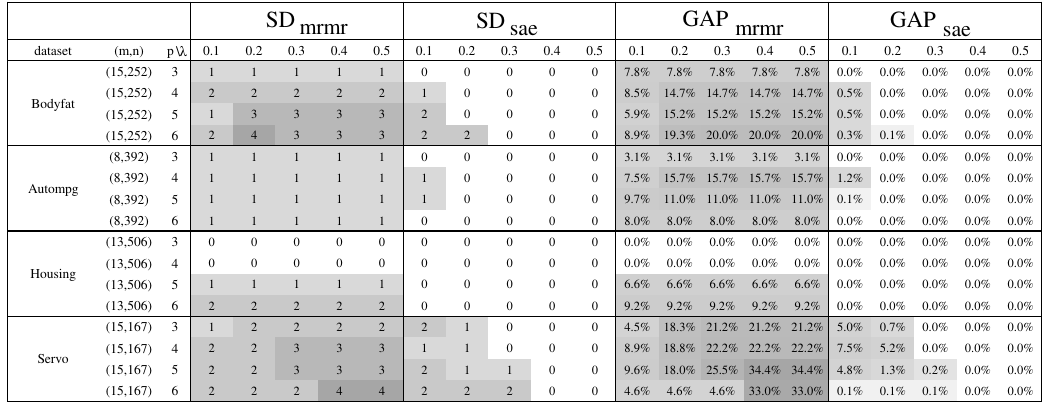} 
    \vspace{-0.3cm}
    \caption{Performances of MIP$_{\mbox{\scriptsize{mix}}}$ compared to MIP$_{\mbox{\scriptsize{mrmr}}}$ and MIP$_{\mbox{\scriptsize{sae}}}$}
\label{fg_exp_mrmr_uci}
\end{figure}

In the second experiment, MIP$_{\mbox{\scriptsize{mix}}}$ is compared with our MIP model \eqref{formulation_subset_mae}, which we denote as MIP$_{\mbox{\scriptsize{mae}}}$. While MIP$_{\mbox{\scriptsize{sae}}}$ assumes fixed $p$, our MIP model (7) from Section 2.1 can be used to find the optimal $p$ value, referred to as $p^*$. Hence, we solved (7) to obtain $p^*$ and the optimal $MAE$. Then, we compare the solution quality of MIP$_{\mbox{\scriptsize{mix}}}$ by fixing $p$ to $p^*$ and by checking various $\lambda$ values. In Table \ref{table_exp_mae_vs_mixs}, the fourth column represents the relative gap of the mRMR objective between (7) and optimal mRMR, the fifth column represents the relative gap of the $MAE$ objective between (7) and MIP$_{\mbox{\scriptsize{mix}}}$, the sixth column represents the relative gap of the mRMR objective between MIP$_{\mbox{\scriptsize{mix}}}$ and optimal mRMR, and the last column represents the set difference between (7) and MIP$_{\mbox{\scriptsize{mix}}}$.

\begin{table}[htbp]
  \centering
\begin{scriptsize}

    \begin{tabular}{|rrr|rr|rrr|}
    \hline
          &     &   & \multicolumn{2}{c|}{MIP$_{\mbox{\scriptsize{mae}}}$}              & \multicolumn{3}{c|}{MIP$_{\mbox{\scriptsize{mix}}}$}         \\ \hline
    Dataset & ($m,n$) & $\lambda$ & p*  & GAP$_{\mbox{\scriptsize{mrmr}}}$ & GAP$_{\mbox{\scriptsize{mae}}}$ & GAP$_{\mbox{\scriptsize{mrmr}}}$ & SD$_{\mbox{\scriptsize{mae}}}$ \\ \hline
    Bodyfat & (15,252) & 0.05 & 4 &  14.7\% & 0.6\% & 4.7\% & 2 \\
          &      & 0.1 &             &       & 0.5\% & 8.5\% & 1 \\
          &      & 0.15 &             &       & 0.0\% & 14.7\% & 0 \\
          &      & 0.2 &              &       & 0.0\% & 14.7\% & 0 \\ \hline
   Autompg & (8,392) & 0.05 & 4  & 15.7\% & 1.2\% & 1.0\% & 1 \\
          &       & 0.1&              &       & 1.2\% & 7.5\% & 1 \\
          &       & 0.15&              &       & 1.2\% & 7.5\% & 1 \\
          &       & 0.2&             &       & 0.0\% & 15.7\% & 0 \\ \hline
    Housing & (13,506) & 0.05 & 11  & 6.5\% & 1.3\% & 4.7\% & 2 \\
          &       & 0.1&              &       & 0.0\% & 6.5\% & 0 \\
          &       & 0.15&              &       & 0.0\% & 6.5\% & 0 \\
          &       & 0.2&              &       & 0.0\% & 6.5\% & 0 \\ \hline
    Servo & (15,167) & 0.05 & 9  & 8.1\% & 1.1\% & 4.8\% & 1 \\
          &       & 0.1&              &       & 0.0\% & 8.1\% & 0 \\
          &       & 0.15&              &       & 0.0\% & 8.1\% & 0 \\
          &       & 0.2&              &       & 0.0\% & 8.1\% & 0 \\ \hline
    \end{tabular}%
\end{scriptsize}
  \caption{Comparison with MAE model}
  \label{table_exp_mae_vs_mixs}%
\end{table}%

The GAP$_{\mbox{\scriptsize{mrmr}}}$ values of MIP$_{\mbox{\scriptsize{mae}}}$ show that the optimal $MAE$ subset is quite different from the optimal mRMR subset and the mRMR values are different up to 14.7\%. By MIP$_{\mbox{\scriptsize{mix}}}$, we can improve the mRMR value significantly without decreasing $MAE$ too much. For all four datasets, with $\lambda = 0.05$, GAP$_{\mbox{\scriptsize{mrmr}}}$ values of MIP$_{\mbox{\scriptsize{mix}}}$ are significantly lower than those of MIP$_{\mbox{\scriptsize{mae}}}$, while GAP$_{\mbox{\scriptsize{mae}}}$ values of MIP$_{\mbox{\scriptsize{mix}}}$ are approximately 1\% from the optimal $MAE$ value. In detail, for Autompg data, MIP$_{\mbox{\scriptsize{mix}}}$ keeps both of GAP$_{\mbox{\scriptsize{mae}}}$ and GAP$_{\mbox{\scriptsize{mrmr}}}$ approximately at 1\%.

\subsection{Study of Fat Case \texorpdfstring{($m > n$)}{(m > n)}}

In this section, we present two experiments for the fat case datasets. In the first experiment, the solution qualities of the MIP models, \eqref{formulation_MAE_more_m} and \eqref{formulation_MSE_more_m}, and the core set algorithms, \textit{Core-Heuristic} and \textit{Core-Random}, are compared using the synthetic datasets. In the second experiment, the core set algorithms are compared against the stepwise algorithm and a state-of-the-art benchmark algorithm using real-world instances from the UCI Machine Learning Repository. 

Recall that the core set algorithms require core set cardinality parameter $\theta$. Hence, we first decide the best $\theta$ value for each core set algorithm, then we compare \textit{Core-Heuristic}, \textit{Core-Random}, and the MIP models in Section 7 of the online supplement. We conclude the following universal rule for the selection of $\theta$.
\begin{enumerate}[noitemsep]
\item For \textit{Core-Heuristic}, we use $\theta = 1$ for instance sets satisfying $\{ \frac{n}{m} \geq 0.4, n \leq 40 \}$ or $\{ \frac{n}{m} \geq 0.5, n > 40 \}$. For all other instances, we use $\theta=0.8$.
\item For \textit{Core-Random}, with a ten minute time limit, $\theta = 1.0$ is best for all sizes. 
\item For \textit{Core-Random}, with a one hour time limit, $\theta = 0.8$ is best for large instances. Hence, with the one hour time limit, we use $\theta = 0.8$ if $mn \geq 9000$ and $\theta = 1.0$ otherwise.
\end{enumerate}

We compare $GAP_{sol}$ of the MIP models, and \textit{Core-Heuristic} and \textit{Core-Random} with the best $\theta$ determined by the rule above. In \autoref{fig:result_lost}, we observed that running the MIP solver beyond 1 minute does not improve the solution quality much. For this reason, to save computational power, we ran the MIP solver for 1 minute for the fat case. For \textit{Core-Random}, we set 10 minutes and 1 hour time limit to check the performance as we spend more time. 

For the first experiment, we present the average $GAP_{sol}$ for all algorithms and execution times for \textit{Core-Heuristic} in \autoref{fig:fat_case}. For the $MSE_a$ objective, MIP performs worst for all instances. For many instance sets, it does not improve the initial heuristic solution. \textit{Core-Random} performs slightly better than \textit{Core-Heuristic} for small instances with $n=30$, but they perform equally for remaining instances. For the $MAE_a$ objective, the performance of MIP drops substantially when $m$ increases. For most instances, \textit{Core-Random} performs the best in general. However, for larger instances with $n=60$, \textit{Core-Heuristic} performs the best.

\begin{figure}[ht]
\begin{center}
		\subfigure[$GAP_{sol}$ for $MSE_a$]{%
           \includegraphics[scale=0.4]{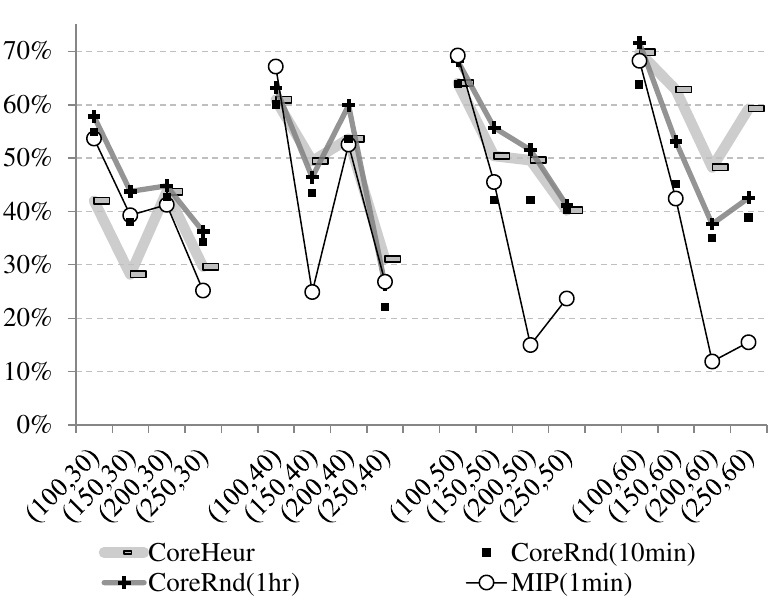} \label{fig:fat_case_mse_a}
        }
        \subfigure[$GAP_{sol}$ for $MAE_a$]{%
           \includegraphics[scale=0.4]{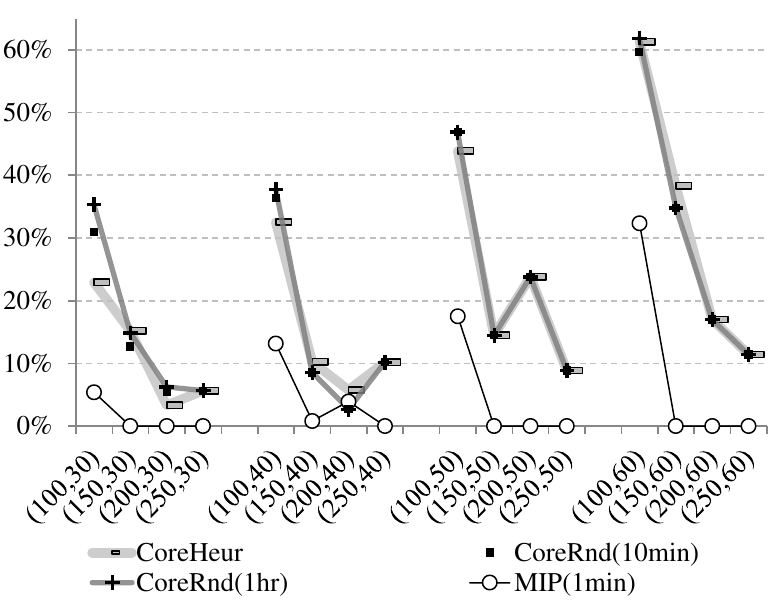} \label{fig:fat_case_mae_a}
        }
        \subfigure[\textit{Core-Heuristic} execution time]{%
           \includegraphics[scale=0.4]{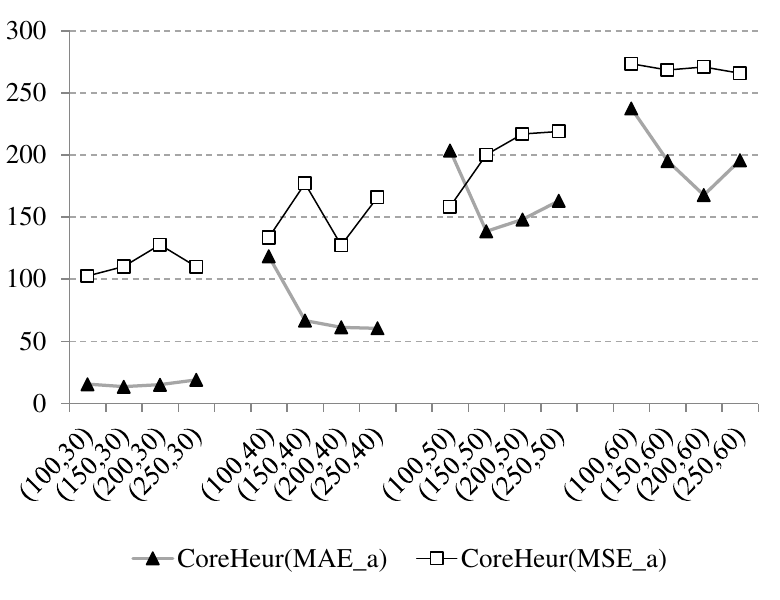} \label{fig:fat_coreheur_time}
        }  
\end{center}      
\caption{Comparison of performance of the algorithms}\label{fig:fat_case}
\end{figure}

For the second experiment, we compare the performance of the core set algorithms with two benchmark algorithms: a stepwise heuristic minimizing $MSE$ and the mathematical programming based algorithm of \citet{Bertsimas-etal:15}. We use the R package \textit{bestsubset} of \citet{Hastie-etal} which implements \citet{Bertsimas-etal:15}. We denote this algorithm as \textit{BKM}. For this experiment, we use two sets of the dataset of \citet{Rafiei:15} from the UCI Machine Learning Repository \cite{Lichman:2013} that have more than 100 features and that are created for linear regression analysis. The original dataset has 103 features and two possible response variables cost and sales. To create fat case datasets, we randomly select 50 observations and create 10 instances for each response variable. All explanatory variables are standardized.

We use a ten minute time limit for BKM to compare with the core set algorithms. Note that, within this time limit, BKM may not guarantee optimality and that BKM requires a fixed $p$. To search for the best $MSE$ and within the ten minute time limit, we enumerate BKM with the following search order for $p$: 1,3,5,7, $\cdots$,45,47,2,4,6,$\cdots$,46,48. For each $p$, 60 seconds is allowed and the algorithm stops at 600 seconds even if all $p$ values were not searched.

\begin{table}[htbp]
  \centering
  \scriptsize
    \begin{tabular}{|cc|rrrrrrrr|}
	\hline
    \multicolumn{2}{|c|}{} & \multicolumn{4}{c|}{Gap from the best} & \multicolumn{4}{c|}{Time (seconds)} \\
    \hline
    data  & (n,m) & \multicolumn{1}{c}{BKM} & \multicolumn{1}{c}{Step} & \multicolumn{1}{c}{CoreHeur} & \multicolumn{1}{c|}{CoreRnd} & \multicolumn{1}{c}{BKM} & \multicolumn{1}{c}{Step} & \multicolumn{1}{c}{CoreHeur} & \multicolumn{1}{c|}{CoreRnd} \\
          &  & \multicolumn{1}{c}{(10min)} &  &  & \multicolumn{1}{c|}{(10min)} & \multicolumn{1}{c}{(10min)} &  & & \multicolumn{1}{c|}{(10min)} \\
    \hline
    Cost1  & (50,103) & 33.5\% & 105.2\% & 12.9\% & \multicolumn{1}{r|}{\textbf{0.0\%}} & 600   & 10    & 77    & 648 \\
    Cost2  & (50,103) & 53.8\% & 6.9\% & \textbf{0.0\%} & \multicolumn{1}{r|}{\textbf{0.0\%}} & 600   & 12    & 98    & 646 \\
    Cost3  & (50,103) & 5.4\% & 23.2\% & 8.5\% & \multicolumn{1}{r|}{\textbf{0.0\%}} & 600   & 10    & 141   & 632 \\
    Cost4  & (50,103) & 5.4\% & 17.2\% & \textbf{0.0\%} & \multicolumn{1}{r|}{8.9\%} & 600   & 9     & 204   & 634 \\
    Cost5  & (50,103) & 23.9\% & 14.4\% & \textbf{0.0\%} & \multicolumn{1}{r|}{3.0\%} & 600   & 11    & 141   & 647 \\
    Cost6  & (50,103) & 16.5\% & 50.1\% & \textbf{0.0\%} & \multicolumn{1}{r|}{5.8\%} & 600   & 4     & 134   & 624 \\
    Cost7  & (50,103) & \textbf{0.0\%} & 30.8\% & 26.7\% & \multicolumn{1}{r|}{3.1\%} & 600   & 8     & 136   & 634 \\
    Cost8  & (50,103) & 97.2\% & 18.7\% & 13.0\% & \multicolumn{1}{r|}{\textbf{0.0\%}} & 600   & 11    & 52    & 581 \\
    Cost9  & (50,103) & 7.3\% & 11.2\% & 3.8\% & \multicolumn{1}{r|}{\textbf{0.0\%}} & 600   & 7     & 137   & 633 \\
    Cost10  & (50,103) & 19.6\% & 10.4\% & 4.8\% & \multicolumn{1}{r|}{0.0\%} & 602   & 11    & 142   & 650 \\
    \hline
    Sales1  & (50,103) & 10.7\% & 28.3\% & \textbf{0.0\%} & \multicolumn{1}{r|}{1.6\%} & 602   & 6     & 138   & 628 \\
    Sales2  & (50,103) & 73.8\% & 363.0\% & 2.0\% & \multicolumn{1}{r|}{\textbf{0.0\%}} & 601   & 8     & 148   & 668 \\
    Sales3  & (50,103) & \textbf{0.0\%} & 44.1\% & 44.0\% & \multicolumn{1}{r|}{44.0\%} & 600   & 6     & 138   & 638 \\
    Sales4  & (50,103) & 1.3\% & 32.4\% & 2.3\% & \multicolumn{1}{r|}{\textbf{0.0\%}} & 600   & 5     & 135   & 642 \\
    Sales5  & (50,103) & 7.8\% & 4.6\% & 4.6\% & \multicolumn{1}{r|}{\textbf{0.0\%}} & 600   & 12    & 141   & 652 \\
    Sales6  & (50,103) & \textbf{0.0\%} & 29.9\% & 29.9\% & \multicolumn{1}{r|}{4.2\%} & 600   & 8     & 139   & 645 \\
    Sales7  & (50,103) & \textbf{0.0\%} & 50.1\% & 37.7\% & \multicolumn{1}{r|}{11.4\%} & 600   & 8     & 139   & 640 \\
    Sales8  & (50,103) & 21.2\% & \textbf{0.0\%} & \textbf{0.0\%} & \multicolumn{1}{r|}{\textbf{0.0\%}} & 601   & 12    & 55    & 650 \\
    Sales9  & (50,103) & \textbf{0.0\%} & 31.5\% & 6.9\% & \multicolumn{1}{r|}{13.2\%} & 602   & 6     & 202   & 629 \\
    Sales10  & (50,103) & \textbf{0.0\%} & 5.5\% & 5.3\% & \multicolumn{1}{r|}{4.2\%} & 600   & 8     & 137   & 628 \\
    \hline
          & Average   & 18.9\% & 43.9\% & 10.1\% & 5.0\% & 600   & 9     & 132   & 637 \\
    \hline
    \end{tabular}%
      \caption{Performance of core set and benchmark algorithms with ten minutes time limit}
  \label{tab:fat_benchmark}%
\end{table}%

The result for the second experiment is presented in Table \ref{tab:fat_benchmark}. The first two columns describe the datasets, the next four columns present the gap of each algorithm from the best objective value of the four algorithms, and the last four columns report the running time. The smallest gap among the four gaps is in boldface. The stepwise algorithm is the fastest while the gap from the best algorithm is over 40\% on average. The three MIP-based algorithms do not dominate each other: BKM wins six cases, CoreHeur wins six cases, and CoreRnd wins 9 cases. However, the relative gap of CoreRnd is the smallest, which show the effectiveness and robustness of the algorithm given the ten minute time limit. CoreHeur can be a good alternative to CoreRnd because it spends significantly less time than the other two MIP-based algorithms and quickly improves the solution quality of the stepwise algorithm.




\section{Conclusion}

In this study, we present mathematical programs to optimize various subset selection criteria: $MAE$, $MSE$, mRMR, and variants. The proposed mathematical programs return an optimal subset given a valid value of big M, which is also derived in our work. For the selected test instances, we observe that the solver frequently spends more than an hour to prove optimality, while near-optimal solutions are obtained in the first minute. To speed up the solution time and to deal with high dimensional cases, we propose an iterative algorithm based on the popular core set concept. The proposed algorithm and the randomized version converge to local and global optimal solutions, respectively, and show that they outperform the state-of-the-art benchmark. 

Mathematical programming models for subset selection are getting rapidly increasing attention recently due to the improved computational power and numerical solver efficiency. Further, the use of binary decision variables can help to model various subset requirements such as conditional inclusion (exclusion) of explanatory variables. Despite the benefits, there are still limitations in the current mathematical programming models. For example, the current approaches cannot solve large scale instances (e.g., millions of observations or explanatory variables) optimally. Hence, developing an improved model or an efficient algorithm with guaranteed optimality is crucial. Also, the big M values derived in the current work are valid, but not the tightest; this slows down the branch and bound algorithm speed. Hence, tighter big M values can be further studied.

\subsection*{Acknowledgments}
The authors appreciate the editors and reviewers for their constructive comments and suggestions that strengthened the paper.

\bibliographystyle{plain}

\begin{thebibliography}{}

\bibitem[Bertsimas and Weismantel(2005)]{Bertsimas-Weismantel:2005}
Bertsimas, D. and Weismantel, R.
\newblock {\em Optimization Over Integers}.
\newblock Dynamic Ideas, 2005.

\bibitem[Bertsimas and Shioda(2009)]{Bertsimas-Shioda:09}
Bertsimas, D. and Shioda, R.(2009).
\newblock Algorithm for cardinality-constrained quadratic optimization.
\newblock {\em Computational Optimization and Applications}, {\bf 43}, 1--22.

\bibitem[Bertsimas {\em et~al.}(2016)]{Bertsimas-etal:15}
Bertsimas, D., King, A., and Mazumder, R.(2016).
\newblock Best subset selection via a modern optimization lens.
\newblock {Annals of Statistics}, {\bf 44}, 813--852.

\bibitem[Bertsimas and King(2016)]{Bertsimas16}
Bertsimas, D. and King, A.(2016).
\newblock OR forum - an algorithmic approach to linear regression.
\newblock {Operations Research}, {\bf 64(1)}, 2--16.

\bibitem[Bienstock(1996)]{Bienstock:96}
Bienstock, D.(1996).
\newblock Computational study of a family of mixed-integer quadratic programming problems.
\newblock {\em Mathematical Programming}, {\bf 74}, 121--140.

\bibitem[Bradley {\em et~al.}(1998)]{Bradley-etal:98}
Bradley, P.S., Mangasarian, O.L., and Street, W.N.(1998).
\newblock Feature selection via mathematical programming.
\newblock {INFORMS Journal on Computing}, {\bf 10:2}, 209--217.

\bibitem[Candes and Tao(2007)]{Candes-Tao:07}
Candes, E. and Tao, T.(2007).
\newblock The Danzig selector: statistical estimation when p is much larger than n.
\newblock {\em The Annals of Statistics}, {\bf 35}, 2313--2351.

\bibitem[Chai and Draxler(2004)]{Chai:04}
Chai, T. and Draxler, R.R.(2004).
\newblock Root mean square error (RMSE) or mean absolute error (MAE)? - Arguments against avoiding RMSE in the literature.
\newblock {\em Geoscientific Model Development}, {\bf 7:3}, 1247--1250.

\bibitem[Charnes {\em et~al.}(1955)]{Charnes-etal:55}
Charnes, A, Cooper, W.W., and Ferguson, R.O.(1955).
\newblock Optimal estimation of executive compensation by linear programming.
\newblock {\em Management Science}, {\bf 1}, 138--151.

\bibitem[de Farias and Nemhauser(2003)]{Farias-Nemhauser:03}
de Farias Jr., I.R. and Nemhauser, G.L.(2003).
\newblock A polyhedral study of the cardinality constrained knapsack problem.
\newblock {\em Mathematical Programming}, {\bf 96:3}, 439--467.

\bibitem[Dielman(2005)]{Dielman:05}
Dielman, Terry E.(2005).
\newblock Least absolute value regression: recent contributions.
\newblock {\em Journal of Statistical Computation and Simulation}, {\bf 75:4}, 263--286.

\bibitem[Ding and Peng(2005)]{DingPeng:05}
Ding, C. and Peng, H.(2005).
\newblock Minimum redundancy feature selection from microarray gene expression data.
\newblock {\em Journal of Bioinformatics and Computational Biology}, {\bf 3:2}, 185--205.

\bibitem[Fung and Mangasarian(2004)]{Fung:04}
Fung, G.N. and Mangasarian, O.L.(2004).
\newblock A feature selection Newton method for support vector machine classification.
\newblock {\em Computational Optimization and Applications}, {\bf 28:2}, 185--202.

\bibitem[Furnival and Wilson(1974)]{Furnival-Wilson:74}
Furnival, G.M. and Wilson, R.W.(1974).
\newblock Regressions by leaps and bounds.
\newblock {\em Technometrics}, {\bf 16}, 499--511.

\bibitem[Glover(1975)]{glover1975improved}
Glover, F. (1975).
\newblock Improved linear integer programming formulations of nonlinear integer problems.
\newblock {\em Management Science}, {\bf 22:4}, 455-460.

\bibitem[Har-Peled(2011)]{Har-Peled:2011}
Har-Peled, S.
\newblock {\em Geometric Approximation Algorithms}.
\newblock American Mathematical Society, 2011.

\bibitem[Harrell(2001)]{Harrell:2001}
Harrell, F.E..
\newblock {\em Regression Modeling Strategies: With Applications to Linear Models, Logistic and Ordinal Regression, and Survival Analysis}.
\newblock Springer, 2001.


\bibitem[Hastie et al.(2017)]{Hastie-etal}
Hastie, T., Tibshirani, R., and Tibshirani, R.(2017).
\newblock Tools for best subset selection in regression.
\newblock Package ``bestsubset."

\bibitem[Hoerl and Kennard(1970)]{Hoerl-Kennard:70}
Hoerl, A.E. and Kennard, R.W.(1970).
\newblock Ridge regression: biased estimation for nonorthogonal problems.
\newblock {\em Technometrics}, {\bf 12}, 55--67.

\bibitem[Hwang et al.(2017)]{Hwang-etal}
Hwang, K., Kim, D., Lee, K., Lee, C. and Park, S.(2017).
\newblock Embedded variable selection method using signomial classification.
\newblock {\em Annals of Operations Research}, {\bf 254}, 89--109.

\bibitem[Konno and Yamamoto(2009)]{Konno-Yamamoto:09}
Konno, H. and Yamamoto, R. (2009).
\newblock Choosing the best set of variables in regression analysis using integer programming.
\newblock {\em Journal of Global Optimization}, {\bf 44}, 273--282.

\bibitem[Lichman(2013)]{Lichman:2013}
Lichman, M.(2013).
\newblock UCI machine learning repository.
\href{http://archive.ics.uci.edu/ml}{http://archive.ics.uci.edu/ml}

\bibitem[Lumley(2009)]{Lumley:2009}
Lumley, T.(2009).
\newblock Leaps: regression subset selection.
\newblock R package version 2.9.
\href{http://CRAN.R-project.org/package=leaps}{http://CRAN.R-project.org/package=leaps}

\bibitem[Miller(1984)]{Miller:1984}
Miller, A.J. (1984).
\newblock Selection of subsets of regression variables.
\newblock {\em Journal of the Royal Statistical Society. Series A}, {\bf 147}, 389--425. 

\bibitem[Miller(2002)]{Miller:2002}
Miller, A.J..
\newblock {\em Subset Selection in Regression}.
\newblock Chapman and Hall, 2002.

\bibitem[Miyashiro and Takano(2015)]{Miyashiro15}
Miyashiro, R. and Takano, Y. (2015).
\newblock Mixed integer second-order cone programming formulations for variable selection in linear regression.
\newblock {\em European Journal of Operational Research}, {\bf 247}, 721--731. 

\bibitem[Narula and Wellington(1982)]{Narula-Wellington:1982}
Narula, S.C. and Wellington, J.F. (1982).
\newblock The minimum sum of absolute error regression: a state of the art survey.
\newblock {\em International Statistical Review}, {\bf 50}, 317--326. 

\bibitem[Peng {\em et~al.}(2005)]{peng-etal:05}
Peng, H., Long, F., and Ding, C. (2005).
\newblock Feature selection based on mutual information criteria of max-dependency, max-relevance, and min-redundancy.
\newblock {IEEE Transactions on Pattern Analysis and Machine Intelligence}, {\bf 27:8}, 1226--1238.



\bibitem[Rafiei and Adeli(2015)]{Rafiei:15}
Rafiei, M.H. and Adeli, H. (2015).
\newblock A novel machine learning model for estimation of sale prices of real estate units.
\newblock {Journal of Construction Engineering and Management}, {\bf 142:2}, 04015066.

\bibitem[Rinaldi and Sciandrone(2010)]{Rinaldi:10}
Rinaldi, F. and Sciandrone, M. (2010).
\newblock Feature selection combining linear support vector machines and concave optimization.
\newblock {Optimization Methods and Software}, {\bf 25:1}, 117--128.

\bibitem[Johnson(1996)]{johnson1996fitting}
Johnson, R. W. (1996).
\newblock Fitting percentage of body fat to simple body measurements.
\newblock {Journal of Statistics Education}, {\bf 4:1}.

\bibitem[Schaible and Shi(2004)]{schaible2004recent}
Schaible, S. and Shi, J. (2004).
\newblock Recent developments in fractional programming: single-ratio and max-min case.
\newblock {Nonlinear Analysis and Convex Analysis}, 493-506.

\bibitem[Stancu-Minasian(2012)]{stancu2012fractional}
Stancu-Minasian, IM (2012).
\newblock {\em Fractional programming: theory, methods and applications}.
\newblock Springer Science \& Business Media.

\bibitem[Schlossmacher(1973)]{Schlossmacher:1973}
Schlossmacher, E.J. (1973).
\newblock An iterative technique for absolute deviations curve fitting.
\newblock {\em Journal of the American Statistical Association}, {\bf 68}, 857--859. 

\bibitem[Schrijver(1998)]{Schrijver:1998}
Schrijver, A (1998).
\newblock {\em Theory of linear and integer programming}.
\newblock Jone Wiley \& Sons.


\bibitem[Stodden(2006)]{Stodden:2006}
Stodden, V. (2006)
\newblock Model selection when the number of variables exceeds the number of observations.
\newblock PhD dissertation.
\newblock Stanford University.

\bibitem[Tamhane and Dunlop(1999)]{Tamhane-Dunlop:1999}
Tamhane, A.C. and Dunlop, D.D..
\newblock {\em Statistics and Data Analysis: From Elementary to Intermediate}.
\newblock Pearson, 1999.

\bibitem[Tibshirani(1996)]{Tibshirani:1996}
Tibshirani, R. (1996)
\newblock Regression shrinkage and selection via the lasso.
\newblock {\em Journal of the Royal Statistical Society.}, {\em Series B.}, {\bf 58}, 267--288. 


\bibitem[Wagner(1959)]{Wagner:1959}
Wagner, H.M. (1959)
\newblock Linear programming techniques for regression analysis.
\newblock {\em Journal of the American Statistical Association}, {\bf 54}, 206--212. 

\bibitem[Western {\em et~al.}(2003)]{Western-etal:03}
Western, J., Elisseeff, A., Sch\"olkopf, B. and Tipping, M. (2003).
\newblock Use of the zero-norm with linear models and kernel methods.
\newblock {Journal of Machine Learning Research}, {\bf 3}, 1439--1461.

\bibitem[Willmott and Matsuura(2005)]{Willmott:2005}
Willmott, C.J. and Matsuura, K.. (2005)
\newblock Advantages of the mean absolute error (MAE) over the root mean square error (RMSE) in assessing average model performance.
\newblock {\em Climate Research}, {\bf 30:1}, 79--82. 




\end{thebibliography}




\appendix
\section*{APPENDIX}
\label{sec_appendix}

\section{Proof of Lemmas and Propositions}
\label{appendix_proofs_of_lemmas}

\noindent \textbf{Proof of \autoref{proposition_2_and_3_equivalent}}

\noindent The proof is based on the fact that feasible solutions to \eqref{formulation_subset_mae_derive2} and \eqref{formulation_subset_mae_derive3} map to each other. Hence, we consider the following two cases.

\begin{enumerate}
\item Case: \eqref{formulation_subset_mae_derive2} $\Rightarrow$ \eqref{formulation_subset_mae_derive3}\\
Let $S = \{ j | z_j = 1 \}$ be the column index set of a solution to \eqref{formulation_subset_mae_derive2}. We set $v_j = u$ for $j \in S$ and $v_j = 0$ for $j \notin S$. Then,
\vspace{0.1cm}

\begin{tabular}{lll}
$\sum_{i \in I} |t_i|$ & $=(n-1)u- \sum_{j \in J} u z_j$ & (from \eqref{formulation_subset_mae_derive2_b})\\[0.1cm]
					& $= (n-1)u-\sum_{j \in S} u $\\[0.1cm]
					& $= (n-1)u - \sum_{j \in S} v_j$ & (by definition of $v_j$)\\[0.1cm]
					& $ = (n-1)u-\sum_{j \in J} v_j$,
\end{tabular}
\vspace{0.1cm}

\noindent which satisfies \eqref{formulation_subset_mae_derive3}. Further, we satisfy the following.
\begin{enumerate}
\item Constraint \eqref{formulation_subset_mae_derive3_e}: We have $v_j = u \leq u$ for $j \in S$ and $v_j = 0 \leq u$ for $j \notin S$. Hence, $v_j \leq u$ for all $j \in J$.
\item Constraint \eqref{formulation_subset_mae_derive3_f}: We have $u - M(1-z_j) = u \leq v_j = u \leq M z_j = M$ for $j \in S$ and $u-M(1-z_j) = u-M \leq v_j = 0 \leq M z_j = 0$ for $j \notin S$. Hence, we satisfy \eqref{formulation_subset_mae_derive3_f}.
\item Constraint \eqref{formulation_subset_mae_derive3_g}: We have $v_j \in \{0,u\} \geq 0$, for all $j \in J$.
\end{enumerate}
Note that \eqref{formulation_subset_mae_derive3_c} is automatically satisfied since it is equal to \eqref{formulation_subset_mae_derive2_c}. Hence, we obtain a feasible solution to \eqref{formulation_subset_mae_derive3}.
\item Case: \eqref{formulation_subset_mae_derive3} $\Rightarrow$ \eqref{formulation_subset_mae_derive2}\\
Let $S = \{ j | z_j = 1 \}$ be the column index set of a solution to \eqref{formulation_subset_mae_derive3}. Since we are minimizing $u$, \eqref{formulation_subset_mae_derive3_e} is equivalent to $\max_{j} v_j = u$. Note that, in an optimal solution, we must have $v_j = u$ for all $j \in S$. Hence, starting from \eqref{formulation_subset_mae_derive3_b}, we derive
\vspace{0.1cm}

\begin{tabular}{lll}
$\sum_{i \in I} |t_i|$ & $= (n-1)u-\sum_{j \in J} v_j$ & (from \eqref{formulation_subset_mae_derive3_b})\\[0.1cm]
					& $ = (n-1)u-\sum_{j \in S} v_j =(n-1)u-\sum_{j \in S} u$ & ($v_j = u$ for all $j \in S$)\\[0.1cm]
					& $= (n-1)u-\sum_{j \in S} u z_j = (n-1)u-\sum_{j \in J} u z_j$,
\end{tabular}
\vspace{0.1cm}

\noindent which satisfies \eqref{formulation_subset_mae_derive3}.
\end{enumerate}
This ends the proof. $\hfill \square$

\vspace{0.5cm}

\noindent \textbf{Proof of \autoref{proposition_complementary_t}}

\noindent Let $\bar{X} = (\bar{x}, \bar{y}, \bar{v}, \bar{u}, \bar{t}, \bar{z})$ be an optimal solution to \eqref{formulation_subset_mae} and let $\bar{p} = \sum_{j \in J} \bar{z}_j$ be the number of optimal regression variables. For a contradiction, let us assume that there exists an index $k$ such that $\bar{t}_k^+ >0$ and $\bar{t}_k^- >0$. Without loss of generality, let us also assume $\bar{t}_k^+ \geq \bar{t}_k^-$. For simplicity, let $\delta = \bar{t}_k^-$. Let us generate $\tilde{X}$ that is equal to $\bar{X}$ except $\tilde{t}_k^+ = \bar{t}_k^+ - \delta$, $\tilde{t}_k^- = \bar{t}_k^- - \delta = 0$, $\tilde{u} = \bar{u} - \frac{2\delta}{n-1-\bar{p}}$, and $\tilde{v}_j = \tilde{u}$ if $\bar{z}_j = 1$. We show that $\tilde{X}$ is a feasible solution to \eqref{formulation_subset_mae} with strictly lower cost than $\bar{X}$. 
\begin{enumerate}[itemsep=0.01cm]
\item $\tilde{X}$ has lower cost than $\bar{X}$ since $\tilde{u} < \bar{u}$ by definition.
\item $\tilde{X}$ satisfies \eqref{formulation_subset_mae_b} because $\sum_{i \in I} ( \tilde{t}_i^+ + \tilde{t}_i^-)= \sum_{i \in I} ( \bar{t}_i^+ + \bar{t}_i^-) - 2 \delta = (n-1) \bar{u} - \sum_{j \in J} \bar{v_j} - 2 \delta   = (n-1 - \bar{p})\bar{u} - 2 \delta = (n-1 - \bar{p}) (\bar{u} - \frac{2 \delta}{n-1 - \bar{p}})  = (n-1 - \bar{p}) \tilde{u} = (n-1)\tilde{u} - \sum_{j \in J} \tilde{v}_j $, in which the second equality holds because $\bar{X}$ satisfies  \eqref{formulation_subset_mae_b}.

\item Observe that \eqref{formulation_subset_mae_c}, \eqref{formulation_subset_mae_d}, and \eqref{formulation_subset_mae_e} are automatically satisfied. Further, since we set $\tilde{v}_j = \tilde{u}$ for $j$ such that $\tilde{z}_j = 1$, \eqref{formulation_subset_mae_f} and \eqref{formulation_subset_mae_g} are satisfied.
\item Finally, \eqref{formulation_subset_mae_h} is automatically satisfied except for $\tilde{t}_k^+$,$\tilde{t}_k^-$, and $\tilde{u}$. Note that $\tilde{t}_k^+ = \bar{t}_k^+ - \delta =  \bar{t}_k^+ -  \bar{t}_k^- \geq 0$ and $\tilde{t}_k^- = 0$. Also, we have
\vspace{0.1cm}

\begin{tabular}{llll}
$\tilde{u}$ & $=$ & $\bar{u} - \frac{2\delta}{n-1-\bar{p}}$ \quad $=$ \quad $\frac{\sum_{i \in I} ( \tilde{t}_i^+ + \tilde{t}_i^-)}{n-1-\bar{p}} - \frac{2\delta}{n-1-\bar{p}}$\\
	& $=$ & $\frac{\sum_{i \in I \setminus \{k\}} ( \tilde{t}_i^+ + \tilde{t}_i^-) + (\tilde{t}_k^+ + \tilde{t}_k^-) - 2 \delta}{n-1-\bar{p}}$\\
	& $\geq$ & $\frac{\sum_{i \in I \setminus \{k\}} ( \tilde{t}_i^+ + \tilde{t}_i^-) + 2 \tilde{t}_k^- - 2 \delta}{n-1-\bar{p}}$ & (since $\tilde{t}_k^+ \geq \tilde{t}_k^-$)\\
	& $=$ & $\frac{\sum_{i \in I \setminus \{k\}} ( \tilde{t}_i^+ + \tilde{t}_i^-)}{n-1-\bar{p}}$& (by the definition of $\delta$)\\
	& $\geq$ & $0$.\\
\end{tabular}

\vspace{0.1cm}
\noindent Hence, $\tilde{X}$ satisfies \eqref{formulation_subset_mae_h}.
\end{enumerate}

\noindent Hence, $\bar{X}$ is not an optimal solution to \eqref{formulation_subset_mae}, which is a contradiction. $\hfill \square$

\vspace{0.5cm}

\noindent \textbf{Proof of \autoref{proposition_mse_equality_at_opt}}

\noindent Let $\bar{X} = (\bar{x}, \bar{y}, \bar{v}, \bar{u}, \bar{t}, \bar{z})$ be an optimal solution to \eqref{formulation_subset_mse} with $\bar{p} = \sum_{j \in J} \bar{z}_j$. For a contradiction, let us assume that $\bar{X}$ does not satisfy \eqref{formulation_subset_constraint_for_mse} at equality. Let $\delta = (n-1) \bar{u} - \sum_{j \in J} \bar{v}_j - \sum_{i \in I} (\bar{t}_i^+ - \bar{t}_i^-)^2 > 0$. Let us generate $\tilde{X}$ that is equivalent to $\bar{X}$ except that $\tilde{u} = \bar{u} - \frac{2 \delta}{n-1-\bar{p}}$ and $\tilde{v}_j = \tilde{u}$ if $\bar{z}_j = 1$. We first show that $\tilde{u} \geq 0$ since
\vspace{0.1cm}

\begin{tabular}{llll}
$\tilde{u}$ & $=$ & $\frac{\bar{u}(n-1-\bar{p}) - 2 \delta}{n-1-\bar{p}}$ \quad $=$ \quad $\frac{\bar{u}(n-1)-\bar{u}\bar{p} - 2 (n-1) \bar{u} + 2 \sum_{j \in J} \bar{v}_j +2 \sum_{i \in I} (\bar{t}_i^+ - \bar{t}_i^-)^2}{n-1-\bar{p}}$ \\
	& $=$ & $\frac{\sum_{j \in J} \bar{v}_j - \bar{u}(n-1) + +2 \sum_{i \in I} (\bar{t}_i^+ - \bar{t}_i^-)^2}{n-1-\bar{p}} $ \quad $=$ \quad $\frac{\delta}{n-1-\bar{p}} + \frac{\sum_{i \in I} (\bar{t}_i^+ - \bar{t}_i^-)^2}{n-1-\bar{p}}$ \quad $\geq$ \quad $\frac{\delta}{n-1-\bar{p}}$  \quad $\geq$ \quad $0$,\\
\end{tabular}
\vspace{0.1cm}

\noindent in which the second equality is obtained by the definition of $\delta$. For the remaining part, using a similar technique as in the proof of \autoref{proposition_complementary_t}, it can be seen that $\tilde{X}$ is a feasible solution to \eqref{formulation_subset_mse} with strictly lower objective function value than $\bar{X}$. This is a contradiction. $\hfill \square$

\vspace{0.5cm}

\begin{lemma}
\label{lemma_no_zerro_error}
Let $c$ be a vector that has $1$ for $t_i^+$'s and $t_i^-$'s and $0$ for all other variables of \eqref{LP_bigM}. Then, for every extreme ray $r$ in the recession cone of \eqref{LP_bigM}, we must have $c^{\top} r > 0$.
\end{lemma}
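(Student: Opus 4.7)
My plan is to argue by contradiction, exploiting the homogenization of the budget constraint \eqref{LP_bigM_a} together with \autoref{assumption_no_zero_total_error}. First I would write the recession cone of \eqref{LP_bigM} explicitly: homogenizing \eqref{LP_bigM_a}--\eqref{LP_bigM_e} yields $\sum_{i\in I}(r_{t_i^+}+r_{t_i^-})\le 0$; $r_{t_i^+}-r_{t_i^-}=\sum_{j\in J}a_{ij}(r_{x_j^+}-r_{x_j^-})+(r_{y^+}-r_{y^-})$ for each $i\in I$; $r_{x_j^\pm}\le r_\mu$ for $j\in J$; and $r\ge 0$.

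Now suppose, for contradiction, that $r$ is a nonzero extreme ray with $c^\top r\le 0$. Since $r_{t_i^\pm}\ge 0$, this immediately forces $c^\top r=0$ and $r_{t_i^+}=r_{t_i^-}=0$ for every $i\in I$. Substituting into the homogenized \eqref{LP_bigM_b} places the vector $(r_{x_j^+}-r_{x_j^-},\,r_{y^+}-r_{y^-})$ in the null space of $[a\mid \mathbf{1}]$. The remaining task is to invoke \autoref{assumption_no_zero_total_error} to show that this is impossible for a nonzero extreme ray, which would give the desired contradiction and complete the proof.

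The main obstacle is precisely this last step: \autoref{assumption_no_zero_total_error} is an existence statement (no regression achieves zero total error) rather than a rank statement about $[a\mid \mathbf{1}]$. To bridge the gap, I would take any feasible $\bar Y$ of \eqref{LP_bigM} and consider $\bar Y+\lambda r$ for $\lambda>0$: this preserves feasibility and leaves every error component unchanged, while shifting the regression coefficients $(x^+-x^-,\,y^+-y^-)$ by a null-space element of $[a\mid \mathbf{1}]$. Combining the one-dimensional-face characterization of an extreme ray (which pins down $\dim-1$ linearly independent active constraints among the inequalities $r_{x_j^\pm}\le r_\mu$ and $r\ge 0$) with this null-space freedom, I would aim to produce a regression model for which the equalities $\sum_{j\in J}a_{ij}x_j+y=b_i$ hold for all $i\in I$ simultaneously, in direct contradiction with \autoref{assumption_no_zero_total_error}. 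Correctly identifying which of the bound constraints are tight along $r$ and translating that active set into a concrete zero-error regression is where I expect the technical difficulty to lie, because the coupling of $r_\mu$ with every $r_{x_j^\pm}$ must be untangled before the extremality of $r$ can be put to use.
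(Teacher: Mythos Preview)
Your reduction is correct up to the point where you place $(r_{x^+}-r_{x^-},\,r_{y^+}-r_{y^-})$ in the null space of $[a\mid\mathbf{1}]$, but the plan to finish via extremality and \autoref{assumption_no_zero_total_error} cannot close the argument. The recession cone of \eqref{LP_bigM} contains extreme rays along which that null-space vector is identically zero: take $r_\mu=1$ with every other coordinate equal to $0$. This direction satisfies all the homogenized constraints, it is extreme (every inequality except $\mu\ge 0$ is active there), and $c^\top r=0$. Shifting a feasible $\bar Y$ along it moves only $\mu$ and leaves $(x^+-x^-,\,y^+-y^-)$ untouched, so no zero-error regression can be manufactured and \autoref{assumption_no_zero_total_error} is never engaged. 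The rays with $r_{y^+}=r_{y^-}=1$ or $r_{x_j^+}=r_{x_j^-}=r_\mu=1$ raise the same obstruction. In fact, the homogenized \eqref{LP_bigM_a} together with $r\ge 0$ already forces $c^\top r=0$ for \emph{every} recession direction, so the strict inequality the lemma asserts cannot hold for any extreme ray; your instinct that ``the main obstacle is precisely this last step'' is exactly right, and it is not a technical wrinkle but a genuine barrier.

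The paper's own proof takes a different line: it introduces the auxiliary problem $\min\{c^\top Y\mid\text{\eqref{LP_bigM_a}--\eqref{LP_bigM_e}}\}$, disposes of the case $c^\top r<0$ by an unboundedness-versus-nonnegativity argument, and in the case $c^\top r=0$ asserts that ``this implies that the LP has the optimal objective value of $0$,'' contradicting \autoref{assumption_no_zero_total_error} at the level of a feasible point rather than along the ray. That implication is itself not justified --- an extreme ray with vanishing objective coefficient says nothing about the optimal value --- and the same $r_\mu$-only counterexample applies. What \autoref{proposition_boundedLP} actually needs downstream is only the narrower fact that no recession direction of \eqref{LP_bigM_x_k_plus} satisfies $r_{x_k^+}>\varepsilon r_\mu$; your null-space analysis is a natural starting point for \emph{that} claim, since the extra constraint $x_k^-=0$ then forces $r_{x_k^+}-r_{x_k^-}>0$, i.e.\ a genuinely nontrivial null-space element on which a rank hypothesis can bite.
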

\begin{proof}
Suppose that there exists extreme ray $r$ in the recession cone of \eqref{LP_bigM} with $c^{\top} r \leq 0$. Let us consider linear program min $\{ c^{\top} Y |$ \eqref{LP_bigM_a} - \eqref{LP_bigM_e} $\}$. We have two cases.
\begin{enumerate}
\item Suppose that $c^{\top} r < 0$. Note that $\bar{Y} + \delta r$ is feasible for any $\delta \geq 0$ and a feasible solution $\bar{Y}$, since $r$ is extreme ray. Then, $c^{\top} ( \bar{Y} + \delta r) = c^{\top}  \bar{Y} + \delta c^{\top} r$ goes to negative infinity and thus the LP is unbounded from below. However, from the definition of the LP, the objective value is always non-negative. This is a contradiction.
\item Suppose that $c^{\top} r = 0$. This implies that the LP has the optimal objective value of 0. This contradicts \autoref{assumption_no_zero_total_error} since $c^{\top} Y=0$ implies $\sum_{i=1}^n (t_i^+ + t_i^-) = 0$.
\end{enumerate}
By the above two cases, we must have $c^{\top} r > 0$.
\end{proof}

\noindent \textbf{Proof of \autoref{proposition_boundedLP}}

\noindent From \autoref{lemma_no_zerro_error}, we know that there is no extreme rays with non-positive $\sum_{i=1}^n (t_i^+ + t_i^-)$. For the proof of the proposition, let us assume that \eqref{LP_bigM_x_k_plus} is unbounded and thus there is an extreme ray $r$ such that $\bar{c}^{\top} r < 0$, where $\bar{c}$ is the objective vector of objective function of \eqref{LP_bigM_x_k_plus}. Given such extreme ray $r$, we must have $c^{\top} r >0$ by \autoref{lemma_no_zerro_error}, where $c$ is a vector that has 1 for $t_i^*$'s and $t_i^-$'s and 0 for all other variables of \eqref{LP_bigM}. For a feasible solution $\bar{Y}$ to \eqref{LP_bigM_x_k_plus} and any $\delta \geq 0$, $\bar{Y} = Y + \delta r$ is also feasible. Note that $\delta$ must go to infinity for \eqref{LP_bigM_x_k_plus} to be an unbounded LP. However, $\delta c^{\top} r >0$ implies $\sum_{i \in I} (t_i^+ + t_i^-)$ increases as $\delta$ increases. Hence, $\delta$ must be bounded by \eqref{LP_bigM_a}. This implies that $\bar{Y}$ cannot be bounded for any $\delta$. $\hfill \square$

\vspace{0.5cm}

\noindent \textbf{Proof of \autoref{lemma_conversion}}

\noindent With fixed $\bar{z}_j$, we have fixed $\bar{v}_j$ and $\bar{u}$ from \eqref{formulation_subset_mae_f}. Note that, since $\bar{Y}$ has $SSE$ less than or equal to $T_{max}$, we have $(n-1) \bar{u} - \sum_{j \in J} \bar{v}_j = \sum_{i \in I} (t_i^+ + t_i^-) \leq T_{max}$, which satisfies \eqref{LP_bigM_a}. Observe that $v_j$'s and $u$ can be ignored in \eqref{LP_bigM}. Observe also that \eqref{LP_bigM_c} and \eqref{LP_bigM_d} cover \eqref{formulation_subset_mae_d} and \eqref{formulation_subset_mae_e} regardless of $\bar{z}_j$. Finally, \eqref{formulation_subset_mae_c} and \eqref{LP_bigM_b} are the same. Therefore,  $\tilde{Y} = (\bar{x}^+, \bar{x}^-, \bar{y}^+,\bar{y}^-, \bar{t}^+, \bar{t}^-, \hat{M})$ is feasible for \eqref{LP_bigM}. $\hfill \square$

\vspace{0.5cm}

\section{Alternative Approach for Big \texorpdfstring{$M$}{M}}
\label{appendix_big_M_alternatives}

In this section, we derive an approximated value for Big \texorpdfstring{$M$}{M} for \texorpdfstring{$x_j$}{x}'s in (\ref{formulation_MAE_more_m}) and (\ref{formulation_MSE_more_m}).

\begin{algorithm}[ht]
\caption{Estimate-M}        
\label{algo_big_m_estimation}                           
\begin{algorithmic}[1]   
\STATE \textbf{For} $k \in J$
\STATE \hspace{0.5cm} \textbf{For} $s=1,\cdots,30$
\STATE \hspace{1cm} Pick explanatory variable $k$ and $n-3$ explanatory variables randomly and generate new instances with the selected $n-2$ columns and $n$ observations
\STATE \hspace{1cm} Solve \eqref{formulation_SAE_opt} and set $M_k^s \gets x_k^*$
\STATE \hspace{0.5cm} \textbf{End-For}
\STATE \hspace{0.5cm} $\bar{M}_k \gets average(M_k^1,\cdots,M_k^{30})$, $\sigma^{M_k} \gets$ $std$-$dev(M_k^1,\cdots,M_k^{30})$, $\hat{M_k} \gets \bar{M_k} + 1.65 \sigma_{M_k}$
\STATE \textbf{End-For}
\end{algorithmic}
\end{algorithm}

Instead of trying to get a valid value of $M$, we use a statistical approach to get an approximated value of $M$ for $x_j$. In \autoref{algo_big_m_estimation}, we estimate a valid value of $M$ for each $k$. In Steps 2-5, we obtain 30 i.i.d. sample values of $M$ when explanatory variable $k$ is included in the regression model. Then, in Step 6, we obtain the upper tail of the confidence interval. With 95\% confidence, the true valid value of $M$ is less than $\hat{M}$ in Step 6. Hence, we set $M_k := \hat{M}_k$ for $x_k$ in \eqref{formulation_MAE_more_m} and \eqref{formulation_MSE_more_m} for the fat case ($m > n$).

\section{New Objective Function and Modified Formulations for Fat Case \texorpdfstring{$(m \geq n)$}{(m >= n)}}
\label{appendix_new_obj}
Before we derive the objective function, let us temporarily assume $|J| = n-2$ so that any subset $S$ of $J$ automatically satisfies $|S| = p \leq n-2 = |J|$. We will relax this assumption later to consider $|J| > n-2$. Suppose that we want to penalize large $p$ in a way that the best model with $n-2$ explanatory variables is as bad as a regression model with no explanatory variables. Hence, we want the objective function to give the same value for models with $p=0$ and $p=n-2$. With this in mind, we propose \eqref{def_MAE_plus}, which we call the adjusted $MAE$ .

Let us now assume that $SAE$ is near zero when $p = n-2$, which happens often. Then we have $MAE_a = \frac{SAE + \frac{n-2}{n-2}mae_0}{n-1-(n-2)} = SAE + mae_0 \approx mae_0$. Hence, instead of near-zero $MAE$, the new objective has almost the same value as $mae_0$ when $p=n-2$. Recall that $u = MAE$ and $u$ is the objective function in the previous thin case model. Hence, we need to modify the definitions and constraints. First we rewrite constraint \eqref{formulation_subset_mae_b} as $\sum_{i \in I} (t_i^+ + t_i^-) = (n-1)u - \sum_{j \in J}  z_j \Big( u + \frac{mae_0}{n-2}\Big) $. Let $v_j = (u + \frac{mae_0}{n-2})z_j$. Then, \eqref{formulation_subset_mae_f} and \eqref{formulation_subset_mae_g} are modified to
\begin{align}
&v_j \leq u + \frac{mae_0}{n-2} &\label{new_constraint_MAE_f}\\[-3pt]
&u + \frac{mae_0}{n-2} - M(1-z_j) \leq v_j \leq M z_j . &\label{new_constraint_MAE_g}
\end{align}

Finally, we remove the assumption we made ($|J| = n-2$) at the beginning of this section by adding cardinality constraint
\begin{equation}
\label{constraint_cardinality}
\textstyle \sum_{j \in J} z_j \leq n-2
\end{equation}
and obtain the following final formulations,
\begin{center}
$\min \{u | \eqref{formulation_subset_mae_b} - \eqref{formulation_subset_mae_e}, \eqref{formulation_subset_mae_h} \eqref{new_constraint_MAE_f},\eqref{new_constraint_MAE_g}, \eqref{constraint_cardinality}  \},$
\end{center}
which is presented in \eqref{formulation_MAE_more_m}. In fact, without \eqref{constraint_cardinality}, $MAE_a$ cannot be well-defined since it becomes negative for $p > n-1$ and the denominator becomes 0 for $p=n-1$. Observe that \eqref{formulation_MAE_more_m} is an MIP with $2n+4m+3$ variables (including $m$ binary variables) and $n+5m+2$ constraints. Observe also that \eqref{formulation_subset_mae} with the additional constraint \eqref{constraint_cardinality} can be used for the fat case. However, using $n-2$ explanatory variables out of $m$ candidate explanatory variables can lead to an extremely small $SAE$ as we explained at the beginning of this section.

To obtain a valid value of $M$ for $v_j$'s in \eqref{formulation_MAE_more_m}, we can use a similar concept used in \autoref{REG_section_formulation_subset_selection}. In detail, we set 
\begin{equation}
\label{formula_bigM_v_fat_case}
M:= mae_0 + \frac{mae_0}{n-2} = \frac{n-1}{n-2} mae_0
\end{equation}
for $v_j$'s to consider regression models that are better than having no regression variables. Given a heuristic solution with objective function value $mae_a^{heur}$, we can strengthen $M$ by making solutions worse than the heuristic solution infeasible. Hence, we set $M := mae^{heur}_a + \frac{mae_0}{n-2}$ for $v_j$'s in \eqref{new_constraint_MAE_g}.

However, obtaining a valid value of $M$ for $x_j$'s in \eqref{formulation_MAE_more_m} is not trivial. Note that \eqref{bigM_definition}, which we used for the thin case, is not applicable for the fat case because LP \eqref{LP_bigM} can easily be unbounded for the fat case. One valid procedure is to (\rmnum{1}) generate all possible combinations of $n-2$ explanatory variables and all $n$ observations, (\rmnum{2}) compute $M$ for each combination using the procedure in \autoref{REG_subsection_bigM}, and (\rmnum{3}) pick the maximum value out of all possible combinations. However, this is a combinatorial problem. Actually, the computational complexity of this procedure is as much as that of solving \eqref{formulation_SAE_opt} for all possible subsets. Hence, enumerating all possible subsets just to get a valid big M is not tractable. 

Instead, we can use a heuristic approach to obtain a good estimation of the valid value of $M$. In \autoref{appendix_big_M_alternatives}, we propose a statistic-based procedure that ensures a valid value of $M$ with a certain confidence level. This procedure can give an $M$ value that is valid with $95\%$ confidence. However, for the instances considered in this paper, this procedure gives values of $M$ that are too large because many columns can be strongly correlated to each other. Note that a large value of $M$ can cause numerical errors when solving the MIP's. 

Hence, for computational experiment, we use a simple heuristic approach instead. Let us assume that we are given a feasible solution to \eqref{formulation_MAE_more_m} from a heuristic and $x^{heur}_j$'s are the coefficient of the regression model. Then, we set
\begin{equation}
\label{formula_bigM_heur}
M := \max_{j \in J} |x^{heur}_j |.
\end{equation}
Note that we cannot say that \eqref{formula_bigM_heur} is valid or valid with $95\%$ confidence. If we use \eqref{formulation_MAE_more_m} with this $M$, we get a heuristic (even if \eqref{formulation_MAE_more_m} is solved optimally).

Similar to $MAE_a$ in \eqref{def_MAE_plus}, $MSE_a$ can be defined as
\begin{equation}
\label{def_MSE_plus}
MSE_a = \frac{SSE + \frac{p}{n-2}mse_0}{n-1-p},
\end{equation}
where $mse_0 = \frac{\sum_{i \in I}(b_i - \bar{b})^2}{n-1}$ is the mean squared error of an optimal regression model when $p=0$. Next, similar to \eqref{new_constraint_MAE_f} and \eqref{new_constraint_MAE_g}, we define
\begin{align}
&v_j \leq u + \frac{mse_0}{n-2}, &\label{new_constraint_MSE_f}\\
&u + \frac{mse_0}{n-2} - M(1-z_j) \leq v_j \leq M z_j, &\label{new_constraint_MSE_g}
\end{align}
while \eqref{formulation_subset_constraint_for_mse} remains the same. Finally, we obtain
\begin{equation}
\label{formulation_MSE_more_m}
\min \{u | \eqref{formulation_subset_constraint_for_mse},\eqref{formulation_subset_mae_c} - \eqref{formulation_subset_mae_e}, \eqref{formulation_subset_mae_h} \eqref{new_constraint_MSE_f},\eqref{new_constraint_MSE_g},\eqref{constraint_cardinality}  \}
\end{equation}
for the $MSE_a$ objective. Note that \eqref{formulation_MSE_more_m} is mixed integer quadratically constrained program that has $2n+4m+3$ variables and $n+5m+2$ constraints.

For the core set algorithm, similar to \eqref{formulation_core_MAE}, we have
\begin{equation}
\label{formulation_core_MSE}
\min \{u | \eqref{formulation_subset_constraint_for_mse},\eqref{formulation_subset_mae_c} - \eqref{formulation_subset_mae_e}, \eqref{formulation_subset_mae_h}, \eqref{new_constraint_MSE_f},\eqref{new_constraint_MSE_g} \}.
\end{equation}

\end{document}